\definecolor{Gray}{gray}{0.9} 
\global\long\def\CE#1#2{\EE\left[\left.#1\right|#2\right]}%
\global\long\def\CP#1#2{\PP\left(\left.#1\right|#2\right)}%
\global\long\def\norm#1{\left\Vert #1\right\Vert }%
\global\long\def\Pcal{\mathcal{P}}%
\global\long\def\mA{\mathcal{A}}%
\global\long\def\Reward#1#2{Y_{#1,#2}}%
\global\long\def\Error#1#2{\eta_{#1,#2}}%
\global\long\def\Estimator#1{\widehat{\theta}_{#1}}%
\global\long\def\Diff#1#2{\Delta_{#1,#2}}%
\global\long\def\Action#1{a_{#1}}%
\global\long\def\Maxeigen#1{\lambda_{\max}\!\left(#1\right)}%
\global\long\def\Trace#1{\text{Tr}\left(#1\right)}%
\global\long\def\Regret#1{regret\ensuremath{(#1)}}%
\providecommand{\lemmaname}{Lemma}
\providecommand{\propositionname}{Proposition}
\providecommand{\theoremname}{Theorem}
\begin{document}

\title{Adaptive Data Augmentation for Thompson Sampling}
\author{\name Wonyoung Kim 
\email wyk7@cau.ac.kr \\  
\addr Department of Artificial Intelligence\\         
Chung-Ang University \\         
Seoul, Republic of Korea
}
\editor{My editor}
\maketitle

\begin{abstract}%
In linear contextual bandits, the objective is to select actions that maximize cumulative rewards, modeled as a linear function with unknown parameters. 
Although Thompson Sampling performs well empirically, it does not achieve optimal regret bounds. 
This paper proposes a nearly minimax optimal Thompson Sampling for linear contextual bandits by developing a novel estimator with the adaptive augmentation and coupling of the hypothetical samples that are designed for efficient parameter learning. 
The proposed estimator accurately predicts rewards for all arms without relying on assumptions for the context distribution.
Empirical results show robust performance and significant improvement over existing methods.
\end{abstract}

\begin{keywords}   
Thompson Sampling, minimax optimal regret bound, hypothetical bandit problem, adaptive data augmentation, coupling.
\end{keywords}

\section{Introduction}

Linear contextual bandits (LinCB) provide a simple yet powerful framework for sequential decision-making. 
At each decision epoch, an agent selects an action from a set of context vectors to maximize cumulative rewards, assumed to be linear functions of the chosen contexts. 
A special case is the multi-armed bandit (MAB) problem, in which the contexts are standard Euclidean basis vectors. Compared with more complex models—such as generalized linear models or deep neural networks, which demand substantial updates at each step—LinCB and MAB offer high computational efficiency. 
LinCB methods have been widely applied in fields such as e-commerce personalization \citep{hsu2020recommending}, revenue management \citep{ferreira2018online}, clinical trials \citep{murphy2005experimental}, political-science experiments \citep{offer2021adaptive}, and A/B testing in marketing \citep{satyal2018ab}, as comprehensively surveyed by \citet{bouneffouf2020survey}.

Two primary algorithmic paradigms dominate the LinCB literature: the upper-confidence-bound algorithm (\texttt{LinUCB}) and Thompson Sampling (\texttt{LinTS}). 
\texttt{LinUCB} selects the arm that maximizes an upper confidence bound on its reward, whereas \texttt{LinTS} samples a parameter from its posterior (or estimated) distribution and selects the arm with the highest sampled reward. Empirical studies \citep{chapelle2011} consistently demonstrate the superiority of \texttt{LinTS} over \texttt{LinUCB} across various scenarios. Nevertheless, a significant gap persists between the best-known frequentist regret bound for \texttt{LinTS} \citep{abeille2017linear} and the minimax lower bound \citep{lattimore2020bandit}. Closing this gap is challenging due to the selective reward observation inherent in \texttt{LinTS}, complicating variance control for optimal-arm reward estimation.

Recent LinCB research has leveraged advanced statistical methodologies to enhance algorithmic performance and theoretical guarantees. 
Techniques from high-dimensional parameter estimation \citep{buhlmann2011statistics}, optimal experimental design \citep{smith1918standard,guttorp2009karl}, and Bayesian optimization \citep{mockus2005bayesian} have been adapted to bandit settings. 
More recently, missing-data techniques have been employed to bridge the regret-bound gap by estimating missing rewards as though rewards from all arms were observed at every round \citep{kim2019doubly,kim2021doubly}. 
Unlike conventional estimators, which reduce errors solely for selected arms, these methods seek convergence across all arms. 
However, their reliance on inverse-probability weighting introduces variance that scales with the number of arms. To mitigate this, existing methods typically impose restrictive assumptions—such as independent-and-identically-distributed (IID) contexts or particular diversity conditions—limiting broader applicability. 
Addressing this limitation necessitates resolving open issues in both missing-data and LinCB literatures.

This paper addresses this critical gap by introducing a novel estimation approach capable of learning rewards for all arms without relying on IID or diversity conditions. 
In the proposed framework, a hypothetical bandit problem tailored for efficient parameter estimation is constructed. 
This hypothetical setup employs a set of orthogonal basis vectors, preserving the covariance structure of the original contexts while significantly reducing the effective number of arms. By coupling the hypothetical and original problems, the resulting estimator achieves a novel self-normalized bound based on a Gram matrix encompassing contexts from all arms, including unselected ones. 
The proposed algorithm, equipped with this new estimator, attains the minimax optimal regret bound up to logarithmic factors without restrictive assumptions on context distributions.

The remainder of the paper is organized as follows. Section~\ref{sec:related_works} reviews relevant literature on LinCB, highlighting key contributions of the proposed approach. 
Section~\ref{sec:LinCB} presents the formal problem formulation. Section~\ref{sec:proposed_method} details the proposed estimator and algorithm along with their theoretical justifications. 
Section~\ref{sec:regret_analysis} provides a rigorous regret analysis, establishing the minimax optimality of the proposed method. 
Finally, Section~\ref{sec:experiment} empirically validates the effectiveness of the proposed algorithm across various benchmark scenarios.

\section{Related Literature}
\label{sec:related_works}

\begin{table}[t]
\centering
\caption{Comparison of regret bounds and assumptions on  \texttt{LinTS}}
\label{tab:ts-regret-comparison}
\begin{tabular}{@{}lcc@{}}
\toprule
\textbf{Reference} & \textbf{Regret bound} & \textbf{Key assumptions}\\
\midrule
\citet{agrawal2013thompson}              & $\tilde{O}\bigl(d^{3/2}\sqrt{T}\bigr)$ & Standard \\[2pt]
\citet{abeille2017linear}              & $\tilde{O}\bigl(d^{3/2}\sqrt{T}\bigr)$ & Standard \\[2pt]
\citet{dimakopoulou2019balanced}       & $\tilde{O}\bigl(d^{3/2}\sqrt{T}\bigr)$ & Standard \\[2pt]
\citet{kim2021doubly}                  & $\tilde{O}\bigl(d\sqrt{T}\bigr)$ & IID contexts from special distributions \\ [2pt]
\citet{huix2023tight}                  & $\tilde{O}\bigl(d\sqrt{T}\bigr)$ & Gaussian prior on parameter \\[2pt]
\textbf{This work}                     & $\tilde{O}\bigl(d\sqrt{T}\bigr)$ & Standard \\
\bottomrule
\end{tabular}
\end{table}

The linear contextual bandit (LinCB) problem, introduced by \citet{abe1999associative}, has become foundational in sequential decision-making tasks. Two predominant algorithmic frameworks for LinCB are the upper-confidence-bound algorithm (\texttt{LinUCB}) and Thompson Sampling (\texttt{LinTS}). \texttt{LinUCB}, which selects the arm maximizing the upper confidence bound of its reward, has been extensively studied \citep{auer2002using, dani2008stochastic, rusmevichientong2010linearly, chu2011contextual, abbasi2011improved}. In contrast, \texttt{LinTS}, incorporating randomization by sampling from an estimated or posterior distribution of rewards, has attracted significant attention \citep{agrawal2013thompson, abeille2017linear}. Empirical studies, such as \citet{chapelle2011}, demonstrate that \texttt{LinTS} frequently outperforms \texttt{LinUCB} in practical scenarios, including online advertising and recommendation systems.

Theoretically, given contexts of dimension $d$ and time horizon $T$, \texttt{LinUCB} achieves a regret bound of $\tilde{O}(d\sqrt{T})$, matching the minimax lower bound of $\Omega(d\sqrt{T})$ up to logarithmic factors \citep{lattimore2020bandit}. In contrast, \texttt{LinTS} currently achieves a higher regret bound of $\tilde{O}(d^{3/2}\sqrt{T})$, and improving this bound remains an open problem. 

Table~\ref{tab:ts-regret-comparison} summarizes existing regret bounds and associated assumptions for various \texttt{LinTS} methods. Recent studies have contributed to narrowing this gap. \citet{kim2021doubly} introduced a doubly robust (DR) estimator instead of the ridge estimator, achieving a regret bound of $\tilde{O}(\alpha^{-1}\sqrt{T})$ under independent contexts with strictly positive minimum eigenvalue $\alpha > 0$. Special cases with $\alpha^{-1} = O(d)$ are further analyzed by \citet{bastani2021mostly} and \citet{kim2023double}. However, if $\alpha$ is extremely small (e.g., fixed or highly correlated contexts), this bound can be worse than previous guarantees. \citet{huix2023tight} achieved the minimax rate of $\tilde{O}(d\sqrt{T})$, but under the assumption of a Gaussian prior distribution on the parameter, yielding Bayesian rather than worst-case frequentist guarantees. For the multi-armed bandit (MAB) setting, \citet{agrawal2017near} and \citet{zhu2020thompson} obtained minimax-optimal bounds, but analogous results for LinCB with arbitrary contexts remain unresolved.

Statistical techniques, particularly those addressing missing data, have significantly advanced LinCB research. Methods such as inverse probability weighting (IPW) and doubly robust estimation (DR) \citep{bang2005doubly} tackle the selective reward observation problem by treating unselected rewards as missing data. \citet{dimakopoulou2019balanced} employed IPW in \texttt{LinTS}, obtaining a regret bound of $\tilde{O}(d^{3/2}\sqrt{T})$. \citet{kim2019doubly} adapted DR methods to sparse, high-dimensional linear bandits, leveraging information from unselected contexts. Subsequent works by \citet{kim2021doubly} and \citet{kim2023double} enhanced DR estimators under assumptions of stochastic contexts and generalized linear rewards, respectively. \citet{kim2023squeeze} further generalized DR methods to scenarios involving zero-probability arm selections. Despite these advancements, their reliance on IID or diversity assumptions limits broader applicability, leaving open the challenge of improving regret bounds for arbitrary contexts. This paper addresses this challenge by developing a novel estimator and algorithm that achieve the minimax optimal regret bound of $\tilde{O}(d\sqrt{T})$ without restrictive assumptions on the context distributions.

\section{Linear Contextual Bandit Problem}
\label{sec:LinCB}
This section presents the notation used throughout the paper and the formal definition of the linear contextual bandit (LinCB) problem.

\subsection{Notations}
For a natural number $n \in \mathbb{N}$, define $[n] := {1, 2, \ldots, n}$. For a positive semidefinite matrix $M \in \mathbb{R}^{d \times d}$ and a vector $x \in \mathbb{R}^d$, let $|x|_M := \sqrt{x^\top M x}$. For two matrices $A$ and $B$, write $A \succ B$ (respectively $A \succeq B$) if $A - B$ is positive definite (respectively positive semidefinite).

\subsection{Problem Formulation}
In LinCB, the environment defines a sequence of distributions  over $d$-dimensional context vectors for $K$ arms, constrained to the set. Deterministic contexts can also be represented by setting each  as a Dirac measure. The time horizon $T$ is finite but not known to the learner. 
At each round $t \in [T]$, the environment draws context vectors $(X_{1,t}, \ldots, X_{K,t})$ from $\mathcal{P}t$, where $X{k,t}$ denotes the context for arm $k$. 
Assume that $x_{\max}$ is known; if unknown, it can be replaced with $X_{\max,t} := \max_{s \in [t]} \max_{k \in [K]} |X_{k,s}|_2$.

Let $\mathcal{H}_t$ be the sigma-algebra generated by the observations until before selecting an action at round $t$, i.e.,  
\[
\Hcal_t = \bigcup_{\tau=1}^{t-1} \big[\{X_{i,\tau}\}_{i=1}^{K} \cup \{\Action{\tau}\} \cup \{\Reward{\Action{\tau}}{\tau}\}\big] \cup \{X_{i,t}\}_{i=1}^{K}.
\]  
Based on \(\Hcal_t\), the learner selects an arm \(a_t \in [K]\) and receives a reward \(\Reward{a_t}{t}\).  
In linear contextual bandits (LinCB), rewards are linear in the context, given by:  
\[
\Reward{a_t}{t} = X_{a_t,t}^\top \theta_{\star} + \Error{a_t}{t},
\]  
where \(\theta_{\star} \in \RR^d\) is the unknown parameter such that \(\|\theta_{\star}\|_2 \leq \theta_{\max}\) for some unknown \(\theta_{\max} > 0\), and \(\Error{a_t}{t}\) is conditionally zero-mean and \(\sigma\)-sub-Gaussian noise:  
\[
\CE{\exp(\lambda \Error{a_t}{t})}{\Hcal_t} \leq \exp\left(\frac{\lambda^2 \sigma^2}{2}\right) \quad \text{for all } \lambda \in \RR,
\]  
for some \(\sigma \geq 0\).  

To normalize the scale of regret, following standard convention (e.g., \citealp{abbasi2011improved}), assume that $|X_{k,t}^\top \theta_{\star}| \leq 1$ for all $k \in [K]$ and $t \in [T]$.
At each round \(t\), the optimal arm \(a_t^{\star}\) is defined as  
\(a_t^{\star} := \arg\max_{i \in [K]} (X_{i,t}^\top \theta_{\star})\),  
and the instantaneous regret is:  
\[
\Regret{t} := X_{a_t^{\star},t}^\top \theta_{\star} - X_{a_t,t}^\top \theta_{\star}.
\]  
The goal is to minimize the cumulative regret over \(T\) rounds: 
\[
R(T) := \sum_{t=1}^T \Regret{t}.
\]  
This general formulation aligns with the standard LinCB setting (see, e.g., \citealp{abbasi2011improved} and \citealp{lattimore2020bandit}) and encompasses specific cases studied in \citet{kim2021doubly} and \citet{kim2023squeeze}.

\section{Proposed Method}
\label{sec:proposed_method}

This section introduces the proposed estimation scheme that enables \texttt{LinTS} to achieve a nearly minimax-optimal regret bound.
Section~\ref{sec:augment} motivates the use of hypothetical sample augmentation for parameter estimation.
Section~\ref{sec:hypo_contexts} presents a construction of hypothetical contexts that efficiently support parameter learning while minimizing the number of augmented samples.
Building on the constructed contexts, Section~\ref{sec:hypothetical_bandit} defines an adaptive hypothetical bandit problem tailored for estimation.
Section~\ref{sec:coupling} describes a resampling strategy that couples the hypothetical and original bandit problems.
Finally, Section~\ref{sec:algorithm} outlines the proposed algorithm, which incorporates the novel estimator derived from this framework.

\subsection{Augmenting Hypothetical Contexts for Linear Bandits}
\label{sec:augment}

In linear contextual bandits (LinCB), the ridge estimator with $\ell_{2}$-regularization is a widely used approach for estimating the unknown parameter. This regularization can be interpreted as augmenting the dataset with artificial observations. Let $\mathbf{e}i \in \mathbb{R}^d$ denote the $i$-th Euclidean basis vector. Then, the ridge estimator at round $t$ can be expressed as
\begin{equation}
\label{eq:ridge}
\left(\sum_{s=1}^{t} X_{a_s,s} X_{a_s,s}^{\top} + \sum_{i=1}^{d} \mathbf{e}_i \mathbf{e}_i^{\top} \right)^{-1}
\left(\sum_{s=1}^{t} Y_{a_s,s} X_{a_s,s} + \sum_{i=1}^{d} 0 \cdot \mathbf{e}_i \right),
\end{equation}
which is equivalent to augmenting the dataset with dummy context–reward pairs $(\mathbf{e}_i, 0)$ for $i \in [d]$. \citet{bishop1995training} showed that the inclusion of such artificial data can improve generalization, i.e., performance on test data that are not used in training. However, augmenting with zero-valued rewards induces shrinkage toward the origin, resulting in an estimator that is not adaptive to the observed data.

This observation motivates the use of alternative augmented samples that enhance parameter learning more effectively. Several estimators in the literature can be interpreted through the lens of such augmentation. For example, \citet{perturb20akveton} proposed a method that adds multiple random perturbations to each reward observation; this is equivalent to augmenting the dataset with multiple context–reward pairs. \citet{kim2021doubly} introduced a doubly robust (DR) estimator:
\begin{equation}
\label{eq:DR_estimator}
\left(\sum_{s=1}^{t} \sum_{k=1}^{K} X_{k,s} X_{k,s}^{\top} + \sum_{i=1}^{d} \sqrt{\lambda_t} \mathbf{e}_i \left(\sqrt{\lambda_t}\mathbf{e}_i\right)^{\top} \right)^{-1}
\left(\sum_{s=1}^{t} \sum_{k=1}^{K} X_{k,s} Y_{k,s}^{\mathrm{DR}} \right),
\end{equation}
where $\lambda_t = \Omega(\sqrt{t})$ is a regularization parameter and $Y_{k,s}^{\mathrm{DR}}$ denotes a DR pseudo-reward. This estimator can be interpreted as augmenting the unselected contexts with their corresponding unbiased pseudo-rewards $(X_{k,s}, Y_{k,s}^{\mathrm{DR}})$ for all $k \in [K]$ and $s \in [t]$.

These augmented observations construct the Gram matrix that controls the self-normalized error of the estimator and influences its generalization performance. In the ridge estimator \eqref{eq:ridge}, the Gram matrix includes only the contexts corresponding to selected arms, and thus the estimator converges within the span of those vectors. In contrast, the DR estimator \eqref{eq:DR_estimator} incorporates all contexts, yielding a more well-conditioned matrix and enabling convergence in all directions spanned by the $K$ context vectors.

The well-conditioned Gram matrix plays a critical role in determining the convergence rate of the estimator in both linear regression and bandit settings. In the experimental design literature (e.g., \citealp{smith1918standard}; \citealp{guttorp2009karl}) and linear bandits (e.g., \citealp{soare2014best}; \citealp{tao2018best}), techniques such as E-optimal design aim to maximize the minimum eigenvalue of the Gram matrix to enhance estimation quality. Consequently, designing augmentations that yield well-conditioned Gram matrices is essential for accurate parameter estimation and low regret.

Although the DR estimator’s Gram matrix includes all $K$ arms, the augmentation involves $K$ augmented samples in each round, which introduces additional variance that scales linearly with $K$. To address this, \citet{kim2021doubly} assumed that the context covariance matrix has a strictly positive minimum eigenvalue, ensuring rapid increase in the minimum eigenvalue of the covariance.

Rather than augmenting with the full set $\{(X_{k,s}, Y_{k,s}^{\mathrm{DR}}) : k \in [K], s \in [t]\}$, the proposed method constructs a hypothetical dataset with reduced number of augmented samples by identifying a set of orthogonal eigenvectors that are informative for learning all $K$ rewards in each round. Moreover, instead of using dummy vectors such as $(\mathbf{e}_i, 0)$, the proposed estimator augments basis vectors orthogonal to the span of the observed contexts, thereby adaptively improving generalization to future inputs.

\subsection{Design of Hypothetical Contexts}
\label{sec:hypo_contexts}

At round \(t\), let \(a_t \sim \pi_t\) denote the arm drawn according to the policy \(\pi_t\). Before selecting \(a_t\), a set of hypothetical contexts is constructed to preserve the covariance structure of the original context vectors. Define \(G_t := \sum_{k \in [K] \setminus \{a_t\}} X_{k,t} X_{k,t}^{\top}\), and let \(r_t\) be its rank. Since \(G_t\) is real, symmetric, and positive semidefinite, it admits an eigen-decomposition \(G_t = \sum_{i=1}^{r_t} \lambda_{i,t} u_{i,t} u_{i,t}^{\top}\), where \(\lambda_{1,t}, \dots, \lambda_{r_t,t}\) are the positive eigenvalues and \(u_{1,t}, \dots, u_{r_t,t}\) are the corresponding orthonormal eigenvectors.
Define \(r_t + 1\) \emph{hypothetical contexts} as:
\[
Z_{i,t} =
\begin{cases}
\sqrt{\lambda_{i,t}}\,u_{i,t}, & \text{for } i = 1, \dots, r_t,\\
X_{a_t,t}, & \text{for } i = r_t+1.
\end{cases}
\]
This construction satisfies the identity
\begin{equation}
\sum_{i=1}^{r_t+1} Z_{i,t} Z_{i,t}^{\top}
= \sum_{k=1}^{K} X_{k,t} X_{k,t}^{\top},
\label{eq:Gram_equiv}
\end{equation}
ensuring that the compressed set \(\{Z_{i,t}\}_{i=1}^{r_t+1}\) exactly recovers the Gram matrix of the original \(K\) contexts, while reducing the number of arms to \(r_t + 1\).

To replace the artificial augmentation \(\{(\mathbf{e}_i, 0) : i \in [d]\}\) in the ridge estimator~\eqref{eq:ridge}, an orthogonal basis is constructed at selected rounds. Given hyperparameters \(\delta \in (0,1)\) and \(\gamma \in (0,1)\), define:
\begin{equation}
h_t := \left\lceil \frac{2}{\frac{1}{2}-e^{-1}} \frac{d}{1-\gamma} \log \frac{d(t+1)^2}{\delta} \right\rceil,
\label{eq:h_t}
\end{equation}
which specifies the number of rounds allocated for orthogonal basis augmentation.
The subset of rounds for the augmentation \(\mathcal{A}_t \subseteq [t]\) is defined recursively as:
\begin{equation}
\mathcal{A}_0 = \emptyset, \quad
\mathcal{A}_t =
\begin{cases}
\mathcal{A}_{t-1} \cup \{t\}, & \text{if } |\mathcal{A}_{t-1}| < h_t,\\
\mathcal{A}_{t-1}, & \text{otherwise}.
\end{cases}
\label{eq:A}
\end{equation}
Also define:
\begin{equation}
T_1 := \inf\{t \ge 1 : t \ge h_t\} \le \frac{8}{\frac{1}{2}-e^{-1}} \frac{d}{1 - \gamma} \left(1 + \log \frac{4}{e/2 - 1} \frac{d}{1 - \gamma} \sqrt{\frac{d}{\delta}}\right),
\label{eq:T_1}
\end{equation}
where the inequality holds by Lemma C.6 in \citet{kim2023learning}.
Thus, for all \(t \ge T_1\), it holds that \(h_t \le |\mathcal{A}_t| \le h_t + 1\)

For each \(s \in \mathcal{A}_t\), let \(r_s\) and \(\{u_{i,s}\}_{i \in [r_s]}\) denote the rank and eigenvectors of \(G_s := \sum_{k \in [K] \setminus \{a_s\}} X_{k,s} X_{k,s}^\top\). If \(r_s < d\), the Gram–Schmidt process is used to construct an orthonormal set \(\{u_{i,s}\}_{i = r_s + 1}^{d}\), orthogonal to the initial eigenvectors. The hypothetical contexts for round \(s \in \mathcal{A}_t\) are then defined as:
\[
Z_{i,s} :=
\begin{cases}
\max\{x_{\max}, 1\}\,u_{i,s}, & \text{for } i = 1, \dots, d, \\
X_{a_s,s}, & \text{for } i = d+1.
\end{cases}
\]

Now, define the number of arms in the hypothetical bandit problem at round \(s\) as:
\begin{equation}
N_s := 
\begin{cases}
r_s + 1 & \text{if } s \in [t] \setminus \mathcal{A}_t, \\
d + 1 & \text{if } s \in \mathcal{A}_t.
\end{cases}
\label{eq:N}
\end{equation}
Then, for all \(s \in [t]\) and \(i \in [N_s - 1]\), the hypothetical contexts are
\begin{equation}
Z_{i,s} := \begin{cases}
\sqrt{\lambda_{i,s}}\,u_{i,s}, & s \in [t] \setminus \mathcal{A}_t,\\
\max\{x_{\max}, 1\}\,u_{i,s}, & s \in \mathcal{A}_t,
\end{cases}
\quad\text{and}\quad
Z_{N_s,s} := X_{a_s,s}.
\label{eq:new_contexts}
\end{equation}
At round \(t\), the Gram matrix of all hypothetical contexts is given by:
\[
V_t := \sum_{s=1}^{t} \sum_{i=1}^{N_s} Z_{i,s} Z_{i,s}^\top
= \sum_{s \in [t] \setminus \mathcal{A}_t} \sum_{i=1}^{r_s + 1} Z_{i,s} Z_{i,s}^\top + \sum_{s \in \mathcal{A}_t} \sum_{i=1}^{d+1} Z_{i,s} Z_{i,s}^\top,
\]
and satisfies the following bounds.

\begin{lemma}[Gram matrix with hypothetical contexts]
\label{lem:Gram}
For all \(t \ge T_1\), the Gram matrix \(V_t\) satisfies
\begin{align*}
V_t &\succeq \sum_{s \in [t] \setminus \mathcal{A}_t} \sum_{k=1}^{K} X_{k,s} X_{k,s}^\top + \max\{x_{\max}^2, 1\} h_t I_d,
\\
V_t &\preceq \sum_{s \in [t] \setminus \mathcal{A}_t} \sum_{k=1}^{K} X_{k,s} X_{k,s}^\top + 2\max\{x_{\max}^2, 1\} h_t I_d.
\end{align*}
\end{lemma}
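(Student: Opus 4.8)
The plan is to evaluate the round-$s$ block $\sum_{i=1}^{N_s}Z_{i,s}Z_{i,s}^{\top}$ of $V_t$ in closed form for each $s\in[t]$, handling the estimation rounds $s\in[t]\setminus\mathcal{A}_t$ and the augmentation rounds $s\in\mathcal{A}_t$ separately (cf.\ the cases in \eqref{eq:N}--\eqref{eq:new_contexts}), then summing over $s$ and reading off both inequalities. All of this is an exact matrix computation; the only place the hypothesis $t\ge T_1$ is used is the counting of $|\mathcal{A}_t|$.

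For $s\in[t]\setminus\mathcal{A}_t$ the hypothetical contexts of \eqref{eq:new_contexts} are precisely those appearing in \eqref{eq:Gram_equiv}, so the block equals $\sum_{k=1}^{K}X_{k,s}X_{k,s}^{\top}$ exactly. For $s\in\mathcal{A}_t$ I would first observe that $u_{1,s},\dots,u_{d,s}$ is an \emph{orthonormal basis} of $\mathbb{R}^{d}$: the first $r_s$ vectors are the orthonormal eigenvectors of $G_s$ and, when $r_s<d$, the remaining $d-r_s$ are produced by Gram--Schmidt to be orthonormal to them and to each other. Hence $\sum_{i=1}^{d}u_{i,s}u_{i,s}^{\top}=I_d$, and since $Z_{i,s}=\max\{x_{\max},1\}\,u_{i,s}$ for $i\le d$, $Z_{d+1,s}=X_{a_s,s}$, and $(\max\{x_{\max},1\})^2=\max\{x_{\max}^2,1\}$, the block equals $\max\{x_{\max}^2,1\}\,I_d+X_{a_s,s}X_{a_s,s}^{\top}$. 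Summing over $s\in[t]$ gives the exact identity
\[
V_t=\sum_{s\in[t]\setminus\mathcal{A}_t}\sum_{k=1}^{K}X_{k,s}X_{k,s}^{\top}+|\mathcal{A}_t|\,\max\{x_{\max}^2,1\}\,I_d+\sum_{s\in\mathcal{A}_t}X_{a_s,s}X_{a_s,s}^{\top}.
\]
The lower bound is then immediate: drop the last (positive semidefinite) sum and use $|\mathcal{A}_t|\ge h_t$, which holds for $t\ge T_1$. For the upper bound, bound each rank-one term by $X_{a_s,s}X_{a_s,s}^{\top}\preceq|X_{a_s,s}|_2^{2}\,I_d\preceq\max\{x_{\max}^2,1\}\,I_d$, so $\sum_{s\in\mathcal{A}_t}X_{a_s,s}X_{a_s,s}^{\top}\preceq|\mathcal{A}_t|\max\{x_{\max}^2,1\}I_d$; adding the two $|\mathcal{A}_t|$-terms and using $|\mathcal{A}_t|\le h_t$ for $t\ge T_1$ (in fact a short induction using that $t\mapsto h_t$ in \eqref{eq:h_t} is non-decreasing and that \eqref{eq:A} adds a round only while $|\mathcal{A}_{t-1}|<h_t$ shows $|\mathcal{A}_t|\le h_t$ for every $t$, whence $|\mathcal{A}_t|=h_t$ for $t\ge T_1$) gives $V_t\preceq\sum_{s\in[t]\setminus\mathcal{A}_t}\sum_{k=1}^{K}X_{k,s}X_{k,s}^{\top}+2\max\{x_{\max}^2,1\}h_t I_d$, as claimed.

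I do not expect a genuine obstacle; the two points that repay care are (i) checking that the Gram--Schmidt completion is what collapses the augmented block to the clean form ``identity scaling $+$ one rank-one correction'' — this is exactly the mechanism by which the shrinking dummy block $\sum_i\mathbf{e}_i\mathbf{e}_i^{\top}$ of the ridge estimator in \eqref{eq:ridge} is replaced by a fixed, unbiased identity term — and (ii) the arithmetic of $|\mathcal{A}_t|$ versus $h_t$: if one only uses the weaker $|\mathcal{A}_t|\le h_t+1$ stated in the text, the additive term must be absorbed into the constant (harmless since $h_t\ge1$), whereas the sharper $|\mathcal{A}_t|=h_t$ for $t\ge T_1$ makes the constant $2$ exact.
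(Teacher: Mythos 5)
Your proof is correct and follows essentially the same route as the paper's: split \(V_t\) into non-augmentation rounds, where \eqref{eq:Gram_equiv} gives the exact \(K\)-arm Gram block, and augmentation rounds, where orthonormal completeness of \(\{u_{i,s}\}_{i=1}^{d}\) reduces the block to \(\max\{x_{\max}^2,1\}I_d + X_{a_s,s}X_{a_s,s}^\top\), then bound the rank-one terms by \(\max\{x_{\max}^2,1\}I_d\) and count \(|\mathcal{A}_t|\) against \(h_t\). Your observation that \(|\mathcal{A}_t|\le h_t\) for every \(t\) (rather than only the stated \(|\mathcal{A}_t|\le h_t+1\)) is a small tightening that makes the constant \(2\) in the upper bound exact, but it does not change the argument.
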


\begin{proof}
From~\eqref{eq:Gram_equiv}, we have:
\[
V_t = \sum_{s \in [t] \setminus \mathcal{A}_t} \sum_{k=1}^{K} X_{k,s} X_{k,s}^\top + \sum_{s \in \mathcal{A}_t} \sum_{i=1}^{d+1} Z_{i,s} Z_{i,s}^\top.
\]
For \(t \ge T_1\), we have \(h_t \le |\mathcal{A}_t| \le h_t + 1\). For each \(s \in \mathcal{A}_t\),
\[
\sum_{i=1}^{d+1} Z_{i,s} Z_{i,s}^\top = X_{a_s,s} X_{a_s,s}^\top + \max\{x_{\max}^2, 1\} \sum_{i=1}^d u_{i,s} u_{i,s}^\top.
\]
Because $\{u_{i,s}:i\in[d]\}$ are $d$ orthonormal vectors in $\RR^d$, we obtain $\sum_{i=1}^{d} u_{i,s}u_{i,s}^\top=I_d$.
Since \(X_{a_s,s} X_{a_s,s}^\top \preceq \max\{x_{\max}^2,1\} I_d\), the bounds follow.
\end{proof}

Lemma~\ref{lem:Gram} ensures that the hypothetical contexts preserve the Gram matrix of all \(K\) context vectors while reducing the number of augmented samples to \(N_s\) in each round. The proposed method retains statistical efficiency comparable to full augmentation \citep{kim2021doubly}, with less number of augmented context samples.

\subsection{A Hypothetical Linear Contextual Bandit}
\label{sec:hypothetical_bandit}

Based on the sample $a_t \sim \pi_t$, construct the hypothetical contexts $Z_{i,s}$ as previously described. For each $s \in [t]$ and $i \in [N_s]$, define the corresponding hypothetical rewards:
\[
W_{i,s} := Z_{i,s}^\top \theta_{\star} + \eta_{a_s,s},
\]
where $\eta_{a_s,s}$ is shared with the original bandit problem.

Let $\tilde{a}_s \in [N_s]$ be a hypothetical action sampled from the distribution:
\begin{equation}
\mathbb{P}(\tilde{a}_s = i) := \phi_{i,s} =
\begin{cases}
\frac{1 - \gamma}{N_s - 1}, & \text{if } i \in [N_s - 1], \\
\gamma, & \text{if } i = N_s,
\end{cases}
\label{eq:pseudo_prob}
\end{equation}
where $\gamma \in (0,1)$ determines the probability mass assigned to the original context $X_{a_s,s}$. As $\gamma$ increases, the sampling concentrates on arm $N_s$, increasing the variance of inverse-probability weights for the other arms. To mitigate this, the number of rounds with orthogonal basis augmentation, $|\mathcal{A}_t| \ge h_t$, must be sufficiently large. This construction defines a hypothetical linear bandit problem $\{(Z_{i,s}, W_{i,s}) : i \in [N_s], s \in [t]\}$ that shares the parameter $\theta_{\star}$ with the original problem.

To perform estimation, construct the following ridge estimator as a reference:
\begin{equation}
\check{\theta}_{t} := \left( \sum_{s=1}^{t} X_{a_s,s} X_{a_s,s}^\top + \gamma I_d \right)^{-1} \left( \sum_{s=1}^{t} X_{a_s,s} Y_{a_s,s} \right),
\label{eq:impute}
\end{equation}
with regularization parameter $\gamma \in (0,1)$ as used in~\eqref{eq:pseudo_prob}. Using $\check{\theta}_t$, define the pseudo-rewards:
\begin{equation}
\tilde{W}^{H(\check{\theta}_t)}_{i,s} := \left( 1 - \frac{\mathbb{I}(\tilde{a}_s = i)}{\phi_{i,s}} \right) Z_{i,s}^{\top} \check{\theta}_t + \frac{\mathbb{I}(\tilde{a}_s = i)}{\phi_{i,s}} W_{i,s}.
\label{eq:HDRY}
\end{equation}
This pseudo-reward is unbiased: $\mathbb{E}[\tilde{W}^{H(\check{\theta}_t)}_{i,s}] = Z_{i,s}^\top \theta_{\star}$ for all $i \in [N_s]$. The estimator for $\theta_{\star}$ is then:
\begin{equation}
\tilde{\theta}_t^{H(\check{\theta}_t)} := \left( \sum_{s=1}^{t} \sum_{i=1}^{N_s} Z_{i,s} Z_{i,s}^\top \right)^{-1} \left( \sum_{s=1}^{t} \sum_{i=1}^{N_s} \tilde{W}_{i,s}^{H(\check{\theta}_t)} Z_{i,s} \right).
\label{eq:Hypo_DR}
\end{equation}

However, this estimator cannot be computed directly, as the rewards $W_{i,s}$ for $i \in [N_s - 1]$ are unobserved. Only $W_{N_s,s} = Y_{a_s,s}$ is available. Thus, \eqref{eq:HDRY} is computable for all $i \in [N_s]$ only when $W_{\tilde{a}_s,s} = Y_{a_s,s}$, i.e., when the sampled context in the hypothetical bandit matches the observed context from the original bandit.
Since both models share the same parameter $\theta_{\star}$ and noise $\eta_{a_s,s}$, this condition is equivalent to $Z_{\tilde{a}_s,s} = X_{a_s,s}$. This matching event provides the motivation for the coupling technique introduced in the next section.

\subsection{Coupling the Hypothetical and Original Linear Contextual Bandits}
\label{sec:coupling}

\begin{figure}
\centering
\begin{tikzpicture}[
  >=Stealth,
  node distance = 0.7cm and 0.9cm,
  every node/.style = {font=\small},
  process/.style = {draw, rounded corners, minimum width=30mm, minimum height=11mm, align=center},
  decision/.style = {draw, diamond, aspect=2, inner sep=1pt, align=center}
]

\node[process] (theta)   {Estimate $\widehat{\theta}_{t-1}$};
\node[process, right=of theta]  (observe) {Observe $\{\,X_{k,t}:k\in[K]\}$};
\node[process, right=of observe]  (pi) {Compute the policy $\pi_t$};
\node[process, below=of pi] (sampleA) {Sample a candidate action \\ $a_t \sim \pi_t$};
\node[process, below=of sampleA] (compZ)  {Compute hypothetical contexts \\ $\{Z_{i,t}: i\in[N_t]\}$};
\node[process, below=of compZ]   (sampleC){Sample hypothetical action \\ $\tilde{a}_t\sim\tilde{\pi}_t$};

\node[decision, left=of sampleC] (decide)
      {Check \\ $X_{a_t,t}=Z_{\tilde{a}_t,t}$?};

\node[process, left=of decide] (play) 
      {Play arm $a_t$ };
\node[process, above=of play] (obsr) 
      { Observe reward \\ $Y_{a_t,t}=W_{a_t,t}$ };
\node[process, above=of obsr] (inct) 
      { $t\gets t+1$ };
\draw[->] (theta)   -- (observe);
\draw[->] (observe) -- (pi);
\draw[->] (pi) -- (sampleA);
\draw[->] (sampleA) -- (compZ);
\draw[->] (compZ)   -- (sampleC);
\draw[->] (sampleC) -- (decide);
\draw[->] (decide)  -- node[above]{yes} (play);
\draw[->] (decide)  |- node[above]{no} (sampleA);  
\draw[->] (play)    -- (obsr);
\draw[->] (obsr)    -- (inct);
\draw[->] (inct)    -- (theta);
\end{tikzpicture}
\caption{Flow diagram of the proposed coupling and resampling scheme}
\label{fig:process}
\vspace{5pt}
\end{figure}
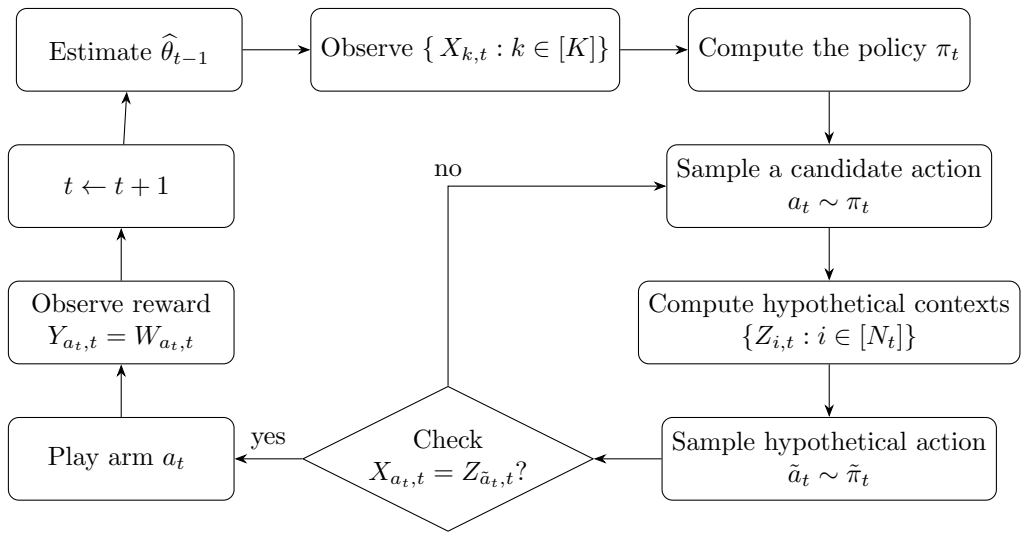

This section introduces a probabilistic method to couple the hypothetical bandit problem with the original contextual bandit. Figure~\ref{fig:process} illustrates the overall coupling and resampling process. The key observation is that the hypothetical pseudo-reward in~\eqref{eq:HDRY} is computable under the event \( \{Z_{\tilde{a}_s,s} = X_{a_s,s}\} \), which is implied by \( \{\tilde{a}_s = N_s\} \). To ensure this condition, we resample both \( a_s \sim \pi_s \) and \( \tilde{a}_s \sim \tilde{\pi}_s \) using the distribution in~\eqref{eq:pseudo_prob}.

Each resampling iteration generates updated hypothetical contexts \( \{Z_{i,s} : i \in [N_s]\} \), effectively randomizing the hypothetical contexts and rewards until the pseudo-reward becomes computable. Although rewards are collected from the original bandit, the estimation of the parameter \( \theta_{\star} \) is performed using compressed and augmented samples from the hypothetical bandit problem.

Let \( \tilde{a}_s(m) \) and \( a_s(m) \) denote the actions sampled during the \( m \)-th resampling trial in the hypothetical and original bandits, respectively. These actions are IID across trials \( m \) given $\Hcal_t$. Define the stopping time for successful coupling as \( \xi_s := \inf\{m \geq 1: X_{a_s(m)} = Z_{\tilde{a}_s(m)}\} \). Then, define the matching event:
\begin{equation}
\mathcal{M}_s := \{\xi_s \leq M_s\}, \quad M_s := \left\lceil \frac{\log((s+1)^2/\delta)}{\log(1/(1-\gamma))} \right\rceil
\label{eq:matching_event}
\end{equation}
which ensures a successful coupling within \( M_s \) trials. Since \( \mathbb{P}(\tilde{a}_s(m) = N_s) = \gamma \), the number of trials \( M_s \) is selected to guarantee \( \mathbb{P}(\mathcal{M}_s) \geq 1 - \delta/(s+1)^2 \).

The hyperparameter \( \gamma \) controls the trade-off: as \( \gamma \) increases, the probability of coupling success increases (thus requiring fewer resampling trials), while the size of the regularization set \( h_t \) must increase. Table~\ref{tab:coupling} illustrates the data structures for successful and failed couplings during resampling.

\begin{table}[t]
    \centering
    \begin{tabular}{c|c|c||c|c|c}
    \toprule & \multicolumn{2}{c||}{Hypothetical Bandit Problem} & \multicolumn{3}{c}{Original Bandit Problem} \\
    \hline
     & Arm 1 & Arm 2  & Arm 1 & Arm 2 & Arm 3   \\
    \hline
    Failure &  \cellcolor{Gray}$(Z_{\tilde{a}_t,t}, W_{\tilde{a}_t,t})$  & $(Z_{N_t,t},?)$ & $(X_{1,t},?)$  & $(X_{2,t},?)$ & \cellcolor{Gray}$(X_{a_t,t},Y_{a_t,t})$ \\
    \hline
    Success &  $(Z_{1,t}, ?)$  & \cellcolor{Gray}$(Z_{N_t,t},W_{N_t,t})$ & $(X_{1,t},?)$ & \cellcolor{Gray}$(X_{a_t,t},Y_{a_t,t})$   & $(X_{3,t},?)$ \\
    \bottomrule
    \end{tabular}
    \caption{Illustration of coupling success and failure during resampling for \(N_t = 2\) and \(K = 3\). By construction, \(Z_{2,t} := X_{a_t,t}\) and \(W_{2,t} := Y_{a_t,t}\). Gray cells indicate the selected actions.}
    \label{tab:coupling}
\end{table}

\begin{algorithm}[t]
\caption{Candidate-Arm Sampler (\texttt{CAS}) for Round $t$}
\label{alg:cas}
\begin{algorithmic}[1]
\STATE \textbf{Input:} contexts $\{X_{k,t}\}_{k\in[K]}$, posterior mean $\widehat{\theta}_{t-1}$, exploration variance $v_{t-1}$, Gram matrix $V_{t-1}$, pseudo-index $N_t$, coupling parameter $\gamma$, confidence $\delta$.
\STATE \textbf{Set} $M_t$ as in~\eqref{eq:matching_event} \hfill \COMMENT{// maximum retries}
\STATE \textbf{Initialise} $m\gets1$
\REPEAT
  \STATE Sample $\tilde{\theta}^{(m)}_{k,t} \sim \mathcal{N}\!\bigl(\widehat{\theta}_{t-1},\,v_{t-1}^{2}V_{t-1}^{-1}\bigr)$ independently for all $k\in[K]$
  \STATE $a_t^{(m)} \gets \arg\max_{k\in[K]} X_{k,t}^{\top}\tilde{\theta}^{(m)}_{k,t}$
  \STATE Sample $\tilde{a}_t^{(m)}$ from the distribution in~\eqref{eq:pseudo_prob}
  \STATE $m \gets m+1$
\UNTIL{$\tilde{a}_t^{(m-1)} = N_t \;\;\mathbf{or}\;\; m> M_t$}
\STATE \textbf{Output:} $a_t^{\star} \gets a_t^{(m-1)}$, \quad $\tilde{a}_t^{\star}\gets\tilde{a}_t^{(m-1)}$
\end{algorithmic}
\end{algorithm}

The proposed resampling-coupling scheme is describe in Algorithm~\ref{alg:cas} as candidate arm sampler (\texttt{CAS}).
The resampling in \texttt{CAS} is distinctive from that in \citet{kim2021doubly} and \citet{xu2020upper}.
The resampling in \citet{kim2021doubly} resamples the action to find the arm whose selection probability is greater than a prespecified threshold value.
\citet{xu2020upper} resamples the previous counterfactual actions and contexts to impose randomization and generalization on the estimator.
In contrast, our resampling is to couple the hypothetical samples with the original samples and this coupling is the first method a novel innovative part in this work.

Upon obtaining the coupled contexts \(Z_{\tilde{a}_s(M_s),s} = X_{a_s(M_s),s}\), we construct the coupled pseudo-reward as:
\begin{equation}
W_{i,s}^{Co(\check{\theta}_t)} := \left(1 - \frac{\mathbb{I}(\tilde{a}_s(M_s) = i)}{\phi_{i,s}}\right) Z_{i,s}^\top \check{\theta}_t + \frac{\mathbb{I}(\tilde{a}_s(M_s) = i)}{\phi_{i,s}} W_{i,s},
\label{eq:CoY}
\end{equation}
which is computable for all \(i \in [N_s]\) because \(a_s(M_s)\) is selected and \(W_{\tilde{a}_s(M_s),s} = Y_{a_s(M_s),s}\) is observable.

Given an reference estimator \(\check{\theta}_t\) defined in~\eqref{eq:impute}, the proposed hypothetical coupled sample augmented (HCSA) estimator is defined as:
\begin{equation}
\widehat{\theta}_t := \left\{\sum_{s=1}^t \mathbb{I}(\mathcal{M}_s) \sum_{i=1}^{N_s} Z_{i,s} Z_{i,s}^\top \right\}^{-1} \left(\sum_{s=1}^t \mathbb{I}(\mathcal{M}_s) \sum_{i=1}^{N_s} W_{i,s}^{Co(\check{\theta}_t)} Z_{i,s}\right).
\label{eq:A_estimator}
\end{equation}
The indicator $\II(\Mcal_s)$ lets the estimator use the coupled pseudo-rewards~\eqref{eq:CoY} only when \(\mathcal{M}_s\) occurs; otherwise, it skips round \(s\) and relies on the previous estimator. Since \(\mathcal{M}_s\) occurs with high probability, we can couple the HCSA estimator with the hypothetical sample augmented estimator from~\eqref{eq:Hypo_DR}.

While the DR estimator in~\eqref{eq:DR_estimator} where the $K$ pseudo-rewards are augmented, the proposed (HCSA) estimator~\eqref{eq:Hypo_DR} adds $N_s \le d+1$ for each round $s\in[t]$.
This reduction in the number of augmented pseudo-reward samples paves a way to reduce the error and eliminate the IID and minimum eigenvalue assumption on contexts, by which \citet{kim2021doubly} used to obtain a regret bound that depends on the minimum eigenvalue of the context covariance.

Next, we provide a coupling inequality that relates the HCSA estimator to hypothetical sample augmented estimator $\tilde{\theta}_t^{H(\check{\theta}_t)}$.
\begin{lemma}[A coupling inequality]
\label{lem:coupling}  
For \( t \geq 1 \), let \( \Scal_{t} := \cap_{s=1}^{t} \Mcal_s \), where $\Mcal_s$ is the matching event defined in~\eqref{eq:matching_event}.
For the reference estimator \( \check{\theta}_t\) defined in~\eqref{eq:impute} and for \( x > 0 \),  
\[
\PP\left(\left\{ \norm{\widehat{\theta}^{A(\check{\theta}_t)}_{t}-\theta_{\star}}_{V_{t}} > x \right\}\right) \leq  
\PP\left(\left\{\norm{\tilde{\theta}_t^{H(\check{\theta}_t)}-\theta_{\star}}_{V_t} > x\right\} \cap \Scal_{t}\right) + \PP(\Scal_t^c),
\]  
and the failure probability satisfies \( \PP(\Scal_t^c) \leq \delta \).
\end{lemma}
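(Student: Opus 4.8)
The statement asserts two things: a coupling inequality relating the tail of $\|\widehat{\theta}^{A(\check{\theta}_t)}_t - \theta_\star\|_{V_t}$ to that of $\|\tilde{\theta}_t^{H(\check{\theta}_t)} - \theta_\star\|_{V_t}$, and the bound $\PP(\Scal_t^c) \le \delta$. I would dispatch the second claim first, since it is the cleaner of the two. By a union bound, $\PP(\Scal_t^c) = \PP\big(\cup_{s=1}^t \Mcal_s^c\big) \le \sum_{s=1}^t \PP(\Mcal_s^c)$. From the definition in~\eqref{eq:matching_event}, each resampling trial succeeds (i.e. $\tilde{a}_s(m) = N_s$, which forces $Z_{\tilde{a}_s(m),s} = X_{a_s(m),s}$) with probability exactly $\gamma$ independently across $m$ given $\Hcal_t$, so $\PP(\xi_s > M_s) = (1-\gamma)^{M_s}$. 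The choice $M_s = \lceil \log((s+1)^2/\delta)/\log(1/(1-\gamma))\rceil$ makes $(1-\gamma)^{M_s} \le \delta/(s+1)^2$, hence $\PP(\Scal_t^c) \le \sum_{s=1}^t \delta/(s+1)^2 \le \delta \sum_{s\ge 1} (s+1)^{-2} < \delta \cdot (\pi^2/6 - 1) < \delta$. (If a cleaner constant is wanted one can instead note $\sum_{s\ge 1}(s+1)^{-2} < 1$.)

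For the coupling inequality itself, the key structural fact is that on the event $\Scal_t$ — where every matching event $\Mcal_s$ holds — the HCSA estimator $\widehat{\theta}^{A(\check{\theta}_t)}_t$ defined in~\eqref{eq:A_estimator} coincides with the hypothetical sample augmented estimator $\tilde{\theta}_t^{H(\check{\theta}_t)}$ of~\eqref{eq:Hypo_DR}. Indeed, on $\Scal_t$ each indicator $\II(\Mcal_s) = 1$, so the Gram matrix $\sum_{s=1}^t \II(\Mcal_s)\sum_{i=1}^{N_s} Z_{i,s}Z_{i,s}^\top$ collapses to $V_t$, and the coupled pseudo-rewards $W_{i,s}^{Co(\check{\theta}_t)}$ in~\eqref{eq:CoY} become the hypothetical pseudo-rewards $\tilde{W}_{i,s}^{H(\check{\theta}_t)}$ of~\eqref{eq:HDRY}, because the coupling guarantees $W_{\tilde{a}_s(M_s),s} = Y_{a_s(M_s),s}$ renders~\eqref{eq:HDRY} computable and the sampled hypothetical action $\tilde{a}_s(M_s)$ plays the role of $\tilde{a}_s$ with the same selection probabilities $\phi_{i,s}$. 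Consequently, on $\Scal_t$ we have the identity $\widehat{\theta}^{A(\check{\theta}_t)}_t = \tilde{\theta}_t^{H(\check{\theta}_t)}$ as random vectors (and both are computed against the same $V_t$). I would verify this equality carefully by comparing~\eqref{eq:A_estimator} term-by-term with~\eqref{eq:Hypo_DR}, checking in particular that the matched-context resampling does not disturb the unbiasedness structure — the fresh draws $\tilde a_s(M_s) \sim \phi_{\cdot,s}$ conditioned on the matching event still have the correct conditional law used to derive unbiasedness, a point that should be acknowledged explicitly.

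Granting that identity, the tail bound follows by a standard event-decomposition: writing $E := \{\|\widehat{\theta}^{A(\check{\theta}_t)}_t - \theta_\star\|_{V_t} > x\}$, we split $\PP(E) = \PP(E \cap \Scal_t) + \PP(E \cap \Scal_t^c) \le \PP(E \cap \Scal_t) + \PP(\Scal_t^c)$, and on $\Scal_t$ we replace $\widehat{\theta}^{A(\check{\theta}_t)}_t$ by $\tilde{\theta}_t^{H(\check{\theta}_t)}$ inside the norm to get $\PP(E \cap \Scal_t) = \PP(\{\|\tilde{\theta}_t^{H(\check{\theta}_t)} - \theta_\star\|_{V_t} > x\} \cap \Scal_t)$, which is exactly the claimed bound once combined with $\PP(\Scal_t^c) \le \delta$.

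**Main obstacle.** The routine part is the union bound and the event split; the delicate part is establishing the almost-sure identity $\widehat{\theta}^{A(\check{\theta}_t)}_t = \tilde{\theta}_t^{H(\check{\theta}_t)}$ on $\Scal_t$ rigorously. One must be careful that the "hypothetical" estimator~\eqref{eq:Hypo_DR} is defined with its own (unconditioned) hypothetical action $\tilde a_s$, whereas the HCSA estimator uses $\tilde a_s(M_s)$, the last draw of a resampling loop that is conditioned to terminate at $N_s$ (or at the cap $M_s$). Showing these coincide on $\Scal_t$ requires checking that on the matching event the terminal draw $\tilde a_s(M_s)$ has the same role — and that the measurability/adaptedness with respect to the relevant filtration is consistent — so that plugging one in for the other inside the estimator formula is legitimate. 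This is the genuinely novel bookkeeping of the coupling argument; everything downstream is bookkeeping of a more familiar kind.
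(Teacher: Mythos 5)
Your treatment of the failure probability is fine (the union bound with $\sum_{s\ge1}(s+1)^{-2}<1$ is exactly what the paper uses), and your event split $\PP(E)\le\PP(E\cap\Scal_t)+\PP(\Scal_t^c)$ matches the paper's first step. The genuine gap is the central claim that on $\Scal_t$ one has the pathwise identity $\widehat{\theta}^{A(\check{\theta}_t)}_t=\tilde{\theta}_t^{H(\check{\theta}_t)}$. This is false: on $\Scal_t$ the coupled pseudo-rewards in~\eqref{eq:CoY} are driven by the terminal resampled action, which on the matching event equals $N_s$, so the indicators $\II(\tilde{a}_s(M_s)=i)/\phi_{i,s}$ fire only at $i=N_s$; the hypothetical estimator~\eqref{eq:Hypo_DR}, by contrast, is built from a single \emph{unconditioned} draw $\tilde{a}_s\sim\phi_{\cdot,s}$, which hits any $i\in[N_s]$ with probability $\phi_{i,s}$. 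These are different random vectors, so $\PP(E\cap\Scal_t)$ cannot simply be rewritten as $\PP(\{\norm{\tilde{\theta}_t^{H(\check{\theta}_t)}-\theta_{\star}}_{V_t}>x\}\cap\Scal_t)$ by substitution inside the norm. Your own hedge points at the problem but resolves it the wrong way: conditioned on the matching event, the terminal draw does \emph{not} retain the law $\phi_{\cdot,s}$ (it is concentrated at $N_s$), so "checking unbiasedness is preserved" cannot succeed; and if you instead redefine $\tilde{\theta}_t^{H}$ to use the conditioned terminal draws so that the identity holds trivially, you destroy exactly the unconditioned-sampling structure that Lemma~\ref{lem:error_decomposition} and Theorem~\ref{thm:self} exploit downstream.

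The missing idea is the paper's distributional (exchangeability) step: write the error on $\Scal_t$ as a function $F\bigl(\tilde{a}_1(M_1),\dots,\tilde{a}_t(M_t)\bigr)$ of the terminal trial actions, intersect with $\cap_s\{\tilde{a}_s(M_s)=N_s\}$, and then use that the per-round resampling draws are IID across trials given $\Hcal_t$ to swap the terminal trial with the first trial, yielding $\PP\bigl(\{F(\tilde{a}_1(1),\dots,\tilde{a}_t(1))>x\}\cap\Scal_t\cap\bigcap_s\{\tilde{a}_s(1)=N_s\}\bigr)$; dropping the constraint $\bigcap_s\{\tilde{a}_s(1)=N_s\}$ gives an upper bound by $\PP\bigl(\{F(\tilde{a}_1(1),\dots,\tilde{a}_t(1))>x\}\cap\Scal_t\bigr)$, and $F$ evaluated at the first (unconditioned) draws has the same law as $\norm{\tilde{\theta}_t^{H(\check{\theta}_t)}-\theta_{\star}}_{V_t}$. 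The coupling inequality is thus an equality in distribution after a trial-index swap plus a monotone relaxation of an event, not an almost-sure identity of estimators; without that swap your argument does not go through.
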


\begin{proof}
Fix \( t \in [T] \) throughout the proof. 
For any \( \check{\theta} \in \RR^d \) and \( x > 0 \), decompose the probability as follows:  
\[
\PP\left(\norm{\widehat{\theta}_{t}-\theta_{\star}}_{V_t} > x \right) 
\leq \PP\left(\left\{\norm{\widehat{\theta}_{t}-\theta_{\star}}_{V_t} > x\right\} \cap \Scal_t \right) + \PP\left(\Scal_t^c\right).
\]
On the event \( \Scal_{t} := \cap_{s=1}^{t} \Mcal_s \), the HCSA estimator in~\eqref{eq:A_estimator} is simplified to
\[
\widehat{\theta}^{A(\check{\theta}_t)}_t = \left\{\sum_{s=1}^{t}\sum_{i=1}^{N_s} Z_{i,s}Z_{i,s}^\top \right\}^{-1}\left(\sum_{s=1}^{t}\sum_{i=1}^{N_s} W_{i,s}^{Co(\check{\theta}_t)}Z_{i,s}\right) = V_t^{-1}\left(\sum_{s=1}^{t}\sum_{i=1}^{N_s} W_{i,s}^{Co(\check{\theta}_t)}Z_{i,s}\right).
\]  
Define the function  
\[
F\Bigl(\tilde{a}_1(M_1),\ldots,\tilde{a}_t(M_t)\Bigr) := \norm{\widehat{\theta}_t-\theta_{\star}}_{V_t} 
= \norm{\sum_{s=1}^{t}\sum_{i=1}^{N_s} (W_{i,s}^{Co(\check{\theta}_t)}-Z_{i,s}^\top \theta_{\star} )Z_{i,s} }_{V_t^{-1}}.
\]
Using the definition of \( M_s \), where \( \tilde{a}_s(M_s) = N_s \), we have  
\begin{align*}
&\PP\left(\left\{F\Bigl(\tilde{a}_1(M_1),\ldots,\tilde{a}_t(M_t)\Bigr) > x\right\} \cap \Scal_t\right) \\
&= \PP\left(\left\{F\Bigl(\tilde{a}_1(M_1),\ldots,\tilde{a}_t(M_t)\Bigr) > x\right\} \cap \Scal_t \cap \bigcap_{s=1}^{t}\{\tilde{a}_s(M_s) = N_s\}\right) \\
&= \PP\left(\left\{F\Bigl(\tilde{a}_1(1),\ldots,\tilde{a}_t(1)\Bigr) > x\right\} \cap \Scal_t \cap \bigcap_{s=1}^{t}\{\tilde{a}_s(1) = N_s\}\right),
\end{align*}  
where the last equality holds because \( \{\tilde{a}_s(m) : m \in \NN\} \) are IID for each \( s \in [t] \).  
Then,  
\begin{align*}
&\PP\left(\left\{F\Bigl(\tilde{a}_1(1),\ldots,\tilde{a}_t(1)\Bigr) > x\right\} \cap \Scal_t \cap \bigcap_{s=1}^{t}\{\tilde{a}_s(1) = N_s\}\right) \\
&\leq \PP\left(\left\{F\Bigl(\tilde{a}_1(1),\ldots,\tilde{a}_t(1)\Bigr) > x\right\} \cap \Scal_t\right)
\end{align*} 
We observe that replacing $\{\tilde{a}_s(M_s):s\in[t]\}$ in coupled pseudo-rewards~\eqref{eq:CoY} with $\{\tilde{a}_s(1):s\in[t]\}$ gives the hypothetical pseudo-rewards in~\eqref{eq:HDRY}.
Thus, the distribution of the normalized error $\|\tilde{\theta}_t^{H(\check{\theta}_t)} - \theta_{\star}\|_{V_t}$ is equivalent to that of $F\bigl(\tilde{a}_1(1),\ldots,\tilde{a}_t(1)\bigr)$ and we obtain
\[
\PP\Big(\big\{F\bigl(\tilde{a}_1(1),\ldots,\tilde{a}_t(1)\bigr) > x\big\} \cap \Scal_t\Big) = \PP\Big(\big\{\|\tilde{\theta}_t^{H(\check{\theta}_t)} - \theta_{\star}\|_{V_t} > x\big\} \cap \Scal_t\Big),
\]
which proves the coupling inequality.
The bound for the failure probability $\PP(\Scal^c_t) \le \delta$ is proved by the fact that $\PP(\Mcal_s^c)\le \delta/(s+1)^2$ by construction of the maximum number of resampling trials.
\end{proof}
With the coupling inequality, we can leverage augmented samples from the hypothetical bandit problem to closely approximate the hypothetical sample augmented estimator with high probability. While the coupling technique can be applied to any choice of hypothetical problem—and the hypothetical contexts \( \{Z_{i,t}\} \) may be arbitrarily defined -- it is crucial that they are compatible with the original contextual bandit problem and the reference estimator \( \check{\theta}_t \).

The design of suitable hypothetical contexts is key, as it enables control of the maximum deviation in the original problem through the bound:
\[
\max_{k \in [K]} \left|X_{k,t}^\top \left(\tilde{\theta}_t^{H(\check{\theta}_t)} - \theta_\star\right)\right| 
\le 
\norm{\tilde{\theta}_t^{H(\check{\theta}_t)} - \theta_\star}_{\tilde{G}_t} 
\cdot 
\max_{k \in [K]} \norm{X_{k,t}}_{\tilde{G}_t^{-1}},
\]
where \( \tilde{G}_t := \sum_{s=1}^{t} \sum_{i=1}^{N_s} Z_{i,s} Z_{i,s}^\top \) denotes the Gram matrix constructed from the hypothetical contexts.

This upper bound consists of two components: (i) the self-normalized error of the hypothetical sample augmented estimator, and (ii) the maximum norm of the original contexts normalized by \( \tilde{G}_t \). Each component is sensitive to how the Gram matrix \( \tilde{G}_t \) is constructed.

If \( \tilde{G}_t \) is defined using only the played contexts -- i.e., \( \tilde{G}_t = \sum_{s=1}^{t} X_{a_s,s} X_{a_s,s}^\top + I_d \)—then the norm term \( \max_{k \in [K]} \|X_{k,t}\|_{\tilde{G}_t^{-1}} \) may become unbounded due to insufficient exploration. Conversely, if \( \tilde{G}_t \) is constructed from an overly large set of hypothetical contexts, the self-normalized error \( \| \tilde{\theta}_t^{H(\check{\theta}_t)} - \theta_\star \|_{\tilde{G}_t} \) may increase significantly, making it harder to guarantee tight estimation bounds.

Therefore, the design of the hypothetical contexts must carefully balance the number of augmenting samples to ensure both the estimator's accuracy and the tightness of high-probability bounds. Section~\ref{sec:self} presents a formal analysis of this trade-off, and shows that the proposed construction achieves this balance effectively, leading to a well-conditioned estimator with high-probability regret guarantees.

\subsection{Hypothetical Coupled Sample Augmented Thompson Sampling}
\label{sec:algorithm}

\begin{algorithm}[t]
\caption{Hypothetical Coupled Sample Augmented Thompson Sampling (\texttt{HCSA+TS})}
\label{alg:ATS} 
\begin{algorithmic}[1]
\STATE \textbf{Input:} confidence level $\delta\in(0,1)$, coupling parameter $\gamma \in (0,1)$, exploration parameter $v_{t} := \{2\log\frac{K(t+1)^2}{\delta}\}^{-1/2}$, orthogonal basis regularization parameter $h_t$ as in~\eqref{eq:h_t}.
\STATE Initialize the estimator $\widehat{\theta}_0=\mathbf{0}$, Gram matrix $V_{0}=O$, and a subset of rounds for the orthogonal basis regularization $\mA_{0}=\emptyset$. 
\FOR{$t=1$ \textbf{to} $T$} 
\STATE Observe contexts $\{X_{k,t}:k\in[K]\}$.
\STATE Update $\Acal_t$ as in \eqref{eq:A} and compute $N_t$ as in \eqref{eq:N}.
\STATE Set $m=1$ sample $\tilde{a}_t(m)$ from the multinomial distribution \eqref{eq:pseudo_prob}.
\STATE $(a_t(m),\tilde{a}_t(m)) \gets \texttt{CAS}\bigl(\{X_{k,t}\},\widehat{\theta}_{t-1},v_{t-1},V_{t-1},N_t,\gamma,\delta\bigr)$
\IF{$\tilde{a}_t(m) = N_t$ // Resampling succeeded} 
\STATE Pull arm $a_t(m)$ and observe $Y_{a_t(m),t}$.
\STATE Compute the reference estimator $\check{\theta}_t$ defined in~\eqref{eq:impute}.
\IF{$t \in \Acal_t$}
\STATE Compute hypothetical contexts $\{Z_{i,t}:i\in[N_t]\}$ as in \eqref{eq:new_contexts}
\ELSE
\STATE Compute hypothetical contexts $\{Z_{i,t}:i\in[N_t]\}$ with orthogonal basis as in \eqref{eq:new_contexts}.
\ENDIF
\STATE Update $V_{t}=V_{t-1}+\sum_{i=1}^{N_t}Z_{i,s}Z_{i,s}^{\top}$
\STATE Compute the estimator $\widehat{\theta}_t$ as in ~\eqref{eq:A_estimator}.
\ELSE
\STATE $\widehat{\theta}_{t} \leftarrow \widehat{\theta}_{t-1}$
\ENDIF
\ENDFOR 
\end{algorithmic} 
\end{algorithm}

The proposed algorithm, \textit{Hypothetical Coupled Sample Augmented Thompson Sampling} (\texttt{HCSA+TS}), is detailed in Algorithm \ref{alg:ATS}. This algorithm builds upon the structure of \texttt{LinTS} but introduces two key innovations: (i) resampling to couple the hypothetical bandit with the original bandit, and (ii) the HCSA estimator, which leverages a compressed orthogonal basis for improved efficiency.

For the resampling step (i), the algorithm repeatedly resamples $a_t$ from \texttt{LinTS} policy equipped with HCSA estimator and the novel Gram matrix until the condition \(\{Z_{\tilde{a}_t,t} = X_{a_t,t}\}\) is satisfied. 
This ensures that the hypothetical bandit aligns with the original bandit by augmenting the randomized contexts. The number of resampling attempts, set to \(\lceil\log\frac{(t+1)^2}{\delta} / \log\frac{1}{1-\gamma}\rceil\), guarantees the resampling process succeeds with probability at least \(1 - \delta / (t+1)^2\). 
In practice, this resampling typically succeeds after only a few iterations.

For the HCSA estimator (ii), although computing it might appear computationally demanding, efficient implementation strategies significantly reduce its complexity. 
Theoretically, the algorithm must compute hypothetical context vectors \(\{Z_{i,t} : i \in [N_t]\}\) for each resampling iteration. However, in practice, the algorithm first checks if \(\{\tilde{a}_t = N_t\}\) occurs and then computes the hypothetical contexts based on the resampled \(a_t\).

The worst-case computational complexity of the algorithm is \(O(d^2(K + d)T + T \log(T+1) / \log(\frac{1}{1-\gamma}))\). The primary computational bottleneck arises from calculating the Gram matrix \(V_t\) and performing eigenvalue decomposition to construct hypothetical contexts, repeated for the specified number of resampling attempts in each round \(t \in [T]\). 
In practice, the computational efficiency of the A estimator can be further enhanced by applying the Sherman-Morrison formula, enabling rank-1 updates and reducing memory usage.


\section{Regret Analysis}
\label{sec:regret_analysis}
The following theorem establishes a nearly minimax-optimal cumulative regret bound for the \texttt{HCSA+TS} algorithm.

\begin{theorem}[Regret Bound for \texttt{HCSA+TS}]
\label{thm:regret_bound}
In Algorithm~\ref{alg:ATS}, set the exploration parameter as \(v_{t} = \{2\log(K(t+1)^{2}/\delta)\}^{-1/2}\). Then, with probability at least \(1 - 3\delta\), the cumulative regret of the \texttt{HCSA+TS} algorithm by round \(T\) satisfies:
\begin{equation}
R(T) \leq \frac{4d}{(1/2 - e^{-1})(1 - \gamma)} \log\frac{d(T+1)^2}{\delta} 
+ \left\{5\sqrt{2} \theta_{\max} + \frac{6\sigma}{\gamma} \sqrt{2d \log\frac{2T}{\delta}} \right\} \sqrt{4dT \log \frac{T}{d}}.
\label{eq:regret_bound}
\end{equation}
\end{theorem}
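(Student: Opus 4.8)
The plan is to condition on a single event of probability at least $1-3\delta$ and split the cumulative regret according to membership in the forced-augmentation set $\mathcal{A}_T$. The good event is the intersection of: (a) $\Scal_T=\bigcap_{s=1}^{T}\Mcal_s$, so that all couplings succeed, which has probability $\ge 1-\delta$ by Lemma~\ref{lem:coupling}; (b) a self-normalized bound $\norm{\widehat{\theta}_t-\theta_{\star}}_{V_t}\le\beta_t$ valid for all $t$, with $\beta_t$ of order $\theta_{\max}+\sigma\gamma^{-1}\sqrt{d\log(T/\delta)}$; and (c) the Gaussian concentration $|X_{k,t}^{\top}(\tilde{\theta}_{k,t}-\widehat{\theta}_{t-1})|\le\norm{X_{k,t}}_{V_{t-1}^{-1}}$ for all $k\in[K]$, $t\in[T]$, which is precisely why $v_t=\{2\log(K(t+1)^2/\delta)\}^{-1/2}$ is chosen (the variance scale cancels the union-bound $\sqrt{2\log}$ factor). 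For the at most $h_T+1$ rounds in $\mathcal{A}_T$ I would bound the instantaneous regret trivially by $2$ using $|X_{k,t}^{\top}\theta_{\star}|\le1$; these contribute at most $2(h_T+1)$, which is the first term of~\eqref{eq:regret_bound} up to the ceiling in~\eqref{eq:h_t}. Since $\mathcal{A}_{T_1-1}=[T_1-1]$, every round outside $\mathcal{A}_T$ has $t>T_1$, so $V_{t-1}\succeq\max\{x_{\max}^2,1\}h_{t-1}I_d$ is invertible and well-conditioned by Lemma~\ref{lem:Gram}.

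Establishing the self-normalized bound (b) is the crux. I would start from Lemma~\ref{lem:coupling} to replace $\norm{\widehat{\theta}_t-\theta_{\star}}_{V_t}$ by the error of the hypothetical sample-augmented estimator $\tilde{\theta}_t^{H(\check{\theta}_t)}$ on $\Scal_t$, then expand $\tilde{\theta}_t^{H(\check{\theta}_t)}-\theta_{\star}=V_t^{-1}\sum_{s}\sum_{i}(\tilde{W}_{i,s}^{H(\check{\theta}_t)}-Z_{i,s}^{\top}\theta_{\star})Z_{i,s}$ into two pieces. The first is the doubly-robust correction $V_t^{-1}\sum_{s}\sum_{i}\bigl(1-\mathbb{I}(\tilde{a}_s=i)/\phi_{i,s}\bigr)Z_{i,s}Z_{i,s}^{\top}(\check{\theta}_t-\theta_{\star})$, which is conditionally mean zero; its $V_t$-norm is controlled by a martingale/Bernstein bound in terms of $\norm{\check{\theta}_t-\theta_{\star}}$, and $\check{\theta}_t$ (the ridge estimator~\eqref{eq:impute} with regularizer $\gamma$) satisfies $\norm{\check{\theta}_t-\theta_{\star}}=O(\sqrt{\gamma}\,\theta_{\max}+\gamma^{-1/2}\sigma\sqrt{d\log(T/\delta)})$. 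The second is the inverse-probability-weighted noise $V_t^{-1}\sum_{s}\phi_{\tilde{a}_s,s}^{-1}\eta_{a_s,s}Z_{\tilde{a}_s,s}$, to which I would apply the self-normalized martingale inequality of \citet{abbasi2011improved}; this yields a bound in the norm of the random inverse-weighted Gram matrix $\sum_{s}\phi_{\tilde{a}_s,s}^{-2}Z_{\tilde{a}_s,s}Z_{\tilde{a}_s,s}^{\top}$, so a matrix-Chernoff step is needed to show this matrix dominates a constant fraction of $V_t$. That step goes through precisely because Lemma~\ref{lem:Gram} guarantees $\lambda_{\min}(V_t)\ge\max\{x_{\max}^2,1\}h_t$ with $h_t$ as in~\eqref{eq:h_t} large enough to absorb the relative error of the matrix concentration (the constant $\tfrac{1}{2}-e^{-1}$ in~\eqref{eq:h_t} originates here), and the weight $\phi_{N_s,s}^{-1}=\gamma^{-1}$ produces the $\sigma/\gamma$ factor. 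Collecting the two pieces gives $\beta_t\le 5\sqrt{2}\,\theta_{\max}+6\sigma\gamma^{-1}\sqrt{2d\log(2T/\delta)}$.

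For a non-augmentation round $t>T_1$, since $a_t=\arg\max_k X_{k,t}^{\top}\tilde{\theta}_{k,t}$ I would decompose
\[
\Regret{t}=\bigl(X_{a_t^{\star},t}^{\top}\theta_{\star}-X_{a_t^{\star},t}^{\top}\tilde{\theta}_{a_t^{\star},t}\bigr)+\bigl(X_{a_t^{\star},t}^{\top}\tilde{\theta}_{a_t^{\star},t}-X_{a_t,t}^{\top}\tilde{\theta}_{a_t,t}\bigr)+\bigl(X_{a_t,t}^{\top}\tilde{\theta}_{a_t,t}-X_{a_t,t}^{\top}\theta_{\star}\bigr),
\]
drop the middle (nonpositive) term, and bound the other two through $\widehat{\theta}_{t-1}$ by the triangle inequality using (b) and (c): the key point is that the self-normalized bound (b) at round $t-1$ gives $|X_{k,t}^{\top}(\widehat{\theta}_{t-1}-\theta_{\star})|\le\beta_{t-1}\norm{X_{k,t}}_{V_{t-1}^{-1}}$ for \emph{every} arm $k$, including the unplayed optimal arm, because $V_{t-1}$ contains the Gram contribution of all $K$ contexts by~\eqref{eq:Gram_equiv}. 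This produces $\Regret{t}\le(\beta_{t-1}+1)\bigl(\norm{X_{a_t^{\star},t}}_{V_{t-1}^{-1}}+\norm{X_{a_t,t}}_{V_{t-1}^{-1}}\bigr)$. Summing over $t\notin\mathcal{A}_T$, Cauchy--Schwarz over the at most $2T$ norm terms followed by a rank-$\ge1$ elliptical-potential argument for $V_t$ -- which increases by $\sum_k X_{k,s}X_{k,s}^{\top}$ at each such round -- bounds $\sum_{t\notin\mathcal{A}_T}\sum_{k\in\{a_t^{\star},a_t\}}\norm{X_{k,t}}_{V_{t-1}^{-1}}$ by $O(\sqrt{dT\log(T/d)})$, giving the second term of~\eqref{eq:regret_bound}. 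Adding the $\le 2(h_T+1)$ contribution from $\mathcal{A}_T$ finishes the proof.

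The main obstacle is the matrix-Chernoff step inside (b): one must show the inverse-probability-weighted empirical Gram matrix is, up to a constant, at least $V_t$, with a failure probability summable to $\delta$. In \citet{kim2021doubly} this is handed to the analysis by a minimum-eigenvalue assumption on the context covariance; here the required conditioning has to be created entirely by the $h_t$ augmentation rounds, so the argument is tight and the constants in~\eqref{eq:h_t}, in $M_s$, and in $v_t$ must all be calibrated so the three failure events combine to $3\delta$ and the resulting $\beta_t$ matches the coefficient in~\eqref{eq:regret_bound}. A secondary subtlety is that $\check{\theta}_t$ itself depends on the resampled actions $\{a_s(M_s)\}_{s\le t}$, so the coupling reduction of Lemma~\ref{lem:coupling} must be carried out for a functional that also contains $\check{\theta}_t$.
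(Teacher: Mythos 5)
Your proposal is correct, and for most of the argument it follows the paper's own skeleton: coupling via Lemma~\ref{lem:coupling}, a self-normalized bound for $\widehat{\theta}_t$ in the all-arms Gram matrix $V_t$ built from the decomposition of Lemma~\ref{lem:error_decomposition}, matrix concentration made possible by the $h_t$ augmentation rounds (this is where $\tfrac{1}{2}-e^{-1}$ enters) together with the Abbasi--Yadkori bound carrying the $\sigma/\gamma$ weight (Theorem~\ref{thm:self}), a trivial charge of order $2h_T$ for rounds in $\Acal_T$, and Cauchy--Schwarz combined with the maximal elliptical potential bound of Lemma~\ref{lem:elliptical}. The genuine difference is the Thompson-sampling step: the paper defines the low-regret set $\Pcal_t$ in~\eqref{eq:low_regret_set} and shows (Lemma~\ref{lem:super_unsaturated_arms}) that the argmax of the perturbed rewards lies in $\Pcal_t$ with probability $1-\delta/(t+1)^2$, after which $\Regret{t}$ is bounded by the radius $2x_t\norm{\widehat{\theta}_{t-1}-\theta_\star}_{V_{t-1}}+\sqrt{2}\,x_t$; you instead use the three-term decomposition whose middle term is nonpositive because $a_t$ maximizes $X_{k,t}^\top\tilde{\theta}_{k,t}$, plus a Gaussian deviation event union-bounded over all $K$ arms (so it covers the data-dependent index $a_t$ and the unplayed $a_t^{\star}$), yielding $\Regret{t}\le 2(\beta_{t-1}+1)x_t$ --- the same order and essentially the same constants. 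Both mechanisms avoid anti-concentration (hence no $\sqrt{d}$ inflation of $v_t$), and both hinge on the structural fact you correctly single out: $V_{t-1}$ contains the Gram contribution of every arm, so $\norm{X_{a_t^{\star},t}}_{V_{t-1}^{-1}}\le x_t$ is summable by Lemma~\ref{lem:elliptical}, whereas with the classical played-contexts Gram matrix this term is uncontrollable, which is exactly why your shortcut is unavailable to standard \texttt{LinTS}. Your route is the more elementary one; the paper's $\Pcal_t$ formulation buys a clean conditional statement $\PP(a_t\in\Pcal_t\mid\Hcal_t)$ that needs only one-sided concentration for arms outside $\Pcal_t$. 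Remaining caveats are minor and largely shared with the paper: for your step (b) the paper does not bound $\norm{\check{\theta}_t-\theta_\star}$ separately but substitutes $\check{\theta}_t-\theta_\star=(A_t+I_d)^{-1}(S_t-\theta_\star)$ on $\Scal_t$ and bounds the single operator $P_t$ (Lemma~\ref{lem:P_t_bound}), which is the cleaner execution of the martingale-plus-matrix-Chernoff idea you sketch; your $\delta$-accounting across events (a)--(c) needs the same calibration slack the paper itself glosses over (Theorem~\ref{thm:self} alone already spends $3\delta$); and your Gaussian event must be applied to the draws of the accepted \texttt{CAS} iteration, which is legitimate because the acceptance test $\{\tilde{a}_t(m)=N_t\}$ is independent of the Gaussian samples.
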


The leading order term of the regret bound is \(O(d\sqrt{T} \log T)\), which matches the minimax lower bound \(\Omega(d\sqrt{T})\) established in \citet{lattimore2020bandit}, up to logarithmic factors. This result provides the first nearly minimax-optimal regret guarantee for linear contextual bandits under arbitrary context distributions.

For comparison, \cite{kim2021doubly} achieved $\tilde{O}(d\sqrt{T})$ regret bound under IID contexts with some special distributions of which the minimum eigenvalue of the average covariance matrix is $\Omega(1/d)$.
\citet{kim2023squeeze} achieved a regret bound of \(O(\sqrt{dT \log T})\) under the assumption of IID contexts with a strictly positive minimum eigenvalue for their average covariance matrix. Similarly, \citet{huix2023tight} obtained a regret bound of \(\tilde{O}(d\sqrt{T})\), but their analysis assumes a Gaussian prior on the unknown parameter \(\theta_{\star}\).

Earlier works such as \citet{kim2021doubly} and \citet{agrawal2013thompson} impose unit-norm assumptions on the contexts.
In contrast, the regret bound~\eqref{eq:regret_bound} does not require normalization of the context vectors. The only assumption made is that the absolute inner product between any context and the true parameter is bounded, i.e., \( |X_{k,t}^\top \theta_{\star}| \leq 1 \). Notably, increasing the context norm bound \(x_{\max}\) does not cause the regret to grow linearly, avoiding the scaling issues encountered in prior analyses.

The key technical contributions enabling this result include: (i) the development of a self-normalized bound for the HCSA estimator using a carefully constructed Gram matrix (Section~\ref{sec:self}), (ii) the identification of a set of low-regret arms selected with high probability (Section~\ref{sec:low_regret_arms}), and (iii) a novel maximal elliptical potential bound based on the augmented Gram matrix \(V_t\) (Section~\ref{sec:max_elliptical}).

\subsection{A Self-Normalized Bound for the Proposed Estimator}
\label{sec:self}
With the coupling inequality (Lemma~\ref{lem:coupling}), we can bound the error of the proposed estimator by obtaining an error bound for the hypothetical estimator, which is proven in the following lemma.

\begin{lemma}[A self-normalized bound of the HSA estimator]
\label{lem:error_decomposition}
For each $t\ge1$, define the matrix $A_t:=\sum_{s=1}^{t}\phi_{\tilde{a}_s,s}^{-1}Z_{\tilde{a}_s,s}Z_{\tilde{a}_s,s}^\top $.
and.
Then the self-normalized bound of the hypothetical sample augmented estimator is decomposed as:
\[
\norm{\tilde{\theta}_t^{H(\check{\theta}_t)}- \theta_{\star}}_{V_t} \le \norm{V_t^{-1/2}(V_t-A_t)(\check{\theta}_t-\theta_{\star})}_2 + \norm{\sum_{s=1}^{t} \phi_{\tilde{a}_s,s}^{-1}(W_{\tilde{a}_s,s}-Z_{\tilde{a}_s,s}^\top \theta_{\star})Z_{\tilde{a}_s,s}}_{V_t^{-1}}
\]
\end{lemma}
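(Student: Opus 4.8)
The plan is to turn the definition~\eqref{eq:Hypo_DR} of $\tilde{\theta}_t^{H(\check{\theta}_t)}$ into an explicit error expression and then peel off the two advertised terms by an elementary manipulation of the inverse-probability-weighted pseudo-rewards. First I would note that $V_t=\sum_{s=1}^{t}\sum_{i=1}^{N_s}Z_{i,s}Z_{i,s}^\top$ is positive definite (by Lemma~\ref{lem:Gram} when $t\ge T_1$; for smaller $t$ every round carries the orthogonal-basis augmentation, so $V_t\succeq\max\{x_{\max}^2,1\}\,t\,I_d\succ0$ as well), hence invertible, so that $V_t^{-1}V_t\theta_{\star}=\theta_{\star}$. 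Subtracting $\theta_{\star}=V_t^{-1}\sum_{s=1}^{t}\sum_{i=1}^{N_s}Z_{i,s}Z_{i,s}^\top\theta_{\star}$ from~\eqref{eq:Hypo_DR} gives
\[
\tilde{\theta}_t^{H(\check{\theta}_t)}-\theta_{\star}=V_t^{-1}\sum_{s=1}^{t}\sum_{i=1}^{N_s}\left(\tilde{W}_{i,s}^{H(\check{\theta}_t)}-Z_{i,s}^\top\theta_{\star}\right)Z_{i,s}.
\]

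Second, I would expand the centered pseudo-reward: substituting $W_{i,s}=Z_{i,s}^\top\theta_{\star}+\eta_{a_s,s}$ into~\eqref{eq:HDRY} and regrouping the $\check{\theta}_t$- and $\theta_{\star}$-terms yields, for every $i\in[N_s]$,
\[
\tilde{W}_{i,s}^{H(\check{\theta}_t)}-Z_{i,s}^\top\theta_{\star}=\left(1-\frac{\mathbb{I}(\tilde{a}_s=i)}{\phi_{i,s}}\right)Z_{i,s}^\top(\check{\theta}_t-\theta_{\star})+\frac{\mathbb{I}(\tilde{a}_s=i)}{\phi_{i,s}}\,\eta_{a_s,s}.
\]
Multiplying by $Z_{i,s}$ and summing over $i\in[N_s]$, the weighted indicator sums collapse because only the $i=\tilde{a}_s$ term is nonzero: $\sum_{i=1}^{N_s}\phi_{i,s}^{-1}\mathbb{I}(\tilde{a}_s=i)Z_{i,s}=\phi_{\tilde{a}_s,s}^{-1}Z_{\tilde{a}_s,s}$ and $\sum_{i=1}^{N_s}\phi_{i,s}^{-1}\mathbb{I}(\tilde{a}_s=i)Z_{i,s}Z_{i,s}^\top=\phi_{\tilde{a}_s,s}^{-1}Z_{\tilde{a}_s,s}Z_{\tilde{a}_s,s}^\top$. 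Combining this with $\eta_{a_s,s}=W_{\tilde{a}_s,s}-Z_{\tilde{a}_s,s}^\top\theta_{\star}$ gives the per-round identity
\[
\sum_{i=1}^{N_s}\left(\tilde{W}_{i,s}^{H(\check{\theta}_t)}-Z_{i,s}^\top\theta_{\star}\right)Z_{i,s}=\left(\sum_{i=1}^{N_s}Z_{i,s}Z_{i,s}^\top-\phi_{\tilde{a}_s,s}^{-1}Z_{\tilde{a}_s,s}Z_{\tilde{a}_s,s}^\top\right)(\check{\theta}_t-\theta_{\star})+\phi_{\tilde{a}_s,s}^{-1}\left(W_{\tilde{a}_s,s}-Z_{\tilde{a}_s,s}^\top\theta_{\star}\right)Z_{\tilde{a}_s,s}.
\]
Summing over $s\in[t]$ and using $\sum_{s=1}^{t}\sum_{i=1}^{N_s}Z_{i,s}Z_{i,s}^\top=V_t$ together with the definition $A_t=\sum_{s=1}^{t}\phi_{\tilde{a}_s,s}^{-1}Z_{\tilde{a}_s,s}Z_{\tilde{a}_s,s}^\top$ produces
\[
\tilde{\theta}_t^{H(\check{\theta}_t)}-\theta_{\star}=V_t^{-1}(V_t-A_t)(\check{\theta}_t-\theta_{\star})+V_t^{-1}\sum_{s=1}^{t}\phi_{\tilde{a}_s,s}^{-1}\left(W_{\tilde{a}_s,s}-Z_{\tilde{a}_s,s}^\top\theta_{\star}\right)Z_{\tilde{a}_s,s}.
\]

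Finally, I would take the $V_t$-norm of both sides, apply the triangle inequality, and simplify each summand using $\norm{V_t^{-1}w}_{V_t}=\sqrt{w^\top V_t^{-1}w}=\norm{V_t^{-1/2}w}_2$. Applied to $w=(V_t-A_t)(\check{\theta}_t-\theta_{\star})$ this produces the first term $\norm{V_t^{-1/2}(V_t-A_t)(\check{\theta}_t-\theta_{\star})}_2$, and applied to $w=\sum_{s=1}^{t}\phi_{\tilde{a}_s,s}^{-1}(W_{\tilde{a}_s,s}-Z_{\tilde{a}_s,s}^\top\theta_{\star})Z_{\tilde{a}_s,s}$ it produces the second term written in the $V_t^{-1}$-norm, which is exactly the claimed decomposition.

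I do not expect a substantive obstacle here: the lemma is purely algebraic bookkeeping, with no concentration step — the randomness is controlled later when the two terms are bounded separately. The points that will require care are (i) that the identity must hold for an \emph{arbitrary} fixed (data-dependent) vector $\check{\theta}_t$, which is precisely what the coupling inequality of Lemma~\ref{lem:coupling} consumes, so one must resist substituting any particular form of $\check{\theta}_t$; (ii) the collapse of the inverse-probability-weighted indicator sums, which relies on exactly one of the indicators $\mathbb{I}(\tilde{a}_s=i)$, $i\in[N_s]$, being nonzero; and (iii) the fact that the sub-Gaussian noise $\eta_{a_s,s}$ is shared across all $i\in[N_s]$ in $W_{i,s}=Z_{i,s}^\top\theta_{\star}+\eta_{a_s,s}$, which is what lets it factor cleanly out of the inner sum and then be re-expressed through $Z_{\tilde{a}_s,s}$ as $W_{\tilde{a}_s,s}-Z_{\tilde{a}_s,s}^\top\theta_{\star}$.
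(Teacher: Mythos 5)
Your proposal is correct and follows essentially the same route as the paper's proof: expand the centered pseudo-rewards from \eqref{eq:HDRY}, collapse the inverse-probability-weighted indicator sums into $V_t-A_t$ and the noise term, then apply the triangle inequality and the identity $\norm{w}_{V_t^{-1}}=\norm{V_t^{-1/2}w}_2$. The only cosmetic difference is that you work with the error vector $\tilde{\theta}_t^{H(\check{\theta}_t)}-\theta_{\star}$ and then take its $V_t$-norm, whereas the paper writes the same identity directly as a $V_t^{-1}$-norm of the weighted sum; the algebra is identical.
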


The proof is in Appendix~\ref{sec:error_decomposition_proof}.
The decomposition shows the two sources of error for the HDR estimator for rewards of all arms: (i) from the reference estimator \(\check{\theta}\) used as an reference estimator in HDR pseudo-rewards~\eqref{eq:HDRY}, and (ii) the noise error of the rewards. 
In error term (i), \(V_t^{-1/2}(V_t - A_t)\) is the matrix martingale with bounded eigenvalues, which is bounded by the newly developed matrix concentration inequality (Lemma \ref{lem:matrix_neg}).
The error term (ii) is bounded by a modified martingale inequality developed by \citet{abbasi2011improved}.

With the suitable choice of \(\check{\theta}_t\), we obtain an \(O(\sqrt{d \log t})\) error bound for $\tilde{\theta}_{t}^{H(\check{\theta}_t)}$ estimator, which is normalized by the novel augmented Gram matrix \(V_t\) instead of the conventional Gram matrix that includes only selected contexts.

\begin{theorem}[Self-Normalized Bound for the HCSA Estimator]
\label{thm:self}
With probability at least \(1 - 3\delta\), the estimator defined in~\eqref{eq:A_estimator} satisfies
\[
\norm{\widehat{\theta}_t - \theta_{\star}}_{V_t} \leq 5\theta_{\max} + \frac{6\sigma}{\gamma} \sqrt{d \log\frac{1 + t}{\delta}}
\]
for all \( t \geq T_1 \).
\end{theorem}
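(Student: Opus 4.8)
The plan is to combine the coupling inequality (Lemma~\ref{lem:coupling}) with the error decomposition (Lemma~\ref{lem:error_decomposition}) and then bound the two resulting terms separately, each on a high-probability event. By Lemma~\ref{lem:coupling}, it suffices to bound $\norm{\tilde{\theta}_t^{H(\check{\theta}_t)}-\theta_\star}_{V_t}$ on the event $\Scal_t$, at the cost of an extra $\delta$ in the failure probability. By Lemma~\ref{lem:error_decomposition}, this self-normalized error splits as
\[
\norm{\tilde{\theta}_t^{H(\check{\theta}_t)}-\theta_\star}_{V_t} \le \underbrace{\norm{V_t^{-1/2}(V_t-A_t)(\check{\theta}_t-\theta_\star)}_2}_{\text{(I)}} + \underbrace{\norm{\sum_{s=1}^{t}\phi_{\tilde{a}_s,s}^{-1}(W_{\tilde{a}_s,s}-Z_{\tilde{a}_s,s}^\top\theta_\star)Z_{\tilde{a}_s,s}}_{V_t^{-1}}}_{\text{(II)}},
\]
with $A_t=\sum_{s=1}^t\phi_{\tilde a_s,s}^{-1}Z_{\tilde a_s,s}Z_{\tilde a_s,s}^\top$. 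I would allocate $\delta$ of the failure budget to each of (I) and (II), plus the $\delta$ from coupling, giving the stated $1-3\delta$.

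For term (II), the summands $\phi_{\tilde a_s,s}^{-1}(W_{\tilde a_s,s}-Z_{\tilde a_s,s}^\top\theta_\star)Z_{\tilde a_s,s}$ form a martingale difference sequence: conditionally on $\Hcal_s$ and on $\tilde a_s$, the quantity $W_{\tilde a_s,s}-Z_{\tilde a_s,s}^\top\theta_\star=\eta_{a_s,s}$ is mean-zero $\sigma$-sub-Gaussian, and the inverse-probability weighting makes the vectors $\phi_{\tilde a_s,s}^{-1}Z_{\tilde a_s,s}$ have conditional second moment $\sum_{i=1}^{N_s}\phi_{i,s}^{-1}Z_{i,s}Z_{i,s}^\top$, which after summation is dominated by (a constant times) $V_t$ since $\phi_{i,s}^{-1}\le (N_s-1)/(1-\gamma)$ for $i<N_s$ and $\gamma^{-1}$ for $i=N_s$; here I expect a factor $1/\gamma$ to enter, matching the $6\sigma/\gamma$ in the bound. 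Applying the self-normalized martingale tail bound of \citet{abbasi2011improved} (in the form alluded to in the remark after Lemma~\ref{lem:error_decomposition}), together with the determinant/elliptical-potential control $\log\det(V_t) = O(d\log t)$ coming from the norm bounds on $Z_{i,s}$ and Lemma~\ref{lem:Gram}, yields term (II) $\le \frac{6\sigma}{\gamma}\sqrt{d\log\frac{1+t}{\delta}}$ on an event of probability at least $1-\delta$.

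For term (I), I would first bound the operator norm of the matrix martingale $V_t^{-1/2}(V_t-A_t)V_t^{-1/2}$ — equivalently control $V_t^{-1/2}(V_t-A_t)$ — by the matrix concentration inequality (Lemma~\ref{lem:matrix_neg}). The point is that $\EE[A_t\mid\Hcal_t\text{-filtration up to }s]$ equals $V_t$ in expectation over the $\tilde a_s$'s (since $\EE[\phi_{\tilde a_s,s}^{-1}Z_{\tilde a_s,s}Z_{\tilde a_s,s}^\top\mid\Hcal_s]=\sum_i Z_{i,s}Z_{i,s}^\top$), so $V_t-A_t$ is a sum of bounded, conditionally mean-zero self-adjoint increments, and the matrix Freedman/Bernstein-type bound gives $\norm{V_t^{-1/2}(V_t-A_t)V_t^{-1/2}}_{op}\le c$ for an absolute constant $c<1$ with high probability on all $t\ge T_1$ (Lemma~\ref{lem:Gram} is needed here to guarantee $V_t$ is well-conditioned enough that the increments are uniformly small relative to $V_t$). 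Then term (I) $\le c\,\norm{V_t^{1/2}(\check\theta_t-\theta_\star)}_2 \cdot (\text{something})$ — more precisely I would write $\text{(I)}\le\norm{V_t^{-1/2}(V_t-A_t)V_t^{-1/2}}_{op}\,\norm{\check\theta_t-\theta_\star}_{V_t}$ and then bound $\norm{\check\theta_t-\theta_\star}_{V_t}$. Since $\check\theta_t$ is the ridge estimator on played contexts only, by Lemma~\ref{lem:Gram} $V_t$ is comparable to $\sum_{s\notin\Acal_t}\sum_k X_{k,s}X_{k,s}^\top + h_t I_d \succeq \sum_{s\notin\Acal_t}X_{a_s,s}X_{a_s,s}^\top+\gamma I_d$, up to constants, so $\norm{\check\theta_t-\theta_\star}_{V_t}$ is controlled by the standard self-normalized ridge bound (again \citet{abbasi2011improved}) plus a $\norm{\theta_\star}_2\le\theta_{\max}$ bias term; accumulating the constants produces the $5\theta_{\max}$ term. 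I would take a union bound over the events for (I) and (II) (each uniform in $t\ge T_1$ via the $(t+1)^2$ choices already baked into $M_s$, $h_t$, and $v_t$).

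\textbf{Main obstacle.} The delicate step is term (I): one must show the operator-norm of $V_t^{-1/2}(V_t-A_t)V_t^{-1/2}$ is bounded by a \emph{constant strictly below~$1$} uniformly over $t\ge T_1$, because only then does the reference-estimator error get multiplied by an $O(1)$ factor rather than blowing up. This requires the matrix concentration inequality of Lemma~\ref{lem:matrix_neg} to interact correctly with the lower bound $V_t\succeq\max\{x_{\max}^2,1\}h_t I_d$ from Lemma~\ref{lem:Gram} — i.e., the orthogonal-basis augmentation on $\Acal_t$ (whose size $h_t=\Theta(d\log(dt/\delta)/(1-\gamma))$ is chosen precisely via~\eqref{eq:h_t}) must be large enough to dominate the variance of the IPW increments, whose per-round size scales like $(N_s-1)/(1-\gamma)\le d/(1-\gamma)$ times $\max\{x_{\max}^2,1\}$. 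Verifying that the constant $\tfrac{1}{2}-e^{-1}$ appearing in $h_t$ is exactly what the matrix Bernstein bound needs — and that the resulting constant stays below $1$ so the coupling with $\tilde\theta_t^{H(\check\theta_t)}$ and the geometric-series-type bookkeeping closes — is the crux of the argument.
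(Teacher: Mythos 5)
Your overall skeleton (coupling inequality, then the decomposition of Lemma~\ref{lem:error_decomposition}, then a union bound giving $1-3\delta$) matches the paper, and your treatment of the noise term is essentially the paper's Lemma~\ref{lem:S_t_bound}. But your handling of term (I) has a genuine gap. You propose $\text{(I)}\le\norm{V_t^{-1/2}(V_t-A_t)V_t^{-1/2}}_{2}\,\norm{\check\theta_t-\theta_\star}_{V_t}$ and then want to control $\norm{\check\theta_t-\theta_\star}_{V_t}$ by the standard ridge self-normalized bound. That bound, however, controls the ridge error only in the norm of the \emph{played-context} Gram matrix $B_t:=\sum_{s\le t}X_{a_s,s}X_{a_s,s}^\top+\gamma I_d$, whereas $V_t$ contains $\sum_{s\notin\Acal_t}\sum_{k}X_{k,s}X_{k,s}^\top$ plus the orthogonal augmentation and can exceed $B_t$ by an unbounded (growing with $t$) factor in directions the played arms never explore. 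Lemma~\ref{lem:Gram} gives you only $V_t\succeq$ (something containing $B_t$), i.e.\ the comparison in the \emph{wrong} direction for passing from $\norm{\cdot}_{B_t}$ to $\norm{\cdot}_{V_t}$; there is no $V_t\preceq cB_t$ in general, so $\norm{\check\theta_t-\theta_\star}_{V_t}$ cannot be bounded this way and the $5\theta_{\max}$ term does not follow. Relatedly, the obstacle you flag (needing $\norm{V_t^{-1/2}(V_t-A_t)V_t^{-1/2}}_2\le c<1$) is not what the argument actually requires.

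The paper avoids this trap with an algebraic substitution rather than a norm split: under the coupling event the reference error has the closed form $\check\theta_t-\theta_\star=(A_t+I_d)^{-1}(S_t-\theta_\star)$, where $S_t$ is exactly the same weighted-noise functional appearing in term (II) (on the coupled event $A_t=\gamma^{-1}\sum_s X_{a_s,s}X_{a_s,s}^\top$ and $S_t=\gamma^{-1}\sum_s\eta_{a_s,s}X_{a_s,s}$). Plugging this in, the whole error is bounded by $\bigl(\norm{P_t}_2+1\bigr)\norm{S_t}_{V_t^{-1}}+\norm{P_t}_2\,\theta_{\max}\lambda_{\min}(V_t)^{-1/2}$ with $P_t:=V_t^{-1/2}(V_t-A_t)(A_t+I_d)^{-1}V_t^{1/2}$. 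The matrix concentration inequality (Lemma~\ref{lem:matrix_neg}), combined with the $h_t$-regularization from Lemma~\ref{lem:Gram}, is used only to show $V_t^{-1/2}(A_t+I_d)V_t^{-1/2}\succeq\tfrac14 I_d$, hence $\norm{P_t}_2\le 5$ — a constant bound, with no need for it to be below $1$ — and a single application of the \citet{abbasi2011improved} bound to $\norm{S_t}_{V_t^{-1}}$ (which lives in the $V_t^{-1}$-norm, where $V_t\succeq B_t$ is the \emph{right} direction) yields $6\cdot\frac{\sigma}{\gamma}\sqrt{d\log\frac{1+t}{\delta}}+5\theta_{\max}$. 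To repair your proposal you would need to replace your bound on $\norm{\check\theta_t-\theta_\star}_{V_t}$ with this identity-based argument (or something equivalent that never measures the ridge error in the $V_t$-norm).
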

The proof is provided in Appendix~\ref{sec:tail_proof}. 
Unlike the classical self-normalized bound of \citet{abbasi2011improved}, which is normalized by the Gram matrix \( \sum_{s=1}^{t} X_{a_s,s} X_{a_s,s}^\top + I_d \) built solely from selected contexts, Theorem~\ref{thm:self} establishes a bound normalized by the full Gram matrix \( V_t \), which includes contexts from all \( K \) arms. 
While \citet{kim2023squeeze} also considered self-normalization with respect to a full Gram matrix, their estimator incorporates contexts from all arms only in a fraction of the rounds, and their analysis is restricted to IID contexts with a strictly positive-definite covariance matrix.
In contrast, Theorem~\ref{thm:self} applies to arbitrary (including non-IID, non-stationary) context sequences, establishing a uniform self-normalized bound with a full Gram matrix. This result enables a novel regret analysis of Thompson Sampling that yields a \(\tilde{O}(d\sqrt{T})\) bound for \texttt{HCSA+TS}, applicable under arbitral context distributions.

\subsection{Low-Regret Arms with a High-Probability Guarantee}
\label{sec:low_regret_arms}
For each \( k \in [K] \) and \( t \in [T] \), define the instantaneous regret gap between the optimal arm and arm \( k \) as
\[
\Delta_{k,t} := X_{a_t^{\star},t}^\top \theta_{\star} - X_{k,t}^\top \theta_{\star}.
\]
Using this, we define the set of low-regret arms at round \( t \) as
\begin{equation}
\mathcal{P}_{t} := \left\{ k \in [K] : \Delta_{k,t} \le 2x_t \norm{\widehat{\theta}_{t-1} - \theta_{\star}}_{V_{t-1}} + \sqrt{\norm{X_{a_t^{\star},t}}_{V_{t-1}^{-1}}^{2} + \norm{X_{k,t}}_{V_{t-1}^{-1}}^{2}} \right\},
\label{eq:low_regret_set}
\end{equation}
where \( x_t := \max_{k \in [K]} \|X_{k,t}\|_{V_{t-1}^{-1}} \).
The self-normalized confidence bound with respect to \( V_t \), which includes contexts beyond the selected arms, allows the construction of an effective set \( \mathcal{P}_t \) that has lower regret than arms in sets built from Gram matrices based solely on selected contexts such as \( \sum_{s=1}^{t} X_{a_s,s} X_{a_s,s}^\top + I_d \).

The following lemma provides a high-probability guarantee that the arm selected by Algorithm~\ref{alg:ATS} belongs to the low-regret set.
\begin{lemma}[High-Probability Selection of Low-Regret Arms]
\label{lem:super_unsaturated_arms}
Let \( a_t \) be the arm selected by Algorithm~\ref{alg:ATS}, and let \( \mathcal{P}_t \) be the set defined in~\eqref{eq:low_regret_set}. If the exploration parameter is set as \( v_t = \{2\log(K(t+1)^2 / \delta)\}^{-1/2} \), then
\[
\mathbb{P}\left(a_t \in \mathcal{P}_t \mid \mathcal{H}_t\right) \ge 1 - \frac{\delta}{(t+1)^2}.
\]
\end{lemma}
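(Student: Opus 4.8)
The plan is to follow the standard Thompson Sampling argument (in the style of \citet{agrawal2013thompson} and \citet{abeille2017linear}), adapted to the novel Gram matrix $V_{t-1}$ and the resampled candidate action. Fix a round $t$ and condition on $\mathcal{H}_t$. On the high-probability event that the self-normalized bound of Theorem~\ref{thm:self} holds at round $t-1$, the deterministic quantity $\|\widehat{\theta}_{t-1} - \theta_\star\|_{V_{t-1}}$ is controlled, so I may treat it as a fixed radius. Writing $\widehat{X}_k := X_{k,t}$ for brevity, the reward of arm $k$ under a sampled parameter $\tilde{\theta}_{k,t} \sim \mathcal{N}(\widehat{\theta}_{t-1}, v_{t-1}^2 V_{t-1}^{-1})$ is Gaussian with mean $\widehat{X}_k^\top \widehat{\theta}_{t-1}$ and standard deviation $v_{t-1}\|\widehat{X}_k\|_{V_{t-1}^{-1}}$. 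The key point is that $\widehat{X}_k^\top \widehat{\theta}_{t-1}$ is within $\|\widehat{X}_k\|_{V_{t-1}^{-1}}\|\widehat{\theta}_{t-1} - \theta_\star\|_{V_{t-1}}$ of the true mean $\widehat{X}_k^\top \theta_\star$, by Cauchy--Schwarz in the $V_{t-1}$ norm.

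The first main step is to show that, with probability at least $1 - \delta/(t+1)^2$ over a single draw of the sampled parameters, the action $\arg\max_k \widehat{X}_k^\top \tilde{\theta}_{k,t}$ lies in $\mathcal{P}_t$. I would argue this in two sub-parts. (i) \emph{Optimism on the optimal arm or a near-optimal competitor:} a Gaussian anti-concentration bound shows that $\widehat{X}_{a_t^\star}^\top \tilde{\theta}_{a_t^\star,t}$ exceeds its mean by at least $v_{t-1}\|\widehat{X}_{a_t^\star}\|_{V_{t-1}^{-1}}$ times a constant with constant probability; combined with the choice $v_{t-1} = \{2\log(K(t+1)^2/\delta)\}^{-1/2}$ this gives the desired inflation. (ii) \emph{Concentration on every arm simultaneously:} a Gaussian tail bound plus a union bound over the $K$ arms shows that, with probability $\ge 1 - \delta/(t+1)^2$, every $\widehat{X}_k^\top \tilde{\theta}_{k,t}$ stays within $v_{t-1}\sqrt{2\log(K(t+1)^2/\delta)}\cdot\|\widehat{X}_k\|_{V_{t-1}^{-1}} = \|\widehat{X}_k\|_{V_{t-1}^{-1}}$ of its own mean. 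On the intersection of these events, if the selected arm $a_t$ satisfied $\Delta_{a_t,t}$ larger than the threshold defining $\mathcal{P}_t$, then its sampled reward would be strictly smaller than that of the optimal arm (or of an arm already in $\mathcal{P}_t$ chosen as the comparator), contradicting the $\arg\max$; the two terms $2x_t\|\widehat{\theta}_{t-1}-\theta_\star\|_{V_{t-1}}$ and $\sqrt{\|\widehat{X}_{a_t^\star}\|_{V_{t-1}^{-1}}^2 + \|\widehat{X}_{a_t,t}\|_{V_{t-1}^{-1}}^2}$ in the definition~\eqref{eq:low_regret_set} are exactly what is needed to absorb, respectively, the estimation error in the posterior mean and the combined sampling fluctuations on the two arms.

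The second main step is to remove the effect of resampling. Algorithm~\ref{alg:cas} resamples the sampled parameters (hence the candidate action $a_t(m)$) until $\tilde{a}_t(m) = N_t$, and the output $a_t$ is the last candidate. Because the resampling of $\tilde a_t(m)$ is independent of the parameter draws and of the contexts given $\mathcal{H}_t$, the conditional law of the returned parameter sample is just the law of a single $\mathcal{N}(\widehat{\theta}_{t-1}, v_{t-1}^2 V_{t-1}^{-1})$ draw conditioned on an independent event $\{\tilde a_t = N_t\}$ — i.e., it is unchanged. Hence the event $\{a_t \in \mathcal{P}_t\}$ has the same conditional probability whether or not resampling occurs, provided resampling terminates with $\tilde a_t = N_t$; the only loss comes from the event that all $M_t$ trials fail, which by the choice of $M_t$ in~\eqref{eq:matching_event} has probability at most $\delta/(t+1)^2$ — but on that event the algorithm still outputs some $a_t(m-1)$, and one can either fold this into the failure budget or observe that the candidate on a failed run is still a fresh posterior draw, so $\{a_t\in\mathcal P_t\}$ still holds with the stated probability on the good concentration event. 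Combining the failure budgets from Step~1 and this step via a union bound yields $\mathbb{P}(a_t \in \mathcal{P}_t \mid \mathcal{H}_t) \ge 1 - \delta/(t+1)^2$ after rescaling $\delta$ by constants absorbed into the logarithmic factors.

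The main obstacle I anticipate is the careful bookkeeping in Step~2: one must verify that conditioning the posterior draw on the independent resampling-success event genuinely leaves its distribution invariant (this is where the IID-across-trials property noted before Lemma~\ref{lem:coupling} is essential), and that the definition of $\mathcal{P}_t$ with the $\|\widehat{\theta}_{t-1}-\theta_\star\|_{V_{t-1}}$ radius — rather than a worst-case constant — meshes correctly with the posterior-mean error bound coming from Theorem~\ref{thm:self}, so that the two additive terms in~\eqref{eq:low_regret_set} are tight enough to keep the subsequent elliptical-potential sum at order $d\sqrt{T}$. The anti-concentration constant and the exact matching of $v_{t-1}$ to the $\log(K(t+1)^2/\delta)$ factor is routine but must be done with care to land the clean $1 - \delta/(t+1)^2$ probability.
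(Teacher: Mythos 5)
Your proposal has two genuine problems, both in Step~1. First, the anti-concentration/optimism sub-step (i) is not only unnecessary but incompatible with the bound you are trying to prove: Gaussian anti-concentration only guarantees the optimal arm's sample is inflated with \emph{constant} probability, so any argument that intersects that event with your concentration event cannot yield $1-\delta/(t+1)^2$. The entire point of the lemma --- and the reason $v_t$ carries no $\sqrt{d}$ factor, unlike \citet{agrawal2013thompson} --- is that the threshold defining $\mathcal{P}_t$ in \eqref{eq:low_regret_set} already contains the term $2x_t\|\widehat{\theta}_{t-1}-\theta_{\star}\|_{V_{t-1}}$, an $\mathcal{H}_t$-measurable quantity (so no appeal to Theorem~\ref{thm:self} is needed inside this lemma either). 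By Cauchy--Schwarz this term absorbs the posterior-mean error, so for every $j\notin\mathcal{P}_t$ the \emph{means} already separate: $(X_{j,t}-X_{a_t^{\star},t})^{\top}\widehat{\theta}_{t-1}\le 2x_t\|\widehat{\theta}_{t-1}-\theta_{\star}\|_{V_{t-1}}-\Delta_{j,t}\le-\sqrt{\|X_{a_t^{\star},t}\|_{V_{t-1}^{-1}}^{2}+\|X_{j,t}\|_{V_{t-1}^{-1}}^{2}}$; only one-sided concentration is then required, no optimism at all. Second, your concentration sub-step (ii) is stated per arm with radius $\|X_{k,t}\|_{V_{t-1}^{-1}}$, but then the combined fluctuation on the pair $(a_t^{\star},j)$ is only bounded by $\|X_{a_t^{\star},t}\|_{V_{t-1}^{-1}}+\|X_{j,t}\|_{V_{t-1}^{-1}}$, which can exceed the threshold term $\sqrt{\|X_{a_t^{\star},t}\|_{V_{t-1}^{-1}}^{2}+\|X_{j,t}\|_{V_{t-1}^{-1}}^{2}}$ by a factor up to $\sqrt{2}$, so the claimed contradiction with the definition of $\mathcal{P}_t$ does not follow. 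The paper avoids this by working with the pairwise difference $Z_{j,t}:=\tilde{Y}_{a_t^{\star},t}-\tilde{Y}_{j,t}-(X_{a_t^{\star},t}-X_{j,t})^{\top}\widehat{\theta}_{t-1}$, which, because the samples $\tilde{\theta}_{k,t}$ are drawn independently across arms, is centered Gaussian with standard deviation exactly $v_t\sqrt{\|X_{a_t^{\star},t}\|_{V_{t-1}^{-1}}^{2}+\|X_{j,t}\|_{V_{t-1}^{-1}}^{2}}$; a single one-sided tail bound per $j\notin\mathcal{P}_t$ and a union bound over at most $K$ arms then give exactly $1-\delta/(t+1)^2$ with $v_t=\{2\log(K(t+1)^2/\delta)\}^{-1/2}$, with no rescaling of $\delta$ (your ``rescaling absorbed into logarithmic factors'' is a symptom of the accounting not closing).

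On Step~2, your treatment of the resampling is fine and in fact more explicit than the paper, which simply observes that the output of \texttt{CAS} is the maximizer of a fresh posterior draw and the acceptance event $\{\tilde{a}_t=N_t\}$ is independent of the parameter samples given $\mathcal{H}_t$, so the conditional law of $a_t$ equals that of $\tilde{M}_t:=\arg\max_{k}X_{k,t}^{\top}\tilde{\theta}_{k,t}$. But this does not repair the quantitative shortfall in Step~1; to fix the proof, drop the optimism step entirely and run the pairwise-difference concentration argument above.
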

The proof is deferred to Appendix~\ref{sec:low_regret_proof}.
In contrast to \citet{agrawal2013thompson}, where bounding the probability of selecting a saturated arm required setting \( v = \sqrt{9d \log(t/\delta)} \), thereby introducing a \(\sqrt{d}\)-scaling, Lemma~\ref{lem:super_unsaturated_arms} establishes that such dimensional dependence is unnecessary. 
By leveraging the structure of the HCSA estimator and its associated Gram matrix, the proposed approach guarantees high-probability selection of low-regret arms without incurring additional dependence of the variance parameter $v$ on the dimension \( d \)

\subsection{Maximal Elliptical Potential Bound}
\label{sec:max_elliptical}
While continuing the proof of the regret bound, we face a novel terms that needs to be analyzed.
By Lemma~\ref{lem:super_unsaturated_arms}, the algorithm selects arms from the low-regret set \(\Pcal_t\), i.e., \(a_t \in \Pcal_t\), with high probability. This leads to the following novel regret decomposition:

\[
\Regret{t} = \Diff{a_t}{t} \leq 2x_t \norm{\Estimator{t-1} - \theta_{\star}}_{V_{t-1}} + \sqrt{\norm{X_{a_t^{\star},t}}_{V_{t-1}^{-1}}^2 + \norm{X_{a_t,t}}_{V_{t-1}^{-1}}^2},
\]
for \(t \geq T_1\). Thus, the cumulative regret is bounded as follows:
\[
R(T) \leq 2h_T + \sum_{t\in[T]\setminus\Acal_T} \left\{ 2x_t \norm{\Estimator{t-1} - \theta_{\star}}_{V_{t-1}} + \sqrt{\norm{X_{a_t^{\star},t}}_{V_{t-1}^{-1}}^2 + \norm{X_{a_t,t}}_{V_{t-1}^{-1}}^2} \right\}.
\]
Because \(x_t := \max_{k\in[K]}\|X_{k,t}\|_{V^{-1}_{t-1}}\), we get:
\[
R(T) \leq 2h_T + \sum_{t\in[T]\setminus\Acal_T} \big( 2x_t \|\Estimator{t-1} - \theta_{\star}\|_{V_{t-1}} + \sqrt{2}x_t \big).
\]
Factoring out \(x_t\):
\[
R(T) \leq 2T_1 + \sum_{t=\in[T]\setminus\Acal_T} \big( 2 \|\Estimator{t-1} - \theta_{\star}\|_{V_{t-1}} + \sqrt{2} \big) x_t.
\]
Since we obtain $\|\Estimator{t-1} - \theta_{\star}\|_{V_{t-1}} = O(\sqrt{d \log t})$ from Theorem~\ref{thm:self}, we need a bound for $\sum_{t\in[T]\setminus\Acal_T} x_t$, which is in the following lemma.

\begin{lemma}[Maximal elliptical potential lemma]
\label{lem:elliptical}  
For \(x_t := \max_{k \in [K]} \|X_{k,t}\|_{V_{t-1}^{-1}}\), we have:  
\[
\sum_{t=T_1}^{T} x_t^2 \leq 2d \log\frac{T}{d}.
\]
\end{lemma}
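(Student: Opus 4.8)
The plan is to adapt the classical elliptical potential argument, the only nonstandard feature being that $x_t$ is a maximum over all $K$ arms while the Gram matrix $V_t$ is built from the hypothetical contexts rather than from the maximizing arms. Fix $t \ge T_1$ and set $k_t^\star := \arg\max_{k\in[K]}\norm{X_{k,t}}_{V_{t-1}^{-1}}$, so that $x_t^2 = X_{k_t^\star,t}^\top V_{t-1}^{-1}X_{k_t^\star,t}$. The first step is to show that the round-$t$ increment dominates this (data-dependent) rank-one term, i.e. $V_t \succeq V_{t-1} + X_{k_t^\star,t}X_{k_t^\star,t}^\top$: for a non-augmentation round $t \notin \mathcal{A}_t$ this is the identity \eqref{eq:Gram_equiv}, since $V_t - V_{t-1} = \sum_{k=1}^K X_{k,t}X_{k,t}^\top \succeq X_{k_t^\star,t}X_{k_t^\star,t}^\top$; for $t \in \mathcal{A}_t$ the orthogonal-basis block in \eqref{eq:new_contexts} gives $V_t - V_{t-1} \succeq \max\{x_{\max}^2,1\}I_d \succeq X_{k_t^\star,t}X_{k_t^\star,t}^\top$ because $\norm{X_{k,t}}_2 \le x_{\max}$.

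Next I would pass to a surrogate chain that accumulates only the argmax contexts: set $\hat V_{T_1-1} := V_{T_1-1}$ and $\hat V_t := \hat V_{t-1} + X_{k_t^\star,t}X_{k_t^\star,t}^\top$ for $t \ge T_1$. The domination above and a one-line induction give $V_{t-1} \succeq \hat V_{t-1}$, hence $x_t^2 \le \norm{X_{k_t^\star,t}}_{\hat V_{t-1}^{-1}}^2$. Because the definition of $T_1$ in \eqref{eq:T_1} makes all of rounds $1,\dots,T_1-1$ orthogonal-basis augmentation rounds, Lemma~\ref{lem:Gram} yields $V_{T_1-1} \succeq (T_1-1)\max\{x_{\max}^2,1\}I_d$; and since $h_1 = \Omega(d)$ in \eqref{eq:h_t} forces $T_1 - 1 \ge d$, we get $\hat V_{t-1} \succeq d\,x_{\max}^2 I_d$, so in particular $V_{t-1}$ is invertible and $x_t^2 \le \norm{X_{k_t^\star,t}}_2^2/x_{\max}^2 \le 1$.

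With these two steps in hand, the pair $\bigl((X_{k_t^\star,t})_{t=T_1}^T, (\hat V_t)_{t=T_1-1}^T\bigr)$ is exactly the setting of the standard elliptical potential lemma (cf. \citealp{abbasi2011improved}). Using $z \le 2\log(1+z)$ on $[0,1]$ together with the matrix determinant lemma $\det\hat V_t = \det\hat V_{t-1}\bigl(1+\norm{X_{k_t^\star,t}}_{\hat V_{t-1}^{-1}}^2\bigr)$, the telescoping sum gives
\[
\sum_{t=T_1}^T x_t^2 \;\le\; \sum_{t=T_1}^T \norm{X_{k_t^\star,t}}_{\hat V_{t-1}^{-1}}^2 \;\le\; 2\log\frac{\det\hat V_T}{\det V_{T_1-1}}.
\]
I would then finish with the determinant--trace inequality: $\det\hat V_T \le \bigl(\Trace{\hat V_T}/d\bigr)^d$ with $\Trace{\hat V_T} \le \Trace{V_{T_1-1}} + Tx_{\max}^2$, while $\det V_{T_1-1}^{1/d} \ge (T_1-1)\max\{x_{\max}^2,1\} \ge d\,x_{\max}^2$; combining these with $T \ge T_1 = \Omega(d)$ collapses the right-hand side to $2d\log(T/d)$.

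The main obstacle is the first step. In the classical elliptical potential lemma the Gram matrix is built from precisely the vectors whose $V^{-1}$-norms are being summed, but here the summands involve the a priori unknown arm $k_t^\star$ whereas $V_t$ is assembled from the hypothetical contexts; reconciling this is exactly what the construction of Section~\ref{sec:hypo_contexts} buys, via \eqref{eq:Gram_equiv} (whose round-$t$ increment reproduces the Gram matrix of all $K$ original contexts, hence dominates $X_{k_t^\star,t}X_{k_t^\star,t}^\top$) and the orthogonal-basis augmentation on rounds in $\mathcal{A}_t$. A secondary point is that one must not telescope against the actual $\det V_t$: since $\Trace{V_T}$ scales with $K$, that would inject a spurious $\log K$ factor, which is why the surrogate $\hat V_t$ over the well-conditioned base $V_{T_1-1}$ (carrying its $\Omega(d)$ orthogonal blocks) is introduced so that the determinant--trace step closes at $2d\log(T/d)$.
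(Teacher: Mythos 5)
Your proposal is correct and follows essentially the same route as the paper: both dominate $V_t$ from below by a surrogate Gram chain that accumulates only the argmax contexts $X_{k_t^\star,t}$ on top of a regularizer supplied by the orthogonal-basis augmentation rounds, then apply the standard determinant telescoping together with the trace--determinant (AM--GM) bound. The differences are cosmetic: the paper anchors its surrogate $W_t$ at $\max\{x_{\max}^2,1\}I_d$ (from round $1$) and cites Lemma~11 of \citet{abbasi2011improved}, whereas you anchor at $V_{T_1-1}$ and redo the telescoping by hand, which incidentally also covers the augmentation rounds in $\Acal_T$ occurring after $T_1$ that the paper's own proof (summing over $[T]\setminus\Acal_T$) quietly skips.
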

The proof is in Appendix~\ref{sec:proof_elliptical}.
Previous elliptical lemmas only obtain the bounds the normalized norm of the selected contexts $X_{a_t,t}$, while we need a bound for $\max_{k\in{K]}}\|X_{k,t}\|_{V_t}$.
The maximal elliptical potential lemma bounds the normalized norm of the contexts for all $K$ arms, which previous analyses could not bound effectively. 
This is possible because we augmented suitable sample to obtain the Gram matrix \(V_t\) consisting of contexts from all $K$ arms.

\section{Experimental Results}
\label{sec:experiment}
This section is for evaluating the empirical performance of the proposed algorithm, \texttt{HCSA+TS}, against several benchmark algorithms for LinCB with simulated data. 
The benchmarks include \texttt{LinTS}, \texttt{LinUCB}, \texttt{DRTS} \citep{kim2021doubly}, \texttt{HyRan} \citep{kim2023squeeze}, and \texttt{Sup\texttt{LinUCB}} \citep{chu2011contextual}.

\begin{figure}[t]

\centering
\subfigure[Regret comparison ($d=10$, $K=20$)]{{\includegraphics[width=0.48\textwidth]{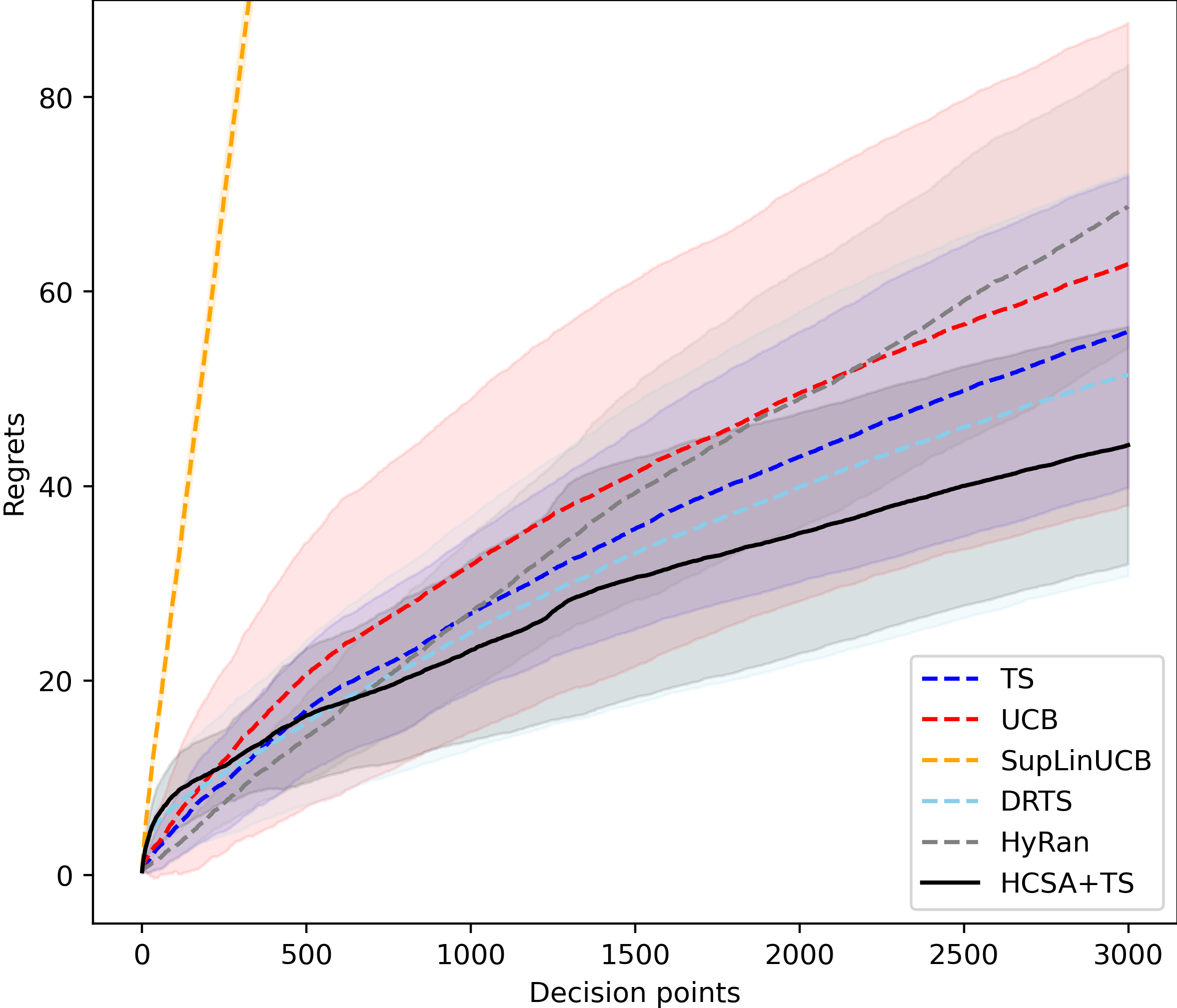}
}}
\subfigure[Regret comparison ($d=30$, $K=20$)]{{\includegraphics[width=0.48\textwidth]{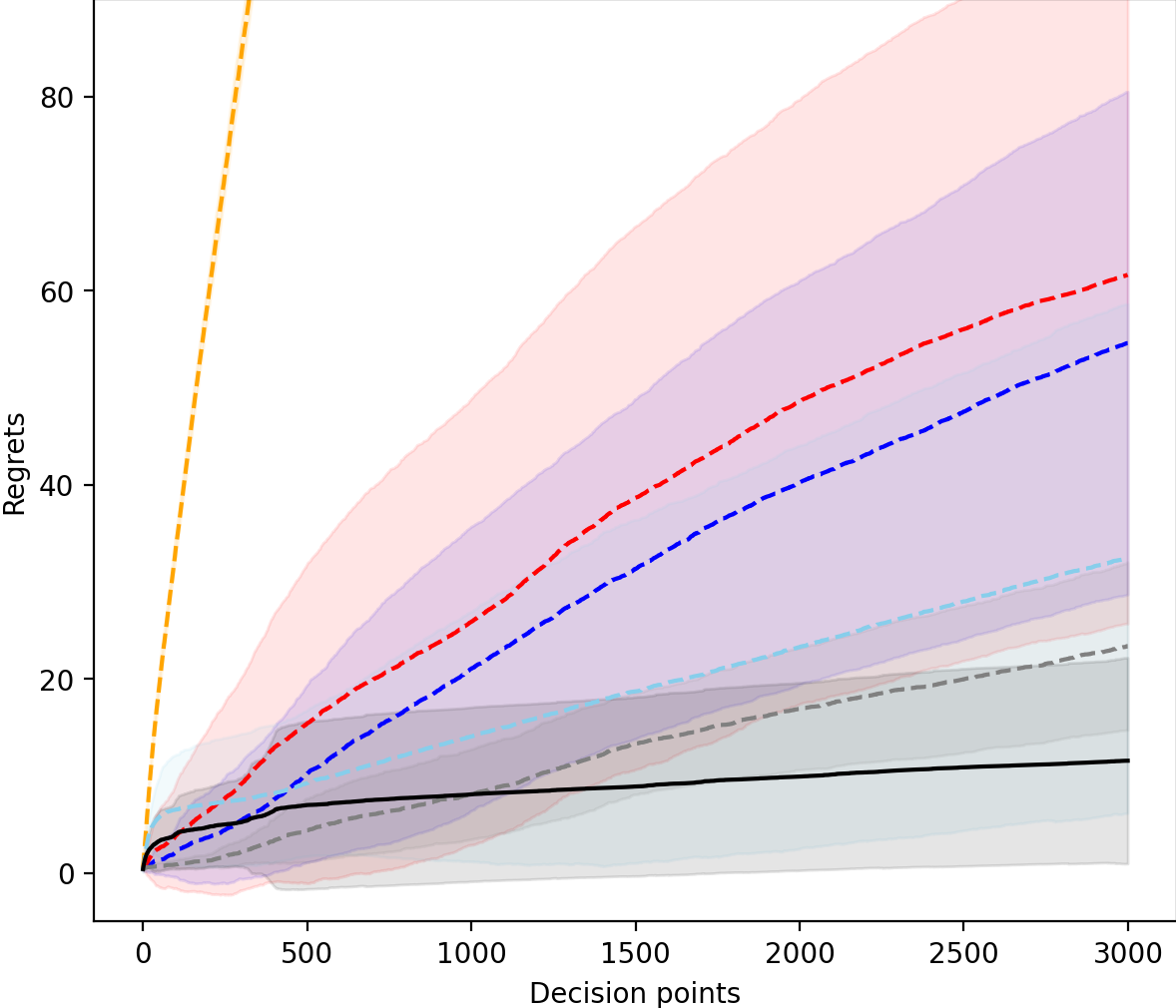}
}}
\\
\subfigure[Regret comparison ($d=10$, $K=30$)]{{\includegraphics[width=0.48\textwidth]{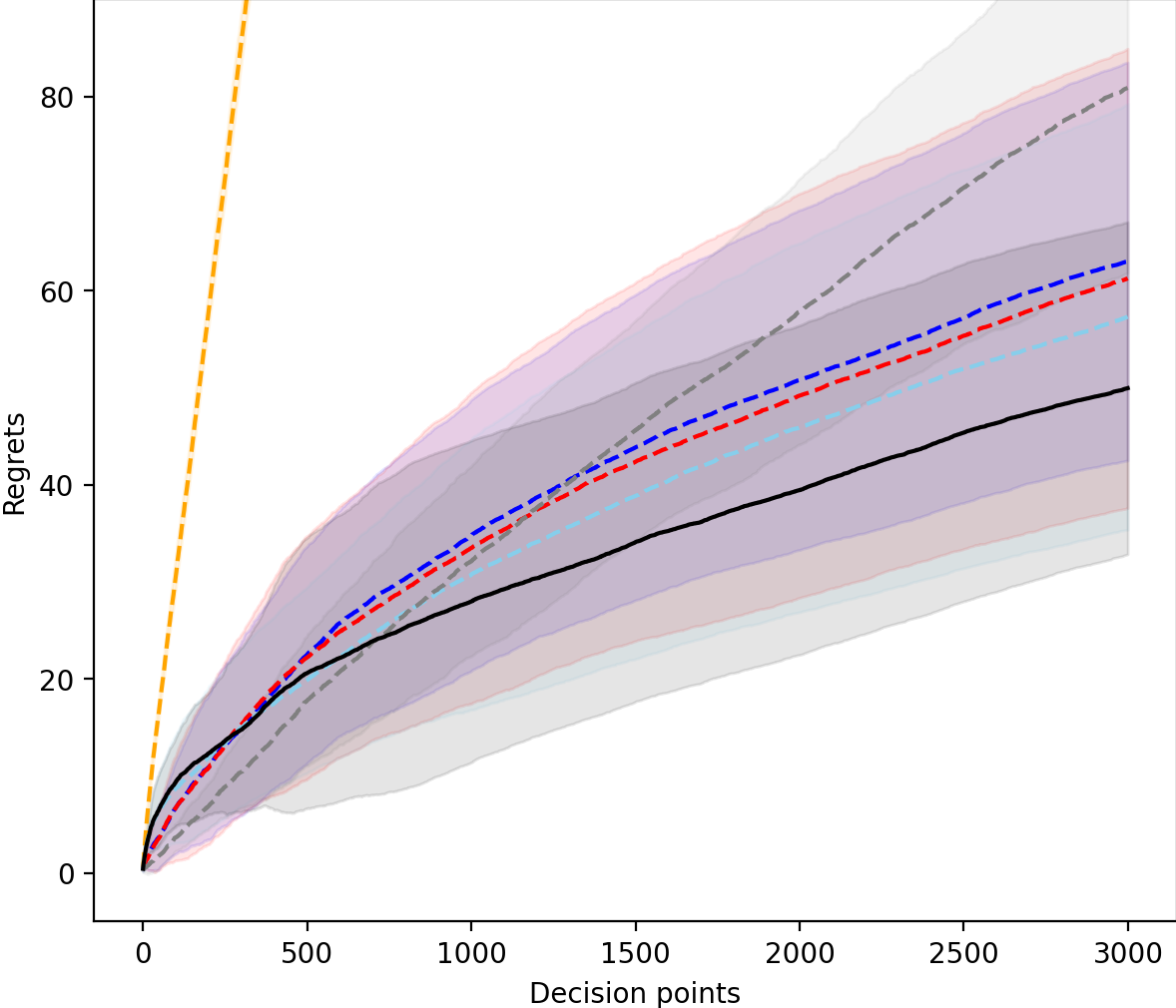}
}}
\subfigure[Regret comparison ($d=30$, $K=30$)]{{\includegraphics[width=0.48\textwidth]{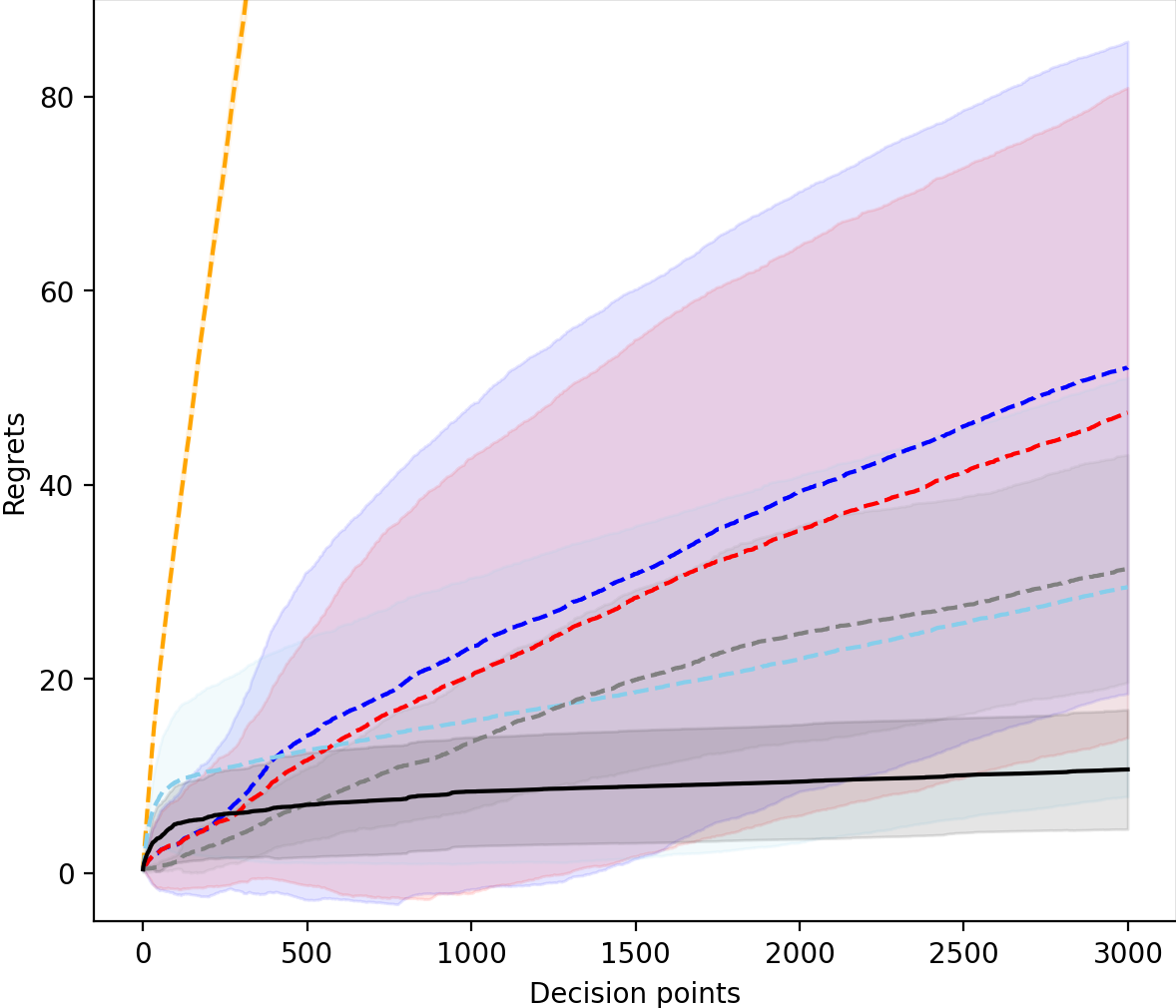}
}}
\caption{\label{fig:comparison} Comparison of the regrets of the proposed \texttt{HCSA+TS} algorithm with other benchmark methods.  
The lines represent the average, and the shaded areas indicate the standard deviation based on twenty experiments.  
The results demonstrate that the proposed \texttt{HCSA+TS} effectively identifies the optimal arm using orthogonal regularization.}
\end{figure}

\begin{figure}[t]
\centering
\subfigure[Prediction error comparison ($d=10$, $K=20$)]{{\includegraphics[width=0.48\textwidth]{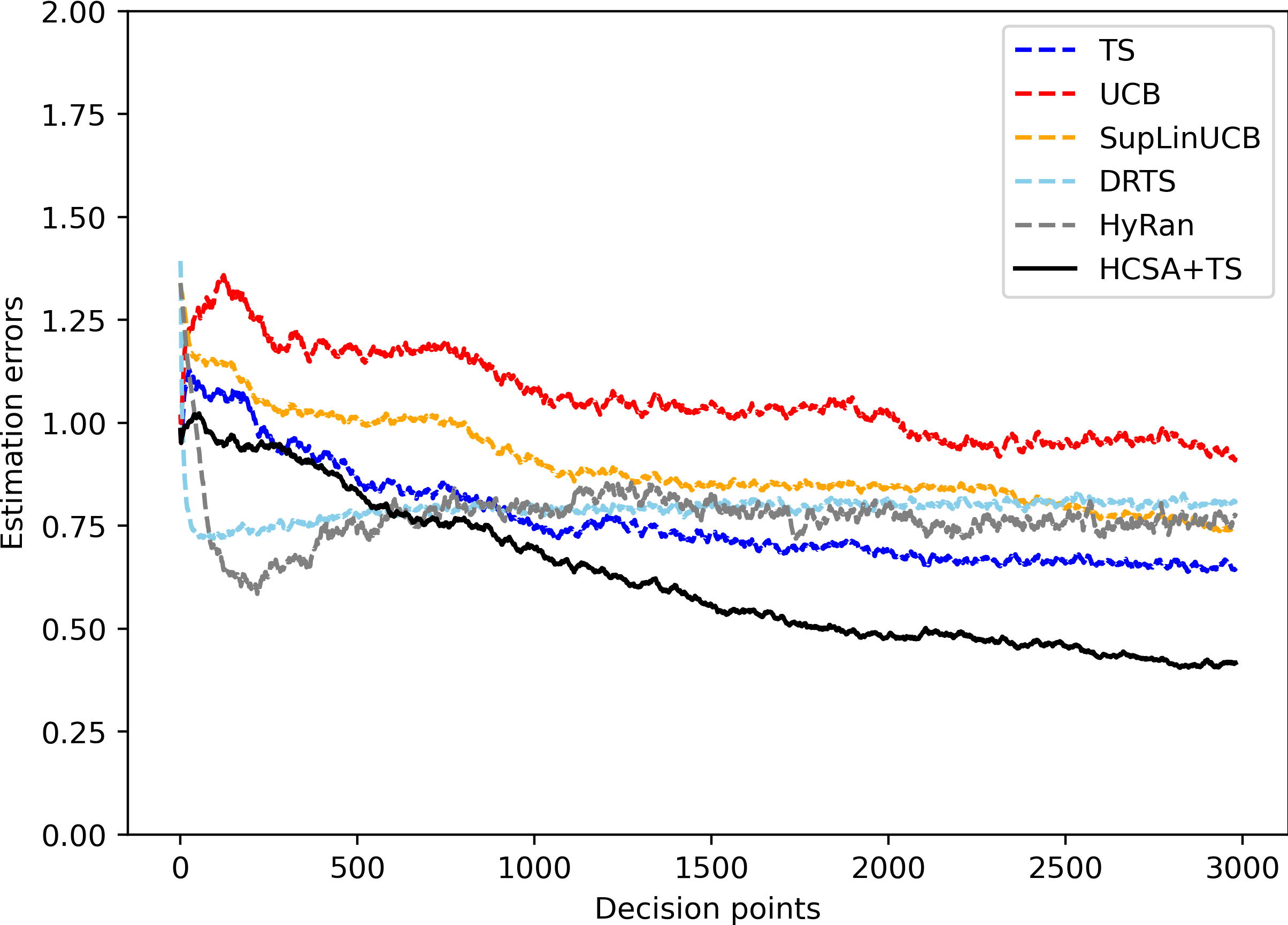}
}}
\hfill
\subfigure[Prediction error comparison ($d=30$, $K=20$)]{{\includegraphics[width=0.48\textwidth]{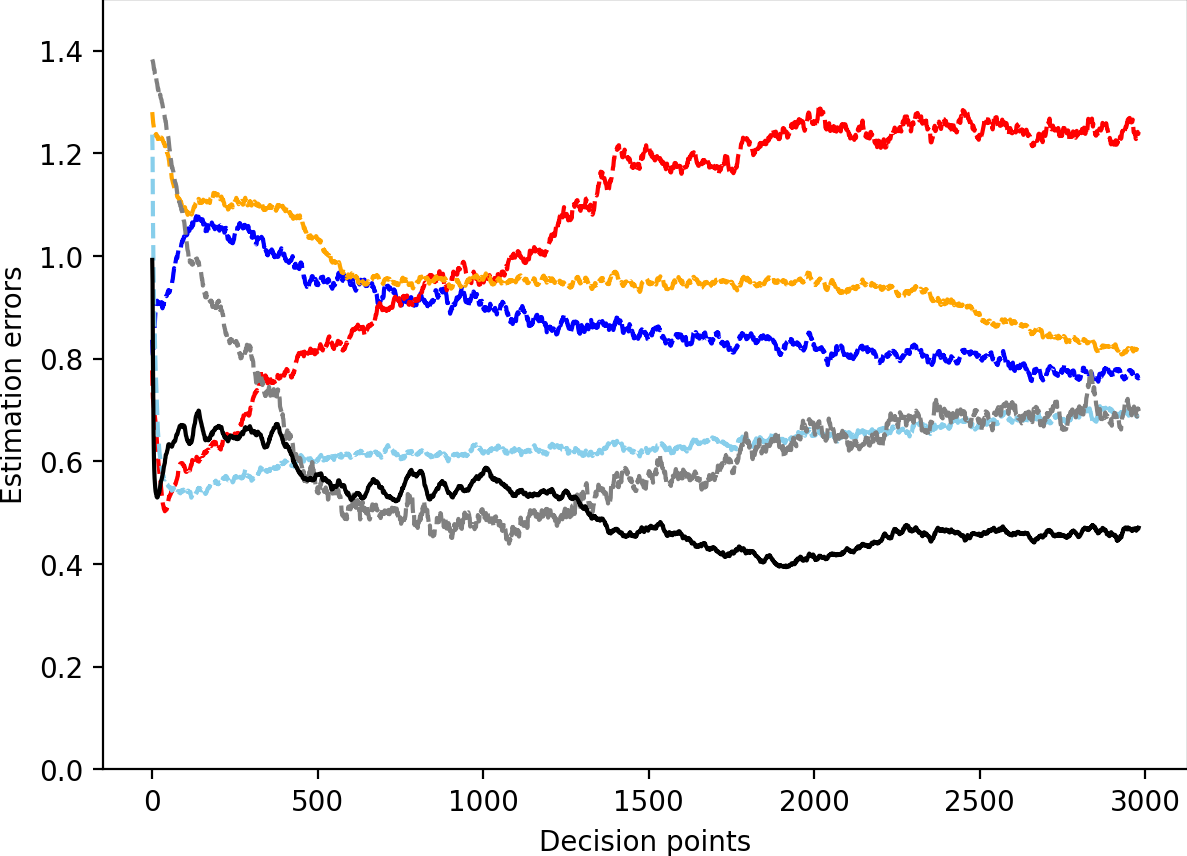}
}}
\\
\subfigure[Prediction error comparison ($d=10$, $K=30$)]{{\includegraphics[width=0.48\textwidth]{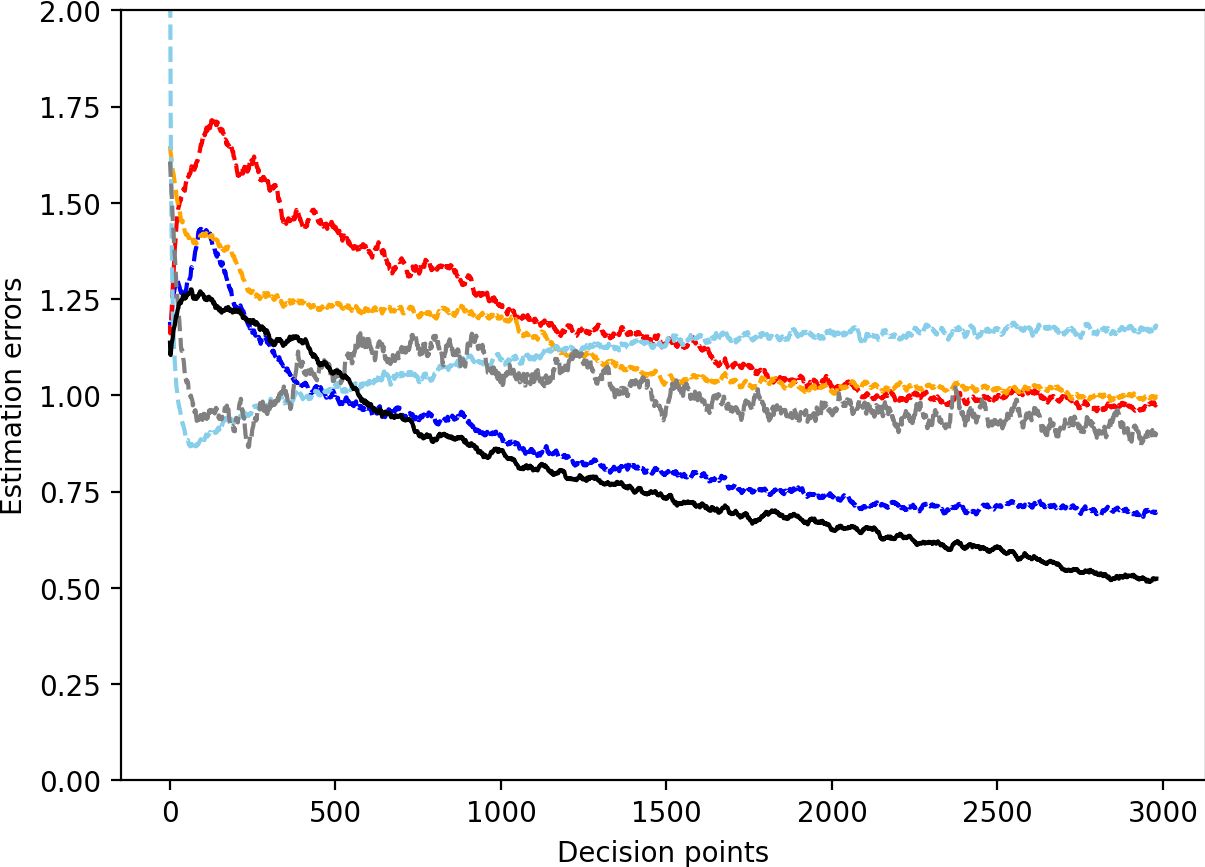}
}}
\hfill
\subfigure[Prediction error comparison ($d=30$, $K=30$)]{{\includegraphics[width=0.48\textwidth]{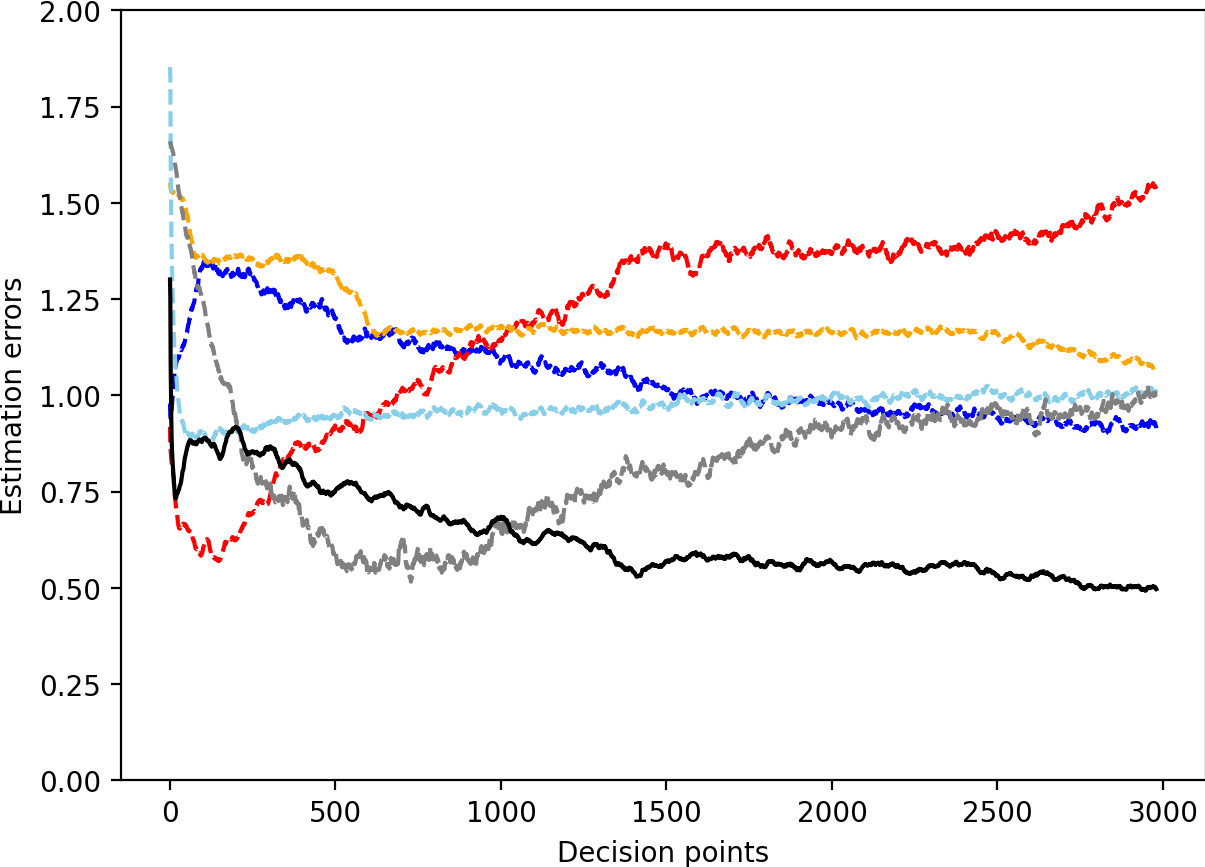}
}}
\caption{\label{fig:prediction} Comparison of the prediction error across all arms, calculated as \(\sum_{i=1}^{K}\{X_{i,t}^\top(\widehat{\theta}_t-\theta_{\star})\}^2\), for the proposed \texttt{HCSA+TS} and other benchmark methods.  
The lines represent the averages, and the shaded areas indicate the standard deviations based on twenty experiments.  
The results demonstrate that the proposed estimator, enhanced with orthogonal augmentation, learns the reward more accurately than other estimators.}
\end{figure}

For the experiment setting, the parameter \(\theta_{\star}\) is defined as:
\[
\theta_{\star} := \frac{1}{\sqrt{d}} \biggl( \underset{\lceil d/2 \rceil}{\underbrace{1, \cdots, 1}}, \underset{d - \lceil d/2 \rceil}{\underbrace{-1, \cdots, -1}} \biggr)^{\top},
\]
where \(d \in \{10, 30\}\) is the dimension of the parameter. 
The $i$-th entry of the context vectors for $K\in\{20,30\}$ arms are independently sampled from a Gaussian distribution with a mean of \(-1 + \frac{3(i-1)}{d-1}\) and variance of 1 for each \(i \in [d]\). 
These vectors are normalized and then scaled by a scalar drawn uniformly from \([0,1]\). 
To simulate missing context information, with probability \(1/2\), the last \(d - \lceil d/2 \rceil\) entries of the context vectors are set to zero at each round. This setting reflects practical scenarios where certain context features may be unavailable with some probability, making it challenging to estimate the corresponding entries in \(\theta_{\star}\).

The hyperparameter optimization was conducted as follows: For \texttt{LinTS}, the variance parameter was selected from \(\{0.01, 0.1, 1\}\). 
For \texttt{LinUCB} and \texttt{Sup\texttt{LinUCB}}, the confidence bound inflation parameter was chosen from \(\{0.01, 0.1, 1\}\). 
For \texttt{HyRan} and \texttt{HCSA+TS}, the regularization parameters \(p\) and \(\gamma\) were tuned from \(\{0.1, 0.5, 0.9\}\). 
The hyperparameters for \texttt{DRTS} were fixed as specified in \citet{kim2021doubly}. 
Values outside the specified ranges showed negligible differences in performance, suggesting robustness to hyperparameter selection for all methods.

Figure~\ref{fig:comparison} compares the cumulative regret of \texttt{HCSA+TS} with other benchmark algorithms across various configurations of \(d\) and the number of arms \(K\). Each line represents the average cumulative regret, and the shaded regions indicate the standard deviation across 20 independent trials.

The results show that \texttt{HCSA+TS} achieves the lowest cumulative regret in all tested settings. Compared to \texttt{LinTS}, \texttt{LinUCB}, and \texttt{Sup\texttt{LinUCB}}, which do not leverage information from all arms, \texttt{HCSA+TS} demonstrates robustness to missing context data. When compared to \texttt{DRTS} and \texttt{HyRan}, which uses the original context vectors, \texttt{HCSA+TS} consistently identifies low-regret arms more effectively, even under significant masking of context features. 

Initially, due to the orthogonal basis regularization, \texttt{HCSA+TS} incurs higher regret during the exploration phase, particularly when the effective rank of the context matrix is low. However, it rapidly adapts and identifies the optimal arm, ultimately outperforming the other algorithms, which continue to suffer regret due to their inability to handle missing context data effectively.

Figure~\ref{fig:prediction} illustrates the prediction error across all arms, measured as \(\sum_{i=1}^{K} \{X_{i,t}^\top (\widehat{\theta}_t - \theta_{\star})\}^2\). 
Similar to the regret results, the averages and standard deviations are computed over 20 trials.
The initial convergence of the estimators in \texttt{DRTS} and \texttt{HyRan} is faster due to their reliance on imputed contexts. However, their prediction errors increase over time because the imputed contexts, often containing many zero entries, provide incomplete information and hinder accurate estimation. In contrast, \texttt{HCSA+TS} demonstrates steady and consistent convergence throughout the experimental horizon. Its orthogonal augmentation strategy allows it to extract useful information even when parts of the context vectors are masked, leading to superior prediction accuracy compared to both traditional ridge-based estimators (\texttt{LinTS}, \texttt{LinUCB}, \texttt{Sup\texttt{LinUCB}}) and other augmented methods (\texttt{DRTS}, \texttt{HyRan}).

In summary, the experiments validate that \texttt{HCSA+TS} achieves significant improvements in both cumulative regret and prediction accuracy over existing benchmarks. Its ability to handle missing context information effectively while leveraging orthogonal regularization makes it particularly well-suited for practical scenarios with incomplete data.

\section{Conclusion}
\label{sec:conclusion}
This work introduces novel resampling and coupling techniques for integrating adaptive data augmentation into linear contextual bandits (LinCB), achieving a nearly minimax-optimal regret bound without imposing structural assumptions on the context vectors. By leveraging these techniques, the proposed approach enables the algorithm to effectively utilize contextual information from all arms as if full reward feedback were available in each round, while minimizing the number of required augmentations.
Beyond improving regret performance, this methodology marks a conceptual advancement in reward estimation. Through the use of hypothetical contexts and coupling-based resampling, we show that accurate estimation for rewards of all arms is feasible even under arbitrary context distributions, thereby broadening the applicability of LinCB methods in non-IID and adversarial settings.

The proposed framework offers a foundation for extending adaptive augmentation techniques to more general bandit models and reinforcement learning environments. In particular, these ideas have the potential to reduce variance in reward estimates for optimal arms or general policies, thus enabling more efficient exploration and decision-making in complex and high-dimensional settings. Future work may explore such extensions, including applications to nonlinear bandits, structured decision-making problems, and policy optimization in reinforcement learning, further amplifying the impact of the techniques developed in this study.

\acks{This work was supported by the Institute of Information \& Communications Technology Planning \& Evaluation (IITP) grant funded by the Korea government (MSIT) [RS-2021-II211341, Artificial Intelligence Graduate School Program (Chung-Ang University) and the Chung-Ang University Research Grants in 2025.
Wonyoung Kim also appreciates the proofreading and advising provided by Myunghee Cho Paik.
}

\appendix
\section{Missing Proofs}
\subsection{Proof of Lemma~\ref{lem:error_decomposition}}
\label{sec:error_decomposition_proof}
\begin{proof}
Recall that
\begin{align*}
    V_t &:= \sum_{s=1}^t \sum_{i=1}^{N_s}Z_{i,s}Z_{i,s}^\top, \\
    A_t &:= \sum_{s=1}^{t} \frac{1}{\phi_{\tilde{a}_s,s}} Z_{\tilde{a}_s,s}Z_{\tilde{a}_s,s}^\top = \sum_{s=1}^{t} \sum_{i=1}^{N_s}\frac{\II(\tilde{a}_s=i)}{\phi_{i,s}} Z_{i,s}Z_{i,s}^\top.
\end{align*}
By definition of the estimator, 
\[
\begin{split} 
& \norm{\tilde{\theta}^{H(\check{\theta}_t)}_{t}-\theta_{\star}}_{V_{t}}\\
& = \norm{\sum_{s=1}^{t} \sum_{i=1}^{r_{s}+1}\left(\tilde{W}_{i,s}^{H(\check{\theta}_{t})}-Z_{i,s}^{\top}\theta_{\star}\right)Z_{i,s}}_{V_{t}^{-1}}\\
& = \norm{\sum_{s=1}^{t} \sum_{i=1}^{r_{s}+1} \left\{ 1-\frac{\II(\tilde{a}_{s}=i)}{\phi_{i,s}} \right\} Z_{i,s}Z_{i,s}^\top (\check{\theta}_t-\theta_{\star}) + \frac{\II(\tilde{a}_{s}=i)}{\phi_{i,s}} \left(W_{i,s}-Z_{i,s}^{\top}\theta_{\star}\right)Z_{i,s}}_{V_{t}^{-1}}\\
& = \norm{(V_{t}-A_{t})(\check{\theta}_t-\theta_{\star}) + \sum_{s=1}^{t} \sum_{i=1}^{r_{s}+1} \frac{\II(\tilde{a}_{s}=i)}{\phi_{i,s}} \left(W_{i,s}-Z_{i,s}^{\top}\theta_{\star}\right)Z_{i,s}}_{V_{t}^{-1}},
\end{split}
\]
where the second equality holds by the definition~\eqref{eq:HDRY}.
Recall that 
\[
\]
Then, by the triangle inequality, we have
\[
\norm{\tilde{\theta}^{H(\check{\theta}_t)}_{t}-\theta_{\star}}_{V_{t}} 
\le \norm{(V_{t}-A_{t})(\check{\theta}_t-\theta_{\star})}_{V_{t}^{-1}} + \norm{\sum_{s=1}^{t} \sum_{i=1}^{r_{s}+1} \frac{\II(\tilde{a}_{s}=i)}{\phi_{i,s}} \left(W_{i,s}-Z_{i,s}^{\top}\theta_{\star}\right)Z_{i,s}.}_{V_{t}^{-1}}\!\!\!.
\]
Because
\[
\norm{(V_{t}-A_{t})(\check{\theta}_t-\theta_{\star})}_{V_{t}^{-1}} 
 = \norm{V_t^{-1/2}(V_{t}-A_{t})(\check{\theta}_t-\theta_{\star})}_{2},
\]
which completes the proof.
\end{proof}

\subsection{Proof of Theorem \ref{thm:self}}
\label{sec:tail_proof}

\begin{proof}
By the definition of the proposed estimator and the coupling inequality (Lemma~\ref{lem:coupling}), for any \( x > 0 \),
\[
\PP\left(\norm{\widehat{\theta}_t - \theta_{\star}}_{V_t} > x\right)
\le \PP\left(\left\{\norm{\widehat{\theta}^{HDR(\check{\theta}_t)}_t - \theta_{\star}}_{V_t} > x\right\} \cap \Scal_t\right) + \delta.
\]
From the error decomposition in Lemma~\ref{lem:error_decomposition},
for \( x > 0 \),
\begin{align*}
&\PP\left(\left\{\norm{\widehat{\theta}^{HDR(\check{\theta}_t)}_t - \theta_{\star}}_{V_t} > x\right\} \cap \Scal_t\right) \\
&\le \PP\left(\left\{\norm{V_t^{-1/2}(A_t - V_t)(\check{\theta}_t - \theta_{\star})}_{2} + \norm{S_t}_{V_t} > x\right\} \cap \Scal_t\right),
\end{align*}
where
\[
S_{t} := \sum_{s=1}^{t} \sum_{i=1}^{r_{s}+1} \frac{\II(\tilde{a}_{s}=i)}{\phi_{i,s}} \left(W_{i,s} - Z_{i,s}^{\top} \theta_{\star}\right)Z_{i,s}.
\]
By the definition of \( \check{\theta}_t \),
\[
\check{\theta}_t - \theta_{\star} := \left(\sum_{s=1}^{t} X_{a_s, s} X_{a_s, s} + \gamma  I_d\right)^{-1} \left\{ \sum_{s=1}^{t} (Y_{a_s, s} - X_{a_s, s}^{\top} \theta_{\star}) X_{a_s, s} - \gamma \theta_{\star} \right\},
\]
and similarly,
\[
\check{\theta}_t - \theta_{\star} = \left(A_t +  I_d\right)^{-1} \left(S_t -  \theta_{\star}\right).
\]
Substituting into the error bound, we have
\[
\begin{split}
\norm{\widehat{\theta}_t - \theta_{\star}}_{V_t} 
& \le \norm{(V_t - A_t) (A_t +  I_d)^{-1} (S_t -  \theta_{\star})}_{V_t^{-1}} + \norm{S_t}_{V_t^{-1}} \\
& = \norm{V_t^{-1/2}(V_t - A_t)(A_t +  I_d)^{-1} V_t^{1/2} (S_t -  \theta_{\star})}_{2} + \norm{S_t}_{V_t^{-1}},
\end{split}
\]
where the last equality holds by the definition of \( \|\cdot\|_{V_t^{-1}} \).
By the definition of the spectral norm \( \|\cdot\|_2 \),
\begin{align*}
&\norm{V_t^{-1/2}(V_t - A_t)(A_t +  I_d)^{-1} (S_t -  \theta_{\star})}_{2}\\
&\le \bigg\Vert\underset{P_t}{\underbrace{V_t^{-1/2}(V_t - A_t)(A_t +  I_d)^{-1} V_t^{1/2}}}\bigg\Vert_2 \norm{S_t -  \theta_{\star}}_{V_t^{-1}}.
\end{align*}
Thus, the bound becomes
\[
\norm{\widehat{\theta}_t - \theta_{\star}}_{V_t}  \le \big(\norm{P_t}_2+1\big)\norm{S_t}_{V_t^{-1}}+\norm{P_t}_2\theta_{\max}\lambda_{\min}(V_t)^{-1/2}.
\]
By the property of $V_t$ (Lemma~\ref{lem:Gram}) $\lambda_{\min}(V_t) \ge \max\{x_{\max}^2,1\}h_t \ge 1$.
It follows that
\[
\norm{\widehat{\theta}_t - \theta_{\star}}_{V_t}  \le \big(\norm{P_t}_2+1\big)\norm{S_t}_{V_t^{-1}}+\norm{P_t}_2\theta_{\max}.
\]
Thus,
\[
\begin{split}
&\PP\left(\bigcap_{t : |\Acal_t| \ge h_t} \left[\left\{ \norm{\widehat{\theta}_t - \theta_{\star}}_{V_t} > x\right\} \cap \Scal_t\right]\right)\\
& \le \PP\left(\bigcap_{t : |\Acal_t| \ge h_t} \left[\left\{ \big(\norm{P_t}_2+1\big)\norm{S_t}_{V_t^{-1}}+\norm{P_t}_2\theta_{\max} > x \right\} \cap \Scal_t \right]\right) + \delta.
\end{split}
\]
By Lemma~\ref{lem:P_t_bound}, we have $\PP(\|P_t\|_2>5) \le \delta/t^2$ for all $t$ such that $|\Acal_t| \ge h_t$.
Thus,
\begin{align*}
&\PP\left(\bigcap_{t: |\Acal_t| \ge h_t} \left[\left\{ \big(\norm{P_t}_2+1\big)\norm{S_t}_{V_t^{-1}}+\norm{P_t}_2\theta_{\max} > x \right\} \cap \Scal_t \right]\right) + \delta \\
&\le\PP\bigg(\bigcap_{t:|\Acal_t| \ge h_t} \left[\left\{ 6\norm{S_t}_{V_t^{-1}}+5\theta_{\max} > x \right\} \cap \Scal_t \right] \bigg) +\PP\Big(\cup_{t:|\Acal_t| \ge h_t}\big\{\|P_t\|>5\big\}\Big)+\delta\\
&\le\PP\bigg(\bigcap_{t:|\Acal_t| \ge h_t} \left[\left\{ \norm{S_t}_{V_t^{-1}} > \frac{x}{6} - \frac{5}{6}\theta_{\max} \right\} \cap \Scal_t \right] \bigg) +2\delta.
\end{align*}
By Lemma~\ref{lem:S_t_bound}, setting $x=5\theta_{\max}+\frac{6\sigma}{\gamma}\sqrt{d\log\frac{1+t}{\delta}}$ completes the proof of the theorem.
\end{proof}

\subsection{Probability Bounds for the Norms}
This section provides the probability inequalities for the norms of the core terms: \( P_t \) and \( S_t \). 
These bounds are key to proving the convergence of the estimator. 
We will use tail inequalities for random matrices and sums to obtain the desired results.

\begin{lemma}
\label{lem:P_t_bound}
Suppose \( t\ge T_1 \). 
Then, with probability at least \( 1 - \frac{\delta}{t^2} \), the spectral norm \( \|P_t\|_2 \le 5 \).
\end{lemma}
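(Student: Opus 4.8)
The plan is to reduce the spectral-norm bound on $P_t$ to a one-sided relative inequality $A_t \succeq \tfrac12 V_t$, and then to establish that inequality using the self-normalized matrix concentration inequality of Lemma~\ref{lem:matrix_neg}. For the reduction, put $B_t := V_t^{-1/2} A_t V_t^{-1/2}$, so that $V_t - A_t = V_t^{1/2}(I_d - B_t)V_t^{1/2}$ and $A_t + I_d = V_t^{1/2}(B_t + V_t^{-1})V_t^{1/2}$. Substituting these into the definition $P_t = V_t^{-1/2}(V_t - A_t)(A_t + I_d)^{-1}V_t^{1/2}$ gives the identity $P_t = (I_d - B_t)(B_t + V_t^{-1})^{-1}$, and writing $M_t := B_t + V_t^{-1}$ together with $I_d - B_t = (I_d - M_t) + V_t^{-1}$ turns this into $P_t = M_t^{-1} - I_d + V_t^{-1}M_t^{-1}$. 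Hence
\[
\norm{P_t}_2 \le 1 + \bigl(1 + \norm{V_t^{-1}}_2\bigr)\norm{M_t^{-1}}_2 \le 1 + \frac{2}{\lambda_{\min}(B_t)},
\]
where we used $\lambda_{\min}(M_t) \ge \lambda_{\min}(B_t)$ (since $V_t^{-1} \succ 0$) and $\norm{V_t^{-1}}_2 \le 1$, the latter being immediate from Lemma~\ref{lem:Gram}: for $t \ge T_1$ one has $V_t \succeq \max\{x_{\max}^2, 1\}\,h_t\,I_d \succeq I_d$. Thus it is enough to show $\lambda_{\min}(B_t) \ge 1/2$, i.e. $A_t \succeq \tfrac12 V_t$, with probability at least $1 - \delta/t^2$, which then yields $\norm{P_t}_2 \le 5$.

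For the main step, recall $A_t = \sum_{s=1}^t \phi_{\tilde a_s,s}^{-1} Z_{\tilde a_s,s}Z_{\tilde a_s,s}^\top$ and $V_t = \sum_{s=1}^t Q_s$ with $Q_s := \sum_{i=1}^{N_s} Z_{i,s}Z_{i,s}^\top$. Let $\mathcal{G}_s$ be the $\sigma$-field generated by $\mathcal{H}_s$ and the candidate action $a_s$, under which $\{Z_{i,s}\}_i$ and $Q_s$ are measurable; the sampling law~\eqref{eq:pseudo_prob} and $\sum_i \phi_{i,s} = 1$ give $\mathbb{E}\bigl[\phi_{\tilde a_s,s}^{-1}Z_{\tilde a_s,s}Z_{\tilde a_s,s}^\top \mid \mathcal{G}_s\bigr] = Q_s$. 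Hence $D_s := Q_s - \phi_{\tilde a_s,s}^{-1}Z_{\tilde a_s,s}Z_{\tilde a_s,s}^\top$ is a sequence of symmetric, conditionally mean-zero matrix increments with $\sum_{s=1}^t D_s = V_t - A_t$, and it remains to bound $\lambda_{\max}\bigl(V_t^{-1/2}(V_t - A_t)V_t^{-1/2}\bigr) \le 1/2$. Two structural facts feed into Lemma~\ref{lem:matrix_neg}: (i) the increments are self-bounded after normalization, since $Z_{i,s}Z_{i,s}^\top \preceq Q_s \preceq V_t$ forces $\norm{Z_{i,s}}_{V_t^{-1}}^2 = \norm{V_t^{-1/2}Z_{i,s}}_2^2 \le 1$, so $\bigl\Vert V_t^{-1/2}\phi_{\tilde a_s,s}^{-1}Z_{\tilde a_s,s}Z_{\tilde a_s,s}^\top V_t^{-1/2}\bigr\Vert_2 = \phi_{\tilde a_s,s}^{-1}\norm{Z_{\tilde a_s,s}}_{V_t^{-1}}^2$ is at most of order $d/(1-\gamma)$; and (ii) the conditional quadratic variation $\sum_{s=1}^t \mathbb{E}[D_s^2 \mid \mathcal{G}_s]$, after $V_t$-normalization, is dominated by an $O\bigl(d/(1-\gamma)\bigr)$ multiple of $I_d$. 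The orthogonal-basis augmentation ensures that for $t \ge T_1$ one has $|\mathcal{A}_t| \ge h_t$ and therefore $V_t \succeq \max\{x_{\max}^2,1\}\,h_t\,I_d$ (Lemma~\ref{lem:Gram}), so the effective sample size is at least $h_t$ in every direction. Feeding (i)–(ii) together with this lower bound on $V_t$ into Lemma~\ref{lem:matrix_neg} yields $\lambda_{\max}\bigl(V_t^{-1/2}(V_t-A_t)V_t^{-1/2}\bigr) \le 1/2$, i.e. $A_t \succeq \tfrac12 V_t$, outside an event of probability at most $\delta/t^2$; the calibration $h_t = \bigl\lceil \tfrac{2}{1/2 - e^{-1}}\tfrac{d}{1-\gamma}\log\tfrac{d(t+1)^2}{\delta}\bigr\rceil$ in~\eqref{eq:h_t} is exactly what makes this failure probability no larger than $\delta/t^2$ — the factor $\tfrac{d}{1-\gamma}$ matching the per-increment scale in (i), and $\tfrac{1}{1/2 - e^{-1}}$ the reciprocal of the lower-tail exponent at relative deviation $1/2$. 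Combined with the reduction above, this proves $\norm{P_t}_2 \le 5$ on this event.

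The main obstacle is the self-normalization: $V_t$ is not adapted to $\{\mathcal{G}_s\}$, since it aggregates hypothetical contexts from all rounds up to $t$ (including rounds after $s$), so the classical matrix Freedman or Bernstein inequalities cannot be applied to the normalized increments $V_t^{-1/2}D_sV_t^{-1/2}$ directly. Tolerating this data-dependent normalizer is exactly the purpose of Lemma~\ref{lem:matrix_neg}, and it is the orthogonal-basis augmentation that keeps $V_t$ uniformly well-conditioned across $t$ (via Lemma~\ref{lem:Gram}), so the concentration can be made to hold simultaneously for every $t \ge T_1$ at the $\delta/t^2$ level required here.
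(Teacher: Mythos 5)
Your algebraic reduction is correct and in fact cleaner than the paper's: the identity $P_t=(I_d-B_t)(B_t+V_t^{-1})^{-1}$, the bound $\norm{P_t}_2\le 1+(1+\norm{V_t^{-1}}_2)\norm{M_t^{-1}}_2\le 1+2/\lambda_{\min}(B_t)$ (using $\lambda_{\min}(V_t)\ge \max\{x_{\max}^2,1\}h_t\ge1$ for $t\ge T_1$), and the observation that $A_t\succeq\tfrac12 V_t$ suffices are all sound; the paper instead works with $V_t^{-1/2}(A_t+I_d)V_t^{-1/2}\succeq\tfrac14 I_d$, and with the calibration of $h_t$ in~\eqref{eq:h_t} your stronger target is indeed reachable on the same event.

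The gap is in the concentration step. Lemma~\ref{lem:matrix_neg} is a Tropp-type bound for an \emph{adapted} sequence of \emph{positive semidefinite} matrices with $\lambda_{\max}(M_s)\le1$, compared against its conditional compensator; it is not built to ``tolerate a data-dependent normalizer,'' and it does not apply to your normalized signed increments $V_t^{-1/2}D_sV_t^{-1/2}$: since $V_t$ depends on the entire trajectory up to round $t$ (the hypothetical contexts of rounds after $s$, and the realized actions at the rounds in $\mathcal{A}_t$), these increments are not adapted, their conditional expectations are not $V_t^{-1/2}Q_sV_t^{-1/2}$, and no ``conditional quadratic variation'' enters that lemma at all (you are implicitly invoking matrix Freedman/Bernstein, a different tool). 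The paper's proof supplies precisely the device you are missing: it introduces the action-free surrogate $F_t:=\sum_{s\in[t]\setminus\mathcal{A}_t}\sum_{k=1}^{K}X_{k,s}X_{k,s}^\top+2h_t\max\{x_{\max}^2,1\}I_d$, which does not depend on $a_s$ or $\tilde a_s$, applies Lemma~\ref{lem:matrix_neg} to the psd increments $\frac{\II(\tilde a_s=i)}{\phi_{i,s}}F_t^{-1/2}Z_{i,s}Z_{i,s}^\top F_t^{-1/2}$, and only afterwards transfers the spectral lower bound to $V_t$ through the sandwich $F_t/2\preceq V_t\preceq F_t$ furnished by Lemma~\ref{lem:Gram}. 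Relatedly, your per-increment bound in (i) is too crude: $\phi_{\tilde a_s,s}^{-1}\norm{Z_{\tilde a_s,s}}_{V_t^{-1}}^2=O\bigl(d/(1-\gamma)\bigr)$ would leave an additive error of order $\frac{d}{1-\gamma}\log\frac{dt^2}{\delta}$ after applying the lemma, destroying any constant relative deviation; what is needed (and what the paper proves via $\norm{Z_{i,s}}_2\le\max\{x_{\max},1\}$ and a Sherman--Morrison step) is $\norm{Z_{i,s}}_{F_t^{-1}}^2\le 1/(h_t+1)$, so that the per-increment scale is $d/\{(1-\gamma)h_t\}\le\frac{1/2-e^{-1}}{2}\{\log\frac{dt^2}{\delta}\}^{-1}$ and the failure probability $\delta/t^2$ follows. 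Until you work with a fixed, action-independent normalizer and this $1/h_t$ refinement, the key inequality $A_t\succeq\tfrac12 V_t$ is not established.
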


\begin{proof}
By the definition of \( P_t \), we have
\[
\|P_t\|_2 = \|V_t^{-1/2}(V_t - A_t)(A_t +  I_d)^{-1} V_t^{1/2}\|_2.
\]
Expanding this,
\[
\|P_t\|_2 = \|V_t^{1/2}(A_t +  I_d)^{-1} V_t^{1/2} - V_t^{-1/2} A_t (A_t +  I_d)^{-1} V_t^{1/2}\|_2.
\]
Next, we simplify further:
\[
\|P_t\|_2 = \|V_t^{1/2}(A_t +  I_d)^{-1} V_t^{1/2} - I_d +  V_t^{-1/2}(A_t + I_d)^{-1} V_t^{1/2}\|_2.
\]
This gives us the bound:
\[
\|P_t\|_2 \le \|V_t^{1/2}(A_t + I_d)^{-1} V_t^{1/2} - I_d\|_2 +  \|V_t^{-1/2}(A_t +  I_d)^{-1} V_t^{1/2}\|_2.
\]
Since \( V_t \) and \( A_t \) are real symmetric and positive semidefinite, we have the following:
\[
\|V_t^{-1/2}(A_t +  I_d)^{-1} V_t^{1/2}\|_2 \le \frac{1}{\lambda_{\min}(V_t)} \|V_t^{1/2}(A_t +  I_d)^{-1} V_t^{1/2}\|_2.
\]
Thus, we obtain:
\[
\|P_t\|_2 \le \|V_t^{1/2}(A_t +  I_d)^{-1} V_t^{1/2} - I_d\|_2 + \frac{1}{\lambda_{\min}(V_t)} \|V_t^{1/2}(A_t + I_d)^{-1} V_t^{1/2}\|_2.
\]
Let $F_t:=\sum_{s\in[t]\setminus\Acal_t} \sum_{i=1}^{N_s} Z_{i,s}Z_{i,s}^\top$ denote the Gram matrix for the rounds without orthogonal basis augmentation.
Then by the property of the Gram matrix $V_t$ (Lemma~\ref{lem:Gram}),
\[
V_t \prec \underset{F_t}{\underbrace{\sum_{s\in[t]\setminus\Acal_t} \sum_{k=1}^{K} X_{k,s}X_{k,s}^\top + 2h_t \max\{x_{\max}^2,1\}I_d}}.
\]
Note that $F_t=\sum_{s\in[t]\setminus\Acal_t} \sum_{k=1}^{K} X_{k,s}X_{k,s}^\top$ which does not depend on both $a_t$ and $\tilde{a}_t$.
Thus the matrix,
\[
V_t^{-1/2}(A_t +  I_d) V_t^{-1/2} \succeq \sum_{s=1}^{t} \sum_{i=1}^{N_s} \frac{\II(\tilde{a}_s=i)}{\phi_{i,s}} F_{t}^{-1/2} Z_{i,s}Z_{i,s}^\top F_{t}^{-1/2}
\]
Let $\Fcal_t$ denote the sigma-algebra generated by $\big\{(X_{k,s},a_s,\tilde{a}_s):k\in[K],s\in[t]\big\}$ and $\big\{(a_s,\tilde{a}_s):s\in[t]\big\}\cup\{a_{t+1}\}$. 
Because the distribution of the hypothetical action $\tilde{a}_t$ is $\tilde{\pi}_t$ given $a_t$ and $F_t$ depends only on the original contexts $\{X_{k,s}:k\in[K],s\in[t]\}$, 
\begin{align*}
&\EE\Big[\sum_{i=1}^{N_s} \frac{\II(\tilde{a}_s=i)}{\phi_{i,s}} F_{t}^{-1/2} Z_{i,s}Z_{i,s}^\top F_{t}^{-1/2} \Big| \Fcal_{t-1}\Big] \\
&= F_{t}^{-1/2}\CE{\sum_{i=1}^{N_s} \frac{\II(\tilde{a}_s=i)}{\phi_{i,s}}  }{\Fcal_{t-1}}Z_{i,s}Z_{i,s}^\top F_{t}^{-1/2}\\
&= \sum_{i=1}^{N_s}F_{t}^{-1/2}Z_{i,s}Z_{i,s}^\top F_{t}^{-1/2},
\end{align*}
where the first equality holds because hypothetical contexts $\{Z_{i,s}:i\in[N_s]\}$ is deterministic given the original contexts $\{X_{k,t}:k\in[K]\}$ and the action $a_t$.
Observe that for each $s\in[t]$,
\begin{align*}
&\lambda_{\max} \Big( \sum_{i=1}^{N_s} \frac{\II(\tilde{a}_s=i)}{\phi_{i,s}} F_{t}^{-1/2} Z_{i,s}Z_{i,s}^\top F_{t}^{-1/2}\Big) \\
& \le \frac{d}{1-\gamma} \max_{i\in[N_s]}\lambda_{\max}(F_{t}^{-1/2} Z_{i,s}Z_{i,s}^\top F_{t}^{-1/2})  \\
& = \frac{d}{1-\gamma} \max_{i\in[N_s]} \|Z_{i,s}\|_{F_{t}^{-1/2}}^2.
\end{align*}
By definition of the hypothetical contexts \eqref{eq:new_contexts}, for any $s\in[t]$ and $i\in[N_s]$,
\begin{align*}
F_{t}
& \succeq 2\max\{x_{\max}^2,1\} h_t I_d \\
& \succeq Z_{i,s}Z_{i,s}^\top + \max\{x_{\max}^2 ,1\} h_t I_d,
\end{align*}
where the last inequality uses $\|Z_{i,s}\|_2 \le \max\{x_{\max},1\}$.
Using Sherman-Morrison formula, for any $s\in[t]$ and $i\in[N_s]$,
\begin{align*}
\|Z_{i,s}\|_{F_{t}^{-1}}^2 &\le  Z_{i,s}^\top \big(Z_{i,s}Z_{i,s}^\top + \max\{x_{\max}^2 ,1\} h_t I_d\big)^{-1}Z_{i,s} \\
& \le \frac{\|Z_{i,s}\|_2^2}{\max\{x_{\max}^2,1\}h_t + \|Z_{i,s}\|_2^2} \\
& \le \frac{1}{h_t+1},
\end{align*}
where the last inequality uses $\|Z_{i,s}\|_2\le\max\{x_{\max},1\}$.
By definition of $h_t$, we obtain,
\[
\lambda_{\max} \Big( \sum_{i=1}^{N_s} \frac{\II(\tilde{a}_s=i)}{\phi_{i,s}} F_t^{-1/2} Z_{i,s}Z_{i,s}^\top F_t^{-1/2}\Big)  \le  \frac{d}{(1-\gamma)h_t} \le \frac{1/2-e^{-1}}{2}\big(\log \frac{d t^2}{\delta}\big)^{-1}
\]
Then by Lemma~\ref{lem:matrix_neg}, with probability at least $1-\delta/t^2$
\begin{align*}
&2(1/2-e^{-1})^{-1}\log\frac{dt^2}{\delta}\sum_{s=1}^{t} \sum_{i=1}^{N_s} \frac{\II(\tilde{a}_s=i)}{\phi_{i,s}} F_t^{-1/2} Z_{i,s}Z_{i,s}^\top F_t^{-1/2}\\
&\succeq 2(1/2-e^{-1})^{-1}\log\frac{dt^2}{\delta} (1-e^{-1}) F_t^{-1/2} V_t F_t^{-1/2}- \log \frac{d t^2}{\delta} I_d \\
& \succeq (1/2-e^{-1})^{-1}\log\frac{dt^2}{\delta} (1-e^{-1}) I_d- \log \frac{d t^2}{\delta} I_d,
\end{align*}
where the last inequality uses the fact that $V_t \succeq F_t/2$, which can be derived by Lemma~\ref{lem:Gram}.
Thus,
\[
\sum_{s=1}^{t} \sum_{i=1}^{N_s} \frac{\II(\tilde{a}_s=i)}{\phi_{i,s}} F_t^{-1/2} Z_{i,s}Z_{i,s}^\top F_t^{-1/2} \succeq \frac{1-e^{-1}}{2}-\frac{1/2-e^{-1}}{2} =\frac{1}{4},
\]
which implies, 
\begin{align*}
V_t^{-1/2}(A_t + I_d)V_t^{-1/2} \succeq \frac{1}{4}I_d.
\end{align*}
Thus,
\[
\|V_t^{-1/2}(A_t +  I_d) V_t^{-1/2} \|_2 \le 4,
\]
which implies
\[
\|P_t\|_2 \le 4 - 1 + \frac{4}{\lambda_{\min}(V_t)}.
\]
Since ${\min}(V_t) \ge h_t \max\{x_{\max}^2,1\} \ge 4$, we have
\[
\|P_t\|_2 \le 5
\]
\end{proof}

\begin{lemma}
\label{lem:S_t_bound}
For any \( \delta \in (0, 1) \),
\[
\mathbb{P}\left( \bigcap_{t : |\Acal_t| \ge h_t} \left[ \left\{ \|S_t\|_{V_t^{-1}} > \frac{\sigma}{\gamma} \sqrt{d \log  \frac{1 + t}{\delta} } \right\} \cap \Scal_t \right] \right) \le \delta.
\]
\end{lemma}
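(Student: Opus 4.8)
The plan is to collapse $S_t$ to an ordinary bandit‑noise sum on the event $\Scal_t$ via the coupling, and then to apply the self‑normalized martingale bound of \citet{abbasi2011improved} after dominating $V_t$ by a deterministically regularized Gram matrix. Recall that $S_t=\sum_{s=1}^t\sum_{i=1}^{N_s}\frac{\mathbb{I}(\tilde a_s=i)}{\phi_{i,s}}\bigl(W_{i,s}-Z_{i,s}^\top\theta_\star\bigr)Z_{i,s}=\sum_{s=1}^t\frac1{\phi_{\tilde a_s,s}}\bigl(W_{\tilde a_s,s}-Z_{\tilde a_s,s}^\top\theta_\star\bigr)Z_{\tilde a_s,s}$.

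First I would use the coupling. On $\Scal_t=\cap_{s=1}^t\Mcal_s$ the resampling succeeds at every round $s\le t$, so the accepted hypothetical action used to form the coupled pseudo‑reward is $\tilde a_s=N_s$. Since $Z_{N_s,s}=X_{a_s,s}$, $\phi_{N_s,s}=\gamma$, and $W_{i,s}-Z_{i,s}^\top\theta_\star=\eta_{a_s,s}$ for every $i\in[N_s]$ (both bandits share the same noise), this gives, on $\Scal_t$,
\[
S_t=\sum_{s=1}^t\frac1{\phi_{\tilde a_s,s}}\bigl(W_{\tilde a_s,s}-Z_{\tilde a_s,s}^\top\theta_\star\bigr)Z_{\tilde a_s,s}=\frac1\gamma\sum_{s=1}^t\eta_{a_s,s}X_{a_s,s},
\]
so that $\mathbb{P}\bigl(\{\|S_t\|_{V_t^{-1}}>x\}\cap\Scal_t\bigr)\le\mathbb{P}\bigl(\|\tfrac1\gamma\sum_{s=1}^t\eta_{a_s,s}X_{a_s,s}\|_{V_t^{-1}}>x\bigr)$ for each $t$.

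Next I would replace $V_t^{-1}$ by something tractable. Set $\lambda:=\max\{x_{\max}^2,1\}$ and $\bar V_t:=\lambda I_d+\sum_{s=1}^t X_{a_s,s}X_{a_s,s}^\top$. Expanding $V_t$ as in the proof of Lemma~\ref{lem:Gram} (and using $\sum_{i=1}^d u_{i,s}u_{i,s}^\top=I_d$), for every $t$ with $|\Acal_t|\ge h_t$,
\[
V_t=\sum_{s\in[t]\setminus\Acal_t}\sum_{k=1}^K X_{k,s}X_{k,s}^\top+\sum_{s\in\Acal_t}X_{a_s,s}X_{a_s,s}^\top+|\Acal_t|\lambda I_d\ \succeq\ \sum_{s=1}^t X_{a_s,s}X_{a_s,s}^\top+h_t\lambda I_d\ \succeq\ \bar V_t,
\]
using $h_t\ge1$; hence $V_t^{-1}\preceq\bar V_t^{-1}$. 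The regressors $X_{a_s,s}$ are predictable and the noises $\eta_{a_s,s}$ are conditionally $\sigma$-sub‑Gaussian with respect to the filtration generated by $\Hcal_s$ together with the round‑$s$ resampling draws (the resampling randomness is independent of the reward noise), so the self‑normalized bound of \citet{abbasi2011improved} (Theorem/Corollary~1) with the deterministic regularizer $\lambda I_d$ gives that, with probability at least $1-\delta$, simultaneously for all $t\ge1$,
\[
\Bigl\|\sum_{s=1}^t\eta_{a_s,s}X_{a_s,s}\Bigr\|_{\bar V_t^{-1}}\le\sigma\sqrt{d\log\frac{1+tx_{\max}^2/\lambda}{\delta}}\le\sigma\sqrt{d\log\frac{1+t}{\delta}},
\]
where the last step uses $x_{\max}^2\le\lambda$; the standard determinant inequality $\det(\bar V_t)\le(\lambda+tx_{\max}^2/d)^d$ is what produces the $d\log(1+tx_{\max}^2/\lambda)$ term, and I am suppressing the harmless universal constants.

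Combining the three steps, on this $1-\delta$ event we get $\|S_t\|_{V_t^{-1}}\le\frac1\gamma\|\sum_{s=1}^t\eta_{a_s,s}X_{a_s,s}\|_{\bar V_t^{-1}}\le\frac\sigma\gamma\sqrt{d\log\frac{1+t}{\delta}}$ for every $t$ with $|\Acal_t|\ge h_t$ on which $\Scal_t$ holds, so the event appearing in the lemma is contained in the complement of this $1-\delta$ event and hence has probability at most $\delta$. The only genuinely delicate point is Step~1 — verifying that on $\Scal_t$ the stochastic part of the coupled pseudo‑reward reduces \emph{exactly} to $\tfrac1\gamma\eta_{a_s,s}X_{a_s,s}$; this is precisely what the coupling and the shared‑noise construction of the hypothetical bandit are engineered for, and it is what makes the $1/\gamma$ factor (rather than a factor scaling with the number of arms or with an inverse eigenvalue) appear. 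Everything afterwards is a routine self‑normalized martingale argument, the only care being the use of a \emph{deterministic} regularizer $\lambda I_d$ in the domination $V_t\succeq\bar V_t$, which keeps the log‑determinant term at $O(d\log t)$ and makes the time‑uniform form of the inequality available.
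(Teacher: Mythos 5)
Your proposal is correct and follows essentially the same route as the paper's proof: on $\Scal_t$ you reduce $S_t$ to $\frac{1}{\gamma}\sum_{s=1}^{t}\eta_{a_s,s}X_{a_s,s}$ via the coupling, dominate $V_t$ by the regularized Gram matrix of selected contexts using the orthogonal-basis augmentation rounds, and conclude with the time-uniform self-normalized bound of \citet{abbasi2011improved}. Your added care about the filtration for the resampling draws and the deterministic regularizer only makes explicit what the paper leaves implicit.
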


\begin{proof}
Under the event \( \Scal_t \), we have:
\[
\|S_t\|_{V_t^{-1}} := \left\| \sum_{s=1}^{t} \sum_{i=1}^{r_s + 1} \frac{\mathbb{I}(\tilde{a}_s = i)}{\phi_{i,s}} \left(W_{i,s} - Z_{i,s}^{\top} \theta_\star\right) Z_{i,s} \right\|_{V_t^{-1}}.
\]
Simplifying this, we get:
\[
\|S_t\|_{V_t^{-1}} = \frac{1}{\gamma} \left\| \sum_{s=1}^{t} \eta_{a_s, s} X_{a_s, s} \right\|_{V_t^{-1}}.
\]
Next, we observe that the Gram matrix \( V_t \) satisfies:
\[
V_t \succeq \sum_{s=1}^{t} X_{a_s, s} X_{a_s, s}^{\top} + \sum_{s \in \Acal_t} \sum_{i=1}^{d} Z_{i,s} Z_{i,s}^{\top}.
\]
By the definition of the new contexts, we have:
\[
\sum_{s \in \Acal_t} \sum_{i=1}^{d} Z_{i,s} Z_{i,s}^{\top} \succeq h_t \max\{x_{\max}^2, 1\} I_d.
\]
Thus,
\[
V_t \succeq \sum_{s=1}^{t} X_{a_s, s} X_{a_s, s}^{\top} + \max\{x_{\max}^2,1\} I_d.
\]
Using Lemma 9 and Lemma 10 from \citet{abbasi2011improved}, with probability at least \( 1 - \delta \), we get:
\[
\frac{1}{\gamma} \left\| \sum_{s=1}^{t} \eta_{a_s, s} X_{a_s, s} \right\|_{V_t^{-1}} \le \frac{\sigma}{\gamma} \sqrt{d \log \left( \frac{1 + t }{\delta} \right)}.
\]
This completes the proof.
\end{proof}

\subsection{Proof of Lemma~\ref{lem:super_unsaturated_arms}}
\label{sec:low_regret_proof}
\begin{proof}
For \(k \in [K]\) and \(t \in [T]\), let \(\tilde{Y}_{k,t} := X_{k,t}^\top \tilde{\theta}_{k,t}\) denote the estimated reward for arm \(k\). Define the maximizer \(\tilde{M}_{t} := \arg\max_{k \in [K]} \tilde{Y}_{k,t}\). Since \texttt{HCSA+TS} selects the arm that maximizes \(\tilde{Y}_{k,t}\), the distribution of \(a_t\) matches that of \(\tilde{M}_t\), i.e.,
\[
\CP{a_t \in \Pcal_t}{\Hcal_t} = \CP{\tilde{M}_t \in \Pcal_t}{\Hcal_t}.
\]
Suppose the estimated reward for the optimal arm, \(\tilde{Y}_{a_t^{\star},t}\), exceeds \(\tilde{Y}_{j,t}\) for all \(j \in [K] \setminus \Pcal_t\). Since the optimal arm \(a_t^{\star}\) is always in \(\Pcal_t\) by definition, it follows that \(a_t^{\star} \in \Pcal_t\). Thus:
\[
\CP{\tilde{M}_t \in \Pcal_t}{\Hcal_t} \ge \CP{\bigcap_{j \in [K] \setminus \Pcal_t} \{\tilde{Y}_{j,t} < \tilde{Y}_{a_t^{\star},t}\}}{\Hcal_t}.
\]
Let \(Z_{j,t} := \tilde{Y}_{a_t^{\star},t} - \tilde{Y}_{j,t} - (X_{a_t^{\star},t} - X_{j,t})^{\top} \Estimator{t-1}\). Then:
\[
\CP{\tilde{M}_t \in \Pcal_t}{\Hcal_t} \ge \CP{\bigcap_{j \in [K] \setminus \Pcal_t} \{Z_{j,t} > (X_{j,t} - X_{a_t^{\star},t})^{\top} \Estimator{t-1}\}}{\Hcal_t}.
\]
Given \(\Hcal_t\), \(\{Z_{j,t} : j \in [K] \setminus \Pcal_t\}\) are Gaussian random variables with mean 0 and variance \(v_t^2 (\|X_{a_t^{\star},t}\|_{V_{t-1}^{-1}}^2 + \|X_{j,t}\|_{V_{t-1}^{-1}}^2)\). For each \(j \notin \Pcal_t\), we have:
\[
(X_{j,t} - X_{a_t^{\star},t})^{\top} \Estimator{t-1} \le 2x_t \norm{\Estimator{t-1} - \theta_{\star}}_{V_{t-1}} - \Diff{j}{t} \le -\sqrt{\norm{X_{a_t^{\star},t}}_{V_{t-1}^{-1}}^2 + \norm{X_{j,t}}_{V_{t-1}^{-1}}^2}.
\]
Thus:
\[
\CP{\tilde{M}_t \in \Pcal_t}{\Hcal_t} \ge \CP{\frac{Z_{j,t}}{v_t \sqrt{\norm{X_{a_t^{\star},t}}_{V_{t-1}^{-1}}^2 + \norm{X_{j,t}}_{V_{t-1}^{-1}}^2}} > -\frac{1}{v_t}, \forall j \notin \Pcal_t}{\Hcal_t}.
\]
Since \(Z_{j,t}\) is Gaussian with variance normalized to 1,
\[
\begin{split}
\CP{\tilde{M}_{t} \in \Pcal_{t}}{\Hcal_t} \ge & \CP{\frac{Z_{j,t}}{v_{t}\sqrt{\norm{X_{a_t^{\star},t}}_{V_{t-1}^{-1}}^{2} + \norm{X_{j,t}}_{V_{t-1}^{-1}}^{2}}} > -\frac{1}{v_t}, \forall j \notin \Pcal_{t}}{\Hcal_t} \\
:= & \CP{Y_j > -v_t^{-1}, \forall j \neq \Pcal_{t}}{\Hcal_t},
\end{split}
\]
where 
\[
Y_j := \frac{Z_{j,t}}{v_t \sqrt{\norm{X_{a_t^{\star},t}}_{V_{t-1}^{-1}}^{2} + \norm{X_{j,t}}_{V_{t-1}^{-1}}^{2}}}
\]
is a standard Gaussian random variable given \(\Hcal_t\). Therefore, we have:
\[
\CP{Y_j \le -v_t^{-1}}{\Hcal_t} \le \exp\left(-\frac{1}{2v_t^2}\right).
\]
Now, setting \( v_t = \left( 2 \log \frac{K (t+1)^2}{\delta} \right)^{-1/2} \), we get:
\[
\CP{Y_j \le -v_t^{-1}}{\Hcal_t} \le \exp\left(-\log \frac{(t+1)^2}{\delta}\right) = \frac{\delta}{K (t+1)^2}.
\]
Thus, we obtain:
\[
\begin{aligned}
\CP{\tilde{M}_t \in \Pcal_{t}}{\Hcal_t} \ge & \ 1 - \CP{Y_j \le -v_t^{-1}, \exists j \notin \Pcal_{t}}{\Hcal_t} \\
\ge & \ 1 - \sum_{j \notin \Pcal_{t}} \CP{Y_j \le -v_t^{-1}}{\Hcal_t} \\
\ge & \ 1 - \frac{\delta}{(t+1)^2}.
\end{aligned}
\]
This completes the proof.
\end{proof}

\subsection{Proof of Lemma~\ref{lem:elliptical}}
\label{sec:proof_elliptical}
\begin{proof}
By definition of $\Acal_t$, for $t$ such that $t\le h_t$, we have \(t \in \Acal_t\) and $1\in\Acal_t$ for all $t\in[T]$.
By definition of the new contexts~\eqref{eq:new_contexts}, for any $t\in[T]$,
\begin{align*}
V_t =\sum_{s=1}^{t} \sum_{i=1}^{N_s} Z_{i,t} Z_{i,t}^\top &\succeq \sum_{s\in[t]\setminus\Acal_t}\sum_{i=1}^{N_s} Z_{i,s}Z_{i,s}^\top + \sum_{i=1}^{d} u_{i,1} u_{i,1}^\top \\
&\succeq  \sum_{s\in[t]\setminus\Acal_t}\sum_{i=1}^{N_s} Z_{i,s}Z_{i,s}^\top+\max\{x_{\max}^2, 1\} I_d \\
&= \sum_{s\in[t]\setminus\Acal_t}\sum_{k=1}^{K} X_{k,s}X_{k,s}^\top+\max\{x_{\max}^2, 1\} I_d
\end{align*}
It follows that \[
V_t \succeq \underset{W_t}{\underbrace{\sum_{s\in[t]\setminus\Acal_t} X_{\nu_s,s} X_{\nu_s,s}^\top+\max\{x_{\max}^2, 1\} I_d}}
\] 
and \(x_t \leq \|X_{\nu_t,t}\|_{W_{t-1}^{-1}}\). 
Applying Lemma 11 from \citet{abbasi2011improved}:
\[
\sum_{t\in[T]\setminus\Acal_T} x_t^2 \leq \sum_{t\in[T]\setminus\Acal_T} \|X_{\nu_t,t}\|_{W_{t-1}^{-1}}^2 \leq 2\log\frac{\det(W_T)}{\det(\max\{x_{\max}^2, 1\}I_d)}.
\]
By AM-GM inequality, \(\det(W_T) \leq \left(\frac{\Trace{W_T}}{d}\right)^d \leq \left(\frac{T \max\{x_{\max}^2, 1\}}{d}\right)^d\) holds. 
Thus,
\[
\sum_{t\in[T]\setminus\Acal_T} x_t^2 \leq 2d \log\frac{\Trace{W_T}}{\max\{x_{\max}^2, 1\}d} \leq 2d \log\frac{T\max\{x_{\max}^2,1\}}{\max\{x_{\max}^2, 1\}d} \leq 2d \log\frac{T}{d},
\]
which completes the proof.
\end{proof}

\subsection{A Matrix Concentration Inequality}

\begin{lemma}[Matrix concentration inequality]
\label{lem:matrix_neg}
Let \(\{M_{s}: s \in [t]\}\) be a \(\RR^{d \times d}\)-valued stochastic process adapted to the filtration \(\{\Fcal_{s}: s \in [t]\}\). Suppose \(M_s\) are nonnegative definite symmetric matrices such that \(\Maxeigen{M_s} \le 1\). Then, with probability at least \(1 - \delta\),
\[
\sum_{s=1}^{t} M_s \succeq (1 - e^{-1}) \sum_{s=1}^{t} \CE{M_s}{\Fcal_{s-1}} - \log\frac{d}{\delta} I_d.
\]
In addition, with probability at least \(1 - \delta\),
\[
\sum_{s=1}^{t} M_s \preceq \left(e - 1\right) \sum_{s=1}^{t} \CE{M_s}{\Fcal_{s-1}} + \log\frac{d}{\delta} I_d.
\]
\end{lemma}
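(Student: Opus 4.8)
## Proof Proposal for Lemma~\ref{lem:matrix_neg}

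\textbf{Overall strategy.} The plan is to adapt the matrix Chernoff / Freedman-style argument to a bounded (but possibly non-independent) sequence of matrices by working with the matrix moment generating function and a suitable \emph{deterministic} scalar inequality for each summand. Fix a parameter $\lambda > 0$ to be chosen. For the lower tail, I would use the elementary inequality $e^{-\lambda x} \le 1 - (1-e^{-\lambda})x$ valid for $x \in [0,1]$; lifting this to the matrix functional calculus (legitimate because each $M_s$ is symmetric PSD with $\Maxeigen{M_s}\le 1$, so its spectrum lies in $[0,1]$) gives the operator inequality $\exp(-\lambda M_s) \preceq I_d - (1-e^{-\lambda}) M_s \preceq \exp\!\big(-(1-e^{-\lambda})M_s\big)$, where the last step uses $I_d - cM \preceq \exp(-cM)$. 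For the upper tail, the analogous scalar bound is $e^{\lambda x} \le 1 + (e^\lambda - 1)x$ on $[0,1]$, yielding $\exp(\lambda M_s) \preceq I_d + (e^\lambda-1)M_s \preceq \exp\!\big((e^\lambda-1)M_s\big)$.

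\textbf{Supermartingale construction and Lieb's theorem.} Next I would build the process that tracks the cumulative exponential. Following the Tropp-style approach for adapted (not independent) sequences, set $\lambda = 1$ for concreteness (this is what produces the $e^{-1}$ and $e-1$ constants in the statement). Define, for the lower-tail case, the matrix-valued process whose trace-exponential forms a supermartingale: using Lieb's concavity theorem together with Jensen's inequality applied to the conditional expectation $\CE{\cdot}{\Fcal_{s-1}}$, one shows that
\[
\EE\!\left[\Trace \exp\!\Big(\!-\sum_{s=1}^{t} M_s - (1-e^{-1})\sum_{s=1}^{t}\CE{M_s}{\Fcal_{s-1}} \text{ corrected}\Big)\right]
\]
is controlled — more precisely, that $W_t := \Trace \exp\big(\sum_{s\le t}[-M_s + \log\EE_{s-1}e^{-M_s}]\big)$-type quantity is a supermartingale, and by the operator inequality above $\log \CE{e^{-M_s}}{\Fcal_{s-1}} \preceq -(1-e^{-1})\CE{M_s}{\Fcal_{s-1}}$ in the appropriate sense after taking conditional expectations. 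Carefully, the cleanest route is: $\CE{\exp(-M_s)}{\Fcal_{s-1}} \preceq I_d - (1-e^{-1})\CE{M_s}{\Fcal_{s-1}} \preceq \exp\!\big(-(1-e^{-1})\CE{M_s}{\Fcal_{s-1}}\big)$, and then iterate via Lieb/Tropp to obtain $\EE \Trace\exp\big(-\sum_s M_s + (1-e^{-1})\sum_s \CE{M_s}{\Fcal_{s-1}}\big) \le d$.

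\textbf{From the MGF bound to the high-probability statement.} With $\EE\,\Trace\exp\!\big(-\sum_s M_s + (1-e^{-1})\sum_s\CE{M_s}{\Fcal_{s-1}}\big) \le d$ in hand, I would finish by a Markov/Chernoff step. The event $\sum_s M_s \not\succeq (1-e^{-1})\sum_s\CE{M_s}{\Fcal_{s-1}} - \log(d/\delta) I_d$ means $\Mineigen{\sum_s M_s - (1-e^{-1})\sum_s\CE{M_s}{\Fcal_{s-1}}} < -\log(d/\delta)$, which forces $\Maxeigen{-\sum_s M_s + (1-e^{-1})\sum_s\CE{M_s}{\Fcal_{s-1}}} > \log(d/\delta)$, hence $\Trace\exp\big(-\sum_s M_s + (1-e^{-1})\sum_s\CE{M_s}{\Fcal_{s-1}}\big) > d/\delta$. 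Markov's inequality then bounds this probability by $\delta$. The upper-tail statement is entirely symmetric, replacing $\exp(-M_s)$ by $\exp(M_s)$ and using the scalar bound $e^x \le 1 + (e-1)x$ on $[0,1]$, which yields the constant $e-1$ and the same $\log(d/\delta)$ additive slack.

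\textbf{Anticipated main obstacle.} The delicate point is the adapted-sequence version of the argument: for \emph{independent} $M_s$ the trace-MGF factorizes, but here the $M_s$ are only $\Fcal_s$-measurable, so I must invoke Lieb's concavity theorem (concavity of $A \mapsto \Trace\exp(H + \log A)$) together with the tower property to push conditional expectations inside the trace-exponential one step at a time — this is exactly the mechanism behind Tropp's ``Freedman inequality for matrices.'' Getting the bookkeeping right so that the leftover term is exactly $-(1-e^{-1})\sum_s\CE{M_s}{\Fcal_{s-1}}$ (and not, say, with the conditional expectation in the wrong place) is where care is needed; the scalar-to-matrix lifting and the final Chernoff step are routine by comparison.
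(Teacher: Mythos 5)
Your proposal is correct and follows essentially the same route as the paper's proof: lifting the scalar bounds $e^{-x}\le 1+(e^{-1}-1)x$ and $e^{x}\le 1+(e-1)x$ on $[0,1]$ to the matrix setting via the spectral mapping theorem, iterating conditional expectations inside the trace-exponential using Lieb's theorem (the Tropp-style adapted-sequence argument), bounding the expected trace-exponential by $d$, and finishing with Markov's inequality. No substantive difference from the paper's argument.
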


\begin{proof}
This proof is an adapted version of the argument from \citet{tropp2012user}. 

For the lower bound, it is sufficient to prove that
\[
\Maxeigen{-\sum_{s=1}^{t} M_s + (1 - e^{-1}) \sum_{s=1}^{t} \CE{M_s}{\Fcal_{s-1}}} \le \log \frac{d}{\delta},
\]
with probability at least \(1 - \delta\). By the spectral mapping theorem,
\begin{align*}
&\exp\left( \Maxeigen{-\sum_{s=1}^{t} M_s + (1 - e^{-1}) \sum_{s=1}^{t} \CE{M_s}{\Fcal_{s-1}}} \right) \\
&\le \Maxeigen{\exp\left( -\sum_{s=1}^{t} M_s + (1 - e^{-1}) \sum_{s=1}^{t} \CE{M_s}{\Fcal_{s-1}} \right)} \\
&\le \Trace{\exp\left( -\sum_{s=1}^{t} M_s + (1 - e^{-1}) \sum_{s=1}^{t} \CE{M_s}{\Fcal_{s-1}} \right)}.
\end{align*}
Taking the expectation of both sides gives:
\begin{align*}
&\EE \exp\left( \Maxeigen{-\sum_{s=1}^{t} M_s + (1 - e^{-1}) \sum_{s=1}^{t} \CE{M_s}{\Fcal_{s-1}}} \right) \\
&\le \EE \Trace{\exp\left( -\sum_{s=1}^{t} M_s + (1 - e^{-1}) \sum_{s=1}^{t} \CE{M_s}{\Fcal_{s-1}} \right)} \\
&= \EE \Trace{\CE{\exp\left( -\sum_{s=1}^{t-1} M_s + (1 - e^{-1}) \sum_{s=1}^{t} \CE{M_s}{\Fcal_{s-1}} + \log \exp\left( -M_t \right) \right)}{\Fcal_{t-1}}} \\
&\le \EE \Trace{\exp\left( -\sum_{s=1}^{t-1} M_s + (1 - e^{-1}) \sum_{s=1}^{t} \CE{M_s}{\Fcal_{s-1}} + \log \CE{\exp\left( -M_t \right)}{\Fcal_{t-1}} \right)}.
\end{align*}
The last inequality follows from Lieb's theorem \citep{tropp2015introduction}. 
Define a function \( f_{\lambda}: [0, 1] \to \RR \) as \( f_{\lambda}(x) = e^{\lambda x} - x (e^{\lambda} - 1) - 1 \). Then \( f_{\lambda}(x) \) is convex with \( f_{\lambda}(0) = f_{\lambda}(1) = 0 \) for all \(\lambda \in \RR\). Thus, \( e^{-x} \le 1 + x(e^{-1} - 1) \) for \(x \in [0, 1]\).
Because the eigenvalues of \( M_s \) lie in \([0, 1]\), by the spectral mapping theorem, we have
\[
\CE{\exp\left(-M_t\right)}{\Fcal_{t-1}} \preceq I + (e^{-1} - 1) \CE{M_t}{\Fcal_{t-1}} \preceq \exp\left( -(1 - e^{-1}) \CE{M_t}{\Fcal_{t-1}} \right).
\]
Thus, we obtain:
\begin{align*}
& \EE \exp\left( \Maxeigen{-\sum_{s=1}^{t} M_s + (1 - e^{-1}) \sum_{s=1}^{t} \CE{M_s}{\Fcal_{s-1}}} \right) \\
&\le \EE \Trace{\exp\left( -\sum_{s=1}^{t-1} M_s + (1 - e^{-1}) \sum_{s=1}^{t} \CE{M_s}{\Fcal_{s-1}} + \log \exp\left( -M_t \right) \right)} \\
&= \EE \Trace{\exp\Bigl( -\sum_{s=1}^{t-1} M_s + (1 - e^{-1}) \sum_{s=1}^{t} \CE{M_s}{\Fcal_{s-1}} + \log \exp( -(1 - e^{-1}) \CE{M_t}{\Fcal_{s-1}}) \Bigr)} \\
&= \EE \Trace{\exp\left( -\sum_{\tau=1}^{t-1} M_s + (1 - e^{-1}) \sum_{s=1}^{t} \CE{M_s}{\Fcal_{s-1}} \right)} \\
&\le \vdots \\
&\le \EE \Trace{\exp\left(O\right)} = d.
\end{align*}

Now, by Markov's inequality:
\begin{align*}
&\PP\left( \Maxeigen{-\sum_{s=1}^{t} M_s + (1 - e^{-1}) \sum_{s=1}^{t} \CE{M_s}{\Fcal_{s-1}}} > \log \frac{d}{\delta} \right) \\
&\le \EE \exp\left( \Maxeigen{-\sum_{s=1}^{t} M_s + (1 - e^{-1}) \sum_{s=1}^{t} \CE{M_s}{\Fcal_{s-1}}} \right) \frac{\delta}{d} \\
&\le \delta.
\end{align*}
For the upper bound, we can prove:
\[
\Maxeigen{\sum_{s=1}^{t} M_s - \left( e - 1 \right) \sum_{s=1}^{t} \CE{M_s}{\Fcal_{s-1}}} \le \log \frac{d}{\delta},
\]
in a similar manner, using the fact that \( e^{x} \le 1 + (e - 1) x \) for \( x \in [0, 1] \).
\end{proof}

\vskip 0.2in
\bibliography{ref}

\end{document}